\theoremstyle{plain}
\newcommand{\E}{\mathbb{E}}
\newcommand{\R}{\mathbb{R}}
\newcommand{\designnet}{\pi_\phi}
\newcommand{\policy}{\pi}
\newcommand{\latent}{\theta}
\newcommand{\critic}{U}
\newcommand{\criticnet}{U_\psi}
\newcommand{\prior}{p(\latent)}
\newcommand{\priori}[1]{p(\latent_{#1})}
\newcommandx{\histmarg}[1][1=\designnet]{p(h_T| #1)}
\newcommandx{\histlik}[1][1=\designnet]{p(h_T|\latent, #1)}
\newcommandx{\histliki}[2][2=\designnet]{p(h_T|\latent_{#1}, #2)}
\newcommand{\obslik}{p(y|\latent, \xi)}
\newcommand{\obsliki}[1]{p(y_{#1}|\latent, \xi_{#1})}
\newcommand{\obsmarg}{p(y|\xi)}
\newcommand{\obspost}{p(\latent|\xi, y)}
\newcommand{\sEIG}[1]{\mathcal{I}_T(#1)}
\newcommand{\cmark}{\ding{51}}
\newcommand{\xmark}{\ding{55}}
\newcommand{\iid}{\overset{\scriptstyle{\text{i.i.d.}}}{\sim}}
\title{Implicit Deep Adaptive Design: Policy--Based Experimental Design without Likelihoods}
\author{Desi R.~Ivanova\textsuperscript{\textdagger}\hspace{0.10em}
Adam Foster\textsuperscript{\textdagger}\hspace{0.10em}
Steven Kleinegesse\textsuperscript{\ddag}\hspace{0.10em}
Michael U. Gutmann\textsuperscript{\ddag}\hspace{0.10em}
Tom Rainforth\textsuperscript{\textdagger} \smallskip \\
\textsuperscript{\textdagger}Department of Statistics, University of Oxford \\
\textsuperscript{\ddag}  School of Informatics, University of Edinburgh \smallskip \\
\texttt{desi.ivanova@stats.ox.ac.uk}
}
\begin{document}

\maketitle

\begin{abstract}
We introduce implicit Deep Adaptive Design (iDAD), 
a new method for performing adaptive experiments in \emph{real-time} with \emph{implicit} models. iDAD amortizes the cost of Bayesian optimal experimental design (BOED) by learning a design \emph{policy network} upfront, which can then be deployed quickly at the time of the experiment. 
The iDAD network can be trained on any model which simulates differentiable samples, unlike previous design policy work that requires a closed form likelihood and conditionally independent experiments.
At deployment, iDAD allows design decisions to be made in milliseconds, in contrast to traditional BOED approaches that require heavy computation during the experiment itself.
We illustrate the applicability of iDAD on a number of experiments, and show that it provides a fast and effective mechanism for performing adaptive design with implicit models.
\end{abstract}


\section{Introduction}

Designing experiments to maximize the information gathered about an underlying  process is a key challenge in science and engineering.
Most such experiments are naturally \emph{adaptive}---we can design later iterations on the basis of data already collected, refining our understanding of the  process with each step~\citep{myung2013,ryan2016review,rainforth2017thesis}.
For example, suppose that a chemical contaminant has accidentally been released and
is rapidly spreading; we need to quickly discover its unknown source. 
To this end, we measure the contaminant concentration level at locations  $\xi_1, \dots, \xi_T$ (our experimental designs), obtaining observations $y_1,\dots,y_T$.
Provided we can perform the necessary computations sufficiently quickly, we can design each $\xi_t$ using data from steps $1,\dots,t-1$ to narrow in on the source.

Bayesian optimal experimental design (BOED) \citep{lindley1956,chaloner1995} is a principled model-based framework for choosing designs optimally; it has been successfully adopted in a diverse range of scientific fields \citep{vanlier2012,shababo2013bayesian,vincent2017darc}. 
In BOED, the unknown quantity of interest (e.g.~contaminant location) is encapsulated by a parameter $\theta$, and our initial information about it by a prior $\prior$.
A simulator, or likelihood, model $y|\theta,\xi$ describes the relationship between $\theta$, our controllable design $\xi$, and the experimental outcome $y$.
To select designs \emph{optimally}, the guiding principle is \emph{information maximization}---we select the design that maximizes the expected (Shannon) information gained about $\theta$ from the data $y$, or, equivalently, that maximizes the mutual information between $\theta$ and $y$.

This naturally extends to adaptive settings by considering the \emph{conditional} expected information gain given previously collected data.
The traditional approach, depicted in Figure~\ref{fig:boed_approaches_traditional}, is to fit a posterior  $p(\theta|\xi_{1:t-1},y_{1:t-1})$ after each iteration, and then select $\xi_t$ in a myopic fashion  using the one-step mutual information (see, e.g.,~\citep{ryan2016review} for a review). 
Unfortunately, this approach necessitates significant computation at each $t$ and does not lend itself to selecting optimal designs quickly and adaptively.

Recently, \citet{foster2021dad} proposed an exciting alternative approach,  called Deep Adaptive Design (DAD), that is based on learning design \emph{policies}.
DAD provides a way to avoid significant computation at deployment-time by, prior to the experiment itself, learning a design policy network that takes past design-outcome pairs and near--instantaneously returns the design for the next stage of the experiment. 
The required training is done using simulated experimental histories, without the need to estimate any posterior or marginal distributions.
DAD further only needs a single policy network to be trained for multiple experiments, further allowing for \emph{amortization} of the adaptive design process.
Unfortunately, DAD requires conditionally independent experiments and only works for the restricted class of models that have an explicit likelihood model we can simulate from, evaluate the density of, and calculate derivatives for, substantially reducing its applicability.

To address this shortfall, we instead consider a far more general class of models where we require only the ability to simulate $y | \theta,\xi$ and compute the derivative $\partial y/\partial \xi$, e.g.~via automatic differentiation~\citep{baydin2018ad}.
Such models are ubiquitous in scientific modelling and include differentiable \emph{implicit models} \citep{graham2017asymptotically}, for which the likelihood density $p(y|\theta,\xi)$ is intractable. Examples include mixed effects models \citep{foster2019variational, gelman2013bayesian}, various models from chemistry and epidemiology \citep{allen2008construction}, the Lotka Volterra model used in ecology \citep{graham2017asymptotically}, and models specified via stochastic differential equations (such as the SIR model \citep{cook2008optimal}). 


To perform rapid adaptive experimentation with this large class of models, we introduce \emph{implicit Deep Adaptive Design} (iDAD), a method for learning adaptive design policy networks using only simulated outcomes (see Figure~\ref{fig:boed_approaches_idad}).
To achieve this, we introduce likelihood-free lower bounds on the total information gained from a sequence of experiments, which iDAD utilizes to learn a deep policy network. 
This policy network amortizes the cost of experimental design for implicit models and can be run in milliseconds at deployment-time.
To train it, we show how the InfoNCE \citep{oord2018representation} and NWJ \citep{nguyen2010nwj} bounds, popularized in representation learning, can be applied to the policy-based experimental design setting.
The optimization of both of these bounds involves simultaneously learning an auxiliary \textit{critic network}, bringing an important added benefit: it can be used to perform likelihood-free posterior inference of the parameters given the data acquired from the experiment.

We also relax DAD's requirement for experiments to be conditionally independent, allowing its application in complex settings like time series data, and, through innovative architecture adaptations, also provide improvements in the conditionally independent setting as well.
This further expands the model space for policy-based BOED, and leads to additional performance improvements.

Critically, iDAD forms the first method in the literature that can practically perform real-time adaptive BOED with implicit models: previous approaches are either not fast enough to run in real-time for non-trivial models, or require explicit likelihood models.
We illustrate the applicability of iDAD on a range of experimental design problems, highlighting its benefits over existing baselines, even finding that it often outperforms costly non-amortized approaches.
Code for iDAD is publicly available at {\small \url{https://github.com/desi-ivanova/idad}}.

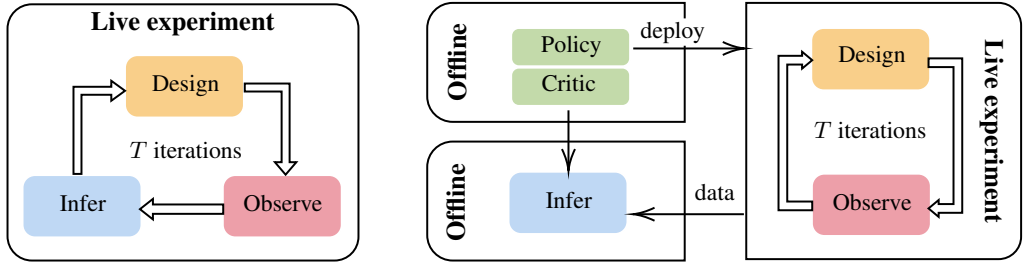
\begin{figure}[t]
\begin{subfigure}[b]{0.37\textwidth}
\centering
\tikzset{every picture/.style={line width=0.75pt}} 
\begin{tikzpicture}[x=0.75pt,y=0.75pt,yscale=-1,xscale=1]

\draw   (552.39,80.67) .. controls (559.04,80.67) and (564.43,86.06) .. (564.43,92.7) -- (564.43,197.96) .. controls (564.43,204.61) and (559.04,210) .. (552.39,210) -- (399.73,210) .. controls (392.93,210) and (387.43,204.49) .. (387.43,197.7) -- (387.43,92.96) .. controls (387.43,86.17) and (392.93,80.67) .. (399.73,80.67) -- cycle ;
\draw  [color={rgb, 255:red, 255; green, 255; blue, 255 }  ,draw opacity=1 ][fill={rgb, 255:red, 245; green, 166; blue, 35 }  ,fill opacity=0.53 ] (446.67,112.63) .. controls (446.67,109.15) and (449.49,106.33) .. (452.97,106.33) -- (500.37,106.33) .. controls (503.85,106.33) and (506.67,109.15) .. (506.67,112.63) -- (506.67,131.53) .. controls (506.67,135.01) and (503.85,137.83) .. (500.37,137.83) -- (452.97,137.83) .. controls (449.49,137.83) and (446.67,135.01) .. (446.67,131.53) -- cycle ;
\draw  [color={rgb, 255:red, 255; green, 255; blue, 255 }  ,draw opacity=1 ][fill={rgb, 255:red, 208; green, 2; blue, 27 }  ,fill opacity=0.38 ] (496,173.15) .. controls (496,169.68) and (498.82,166.86) .. (502.3,166.86) -- (549.7,166.86) .. controls (553.18,166.86) and (556,169.68) .. (556,173.15) -- (556,192.04) .. controls (556,195.51) and (553.18,198.33) .. (549.7,198.33) -- (502.3,198.33) .. controls (498.82,198.33) and (496,195.51) .. (496,192.04) -- cycle ;
\draw  [color={rgb, 255:red, 255; green, 255; blue, 255 }  ,draw opacity=1 ][fill={rgb, 255:red, 74; green, 144; blue, 226 }  ,fill opacity=0.35 ] (395,173.15) .. controls (395,169.68) and (397.82,166.86) .. (401.3,166.86) -- (448.7,166.86) .. controls (452.18,166.86) and (455,169.68) .. (455,173.15) -- (455,192.04) .. controls (455,195.51) and (452.18,198.33) .. (448.7,198.33) -- (401.3,198.33) .. controls (397.82,198.33) and (395,195.51) .. (395,192.04) -- cycle ;
\draw   (424.73,166.28) -- (424.73,126.33) -- (439.77,126.33) -- (439.77,129.35) -- (446.7,124.32) -- (439.77,119.28) -- (439.77,122.3) -- (420.7,122.3) -- (420.7,166.28) -- cycle ;
\draw   (507.08,124.69) -- (525.67,124.69) -- (525.67,160.36) -- (522.29,160.36) -- (527.68,166.66) -- (533.08,160.36) -- (529.7,160.36) -- (529.7,120.66) -- (507.08,120.66) -- cycle ;
\draw   (496.22,186.43) -- (460.76,186.43) -- (460.76,189.23) -- (454.22,184.07) -- (460.76,178.9) -- (460.76,181.7) -- (496.22,181.7) -- cycle ;

\draw (477.35,114.67) node [anchor=north] [inner sep=0.75pt]  [font=\small] [align=left] {Design};
\draw (526.41,175.33) node [anchor=north] [inner sep=0.75pt]  [font=\small] [align=left] {Observe};
\draw (477.44,148.33) node [anchor=north] [inner sep=0.75pt]  [font=\footnotesize] [align=left] {$\displaystyle T$ iterations};
\draw (475.78,83) node [anchor=north] [inner sep=0.75pt]  [color={rgb, 255:red, 0; green, 0; blue, 0 }  ,opacity=1 ] [align=left] {\textbf{Live experiment}};
\draw (425.41,175.33) node [anchor=north] [inner sep=0.75pt]  [font=\small] [align=left] {Infer};
\end{tikzpicture}
\caption{Traditional BOED: costly computations (design optimisation and parameter inference) are required at each iteration.}
\label{fig:boed_approaches_traditional}
\end{subfigure}
\hfill
\begin{subfigure}[b]{0.6\textwidth}
\centering
\tikzset{every picture/.style={line width=0.75pt}} 

\begin{tikzpicture}[x=0.75pt,y=0.75pt,yscale=-1,xscale=1]

\draw  [color={rgb, 255:red, 255; green, 255; blue, 255 }  ,draw opacity=1 ][fill={rgb, 255:red, 245; green, 166; blue, 35 }  ,fill opacity=0.53 ] (273.67,98.63) .. controls (273.67,95.15) and (276.49,92.33) .. (279.97,92.33) -- (327.37,92.33) .. controls (330.85,92.33) and (333.67,95.15) .. (333.67,98.63) -- (333.67,117.53) .. controls (333.67,121.01) and (330.85,123.83) .. (327.37,123.83) -- (279.97,123.83) .. controls (276.49,123.83) and (273.67,121.01) .. (273.67,117.53) -- cycle ;
\draw  [color={rgb, 255:red, 255; green, 255; blue, 255 }  ,draw opacity=1 ][fill={rgb, 255:red, 208; green, 2; blue, 27 }  ,fill opacity=0.38 ] (274,172.15) .. controls (274,168.68) and (276.82,165.86) .. (280.3,165.86) -- (327.7,165.86) .. controls (331.18,165.86) and (334,168.68) .. (334,172.15) -- (334,191.04) .. controls (334,194.51) and (331.18,197.33) .. (327.7,197.33) -- (280.3,197.33) .. controls (276.82,197.33) and (274,194.51) .. (274,191.04) -- cycle ;
\draw   (274.17,184.33) -- (257.17,184.33) .. controls (257.17,184.33) and (257.17,184.33) .. (257.17,184.33) -- (257.17,106.38) .. controls (257.17,106.38) and (257.17,106.38) .. (257.17,106.38) -- (269.07,106.38) -- (269.07,104.33) -- (274.17,108.58) -- (269.07,112.83) -- (269.07,110.79) -- (261.57,110.79) .. controls (261.57,110.79) and (261.57,110.79) .. (261.57,110.79) -- (261.57,179.93) .. controls (261.57,179.93) and (261.57,179.93) .. (261.57,179.93) -- (274.17,179.93) -- cycle ;
\draw   (332.83,107.67) -- (349.83,107.67) .. controls (349.83,107.67) and (349.83,107.67) .. (349.83,107.67) -- (349.83,185.62) .. controls (349.83,185.62) and (349.83,185.62) .. (349.83,185.62) -- (337.93,185.62) -- (337.93,187.67) -- (332.83,183.42) -- (337.93,179.17) -- (337.93,181.21) -- (345.43,181.21) .. controls (345.43,181.21) and (345.43,181.21) .. (345.43,181.21) -- (345.43,112.07) .. controls (345.43,112.07) and (345.43,112.07) .. (345.43,112.07) -- (332.83,112.07) -- cycle ;
\draw    (183.9,102.67) -- (238,102.67) ;
\draw [shift={(240,102.67)}, rotate = 180] [color={rgb, 255:red, 0; green, 0; blue, 0 }  ][line width=0.75]    (10.93,-3.29) .. controls (6.95,-1.4) and (3.31,-0.3) .. (0,0) .. controls (3.31,0.3) and (6.95,1.4) .. (10.93,3.29)   ;
\draw  [color={rgb, 255:red, 255; green, 255; blue, 255 }  ,draw opacity=1 ][fill={rgb, 255:red, 74; green, 144; blue, 226 }  ,fill opacity=0.35 ] (121,171.15) .. controls (121,167.68) and (123.82,164.86) .. (127.3,164.86) -- (174.7,164.86) .. controls (178.18,164.86) and (181,167.68) .. (181,171.15) -- (181,190.04) .. controls (181,193.51) and (178.18,196.33) .. (174.7,196.33) -- (127.3,196.33) .. controls (123.82,196.33) and (121,193.51) .. (121,190.04) -- cycle ;
\draw  [color={rgb, 255:red, 255; green, 255; blue, 255 }  ,draw opacity=1 ][fill={rgb, 255:red, 151; green, 192; blue, 108 }  ,fill opacity=0.58 ] (122.33,116.2) .. controls (122.33,114.06) and (124.06,112.33) .. (126.2,112.33) -- (178.47,112.33) .. controls (180.6,112.33) and (182.33,114.06) .. (182.33,116.2) -- (182.33,127.8) .. controls (182.33,129.94) and (180.6,131.67) .. (178.47,131.67) -- (126.2,131.67) .. controls (124.06,131.67) and (122.33,129.94) .. (122.33,127.8) -- cycle ;
\draw  [color={rgb, 255:red, 255; green, 255; blue, 255 }  ,draw opacity=1 ][fill={rgb, 255:red, 151; green, 192; blue, 108 }  ,fill opacity=0.58 ] (122.33,96.2) .. controls (122.33,94.06) and (124.06,92.33) .. (126.2,92.33) -- (178.47,92.33) .. controls (180.6,92.33) and (182.33,94.06) .. (182.33,96.2) -- (182.33,107.8) .. controls (182.33,109.94) and (180.6,111.67) .. (178.47,111.67) -- (126.2,111.67) .. controls (124.06,111.67) and (122.33,109.94) .. (122.33,107.8) -- cycle ;
\draw    (151.24,133.9) -- (151.24,162.38) ;
\draw [shift={(151.24,164.38)}, rotate = 270] [color={rgb, 255:red, 0; green, 0; blue, 0 }  ][line width=0.75]    (10.93,-3.29) .. controls (6.95,-1.4) and (3.31,-0.3) .. (0,0) .. controls (3.31,0.3) and (6.95,1.4) .. (10.93,3.29)   ;
\draw    (185.71,186.33) -- (239.71,186.33) ;
\draw [shift={(183.71,186.33)}, rotate = 0] [color={rgb, 255:red, 0; green, 0; blue, 0 }  ][line width=0.75]    (10.93,-3.29) .. controls (6.95,-1.4) and (3.31,-0.3) .. (0,0) .. controls (3.31,0.3) and (6.95,1.4) .. (10.93,3.29)   ;
\draw   (368.19,79.67) .. controls (374.45,79.67) and (379.52,84.74) .. (379.52,91) -- (379.52,197.67) .. controls (379.52,203.93) and (374.45,209) .. (368.19,209) -- (240.9,209) .. controls (240.9,209) and (240.9,209) .. (240.9,209) -- (240.9,79.67) .. controls (240.9,79.67) and (240.9,79.67) .. (240.9,79.67) -- cycle ;
\draw   (92.83,139.71) .. controls (85.74,139.71) and (80,133.97) .. (80,126.88) -- (80,92.5) .. controls (80,85.41) and (85.74,79.67) .. (92.83,79.67) -- (209.86,79.67) .. controls (209.86,79.67) and (209.86,79.67) .. (209.86,79.67) -- (209.86,139.71) .. controls (209.86,139.71) and (209.86,139.71) .. (209.86,139.71) -- cycle ;
\draw   (92.83,209.71) .. controls (85.74,209.71) and (80,203.97) .. (80,196.88) -- (80,162.5) .. controls (80,155.41) and (85.74,149.67) .. (92.83,149.67) -- (209.86,149.67) .. controls (209.86,149.67) and (209.86,149.67) .. (209.86,149.67) -- (209.86,209.71) .. controls (209.86,209.71) and (209.86,209.71) .. (209.86,209.71) -- cycle ;
\draw  [color={rgb, 255:red, 255; green, 255; blue, 255 }  ,draw opacity=1 ][fill={rgb, 255:red, 255; green, 255; blue, 255 }  ,fill opacity=1 ] (187.71,88) -- (226,88) -- (226,100.79) -- (187.71,100.79) -- cycle ;

\draw (304.35,99.67) node [anchor=north] [inner sep=0.75pt]  [font=\small] [align=left] {Design};
\draw (304.41,174.33) node [anchor=north] [inner sep=0.75pt]  [font=\small] [align=left] {Observe};
\draw (303.44,138.33) node [anchor=north] [inner sep=0.75pt]  [font=\footnotesize] [align=left] {$\displaystyle T$ iterations};
\draw (372.11,144.33) node [anchor=north] [inner sep=0.75pt]  [color={rgb, 255:red, 0; green, 0; blue, 0 }  ,opacity=1 ,rotate=-90] [align=left] {\textbf{Live experiment}};
\draw (88.31,178.67) node [anchor=north] [inner sep=0.75pt]  [color={rgb, 255:red, 0; green, 0; blue, 0 }  ,opacity=1 ,rotate=-270] [align=left] {\textbf{Offline} };
\draw (88.97,107.67) node [anchor=north] [inner sep=0.75pt]  [color={rgb, 255:red, 0; green, 0; blue, 0 }  ,opacity=1 ,rotate=-270] [align=left] {\textbf{Offline} };
\draw (151.41,173.33) node [anchor=north] [inner sep=0.75pt]  [font=\small] [align=left] {Infer};
\draw (152.74,94.67) node [anchor=north] [inner sep=0.75pt]  [font=\small] [align=left] {Policy};
\draw (151.41,114.67) node [anchor=north] [inner sep=0.75pt]  [font=\small] [align=left] {Critic};
\draw (225.01,169.67) node [anchor=north] [inner sep=0.75pt]  [font=\small] [align=left] {data};
\draw (203.88,87.67) node [anchor=north] [inner sep=0.75pt]  [font=\small] [align=left] {deploy};

\end{tikzpicture}

\caption{Policy-based BOED using iDAD: a design policy and critic are learnt before the live experiment. The policy enables quick and adaptive experiments, the critic assists likelihood-free inference.}
\label{fig:boed_approaches_idad}
\end{subfigure}
\caption{Overview of adaptive BOED approaches applicable to implicit models. } 
\label{fig:boed_approaches}
\end{figure}
\section{Background}
\label{sec:background}

The BOED framework \citep{lindley1956} begins by specifying a Bayesian model of the experimental process, consisting  of a prior on the unknown parameters $\prior$, a set of controllable designs $\xi$, and a data generating process that depends on them $y | \theta, \xi$; as usual in BOED, we assume that $\prior$ does not depend on $\xi$.
In this paper, we consider the situation where  $y | \theta, \xi$ is specified \textit{implicitly}. 
This means that it is defined by a deterministic transformation, $f(\varepsilon; \theta, \xi)$, of a base (or noise) random variable, $\varepsilon$, that is independent of the parameters and the design; e.g.,~$\varepsilon\sim \mathcal{N}(\varepsilon;0,I)$. 
The function $f$ is itself often not known explicitly in closed form, but is implemented as a stochastic computer program (i.e. simulator) with input $(\theta,\xi)$ and $\varepsilon$ corresponding to the draws from the underlying random number generator (or equivalently the random seed).
Regardless, the resulting induced likelihood density $\obslik$ is still generally intractable, but sampling $y|\theta,\xi$ is possible.

Having acquired a design-outcome pair $(\xi, y)$, we can quantify the amount of information we have gained about $\theta$ by calculating the reduction in entropy from the prior to the posterior. 
We can further assess the quality of a design $\xi$ before acquiring $y$, by computing the expected reduction in entropy with respect to the marginal distribution of the outcome, $\obsmarg = \E_{\prior}[\obslik]$. The resulting quantity, called the \textit{expected information gain} (EIG), is of central interest in BOED and is defined as
\begin{align}
    I(\xi) \coloneqq 
    \E_{\prior\obslik} \left[ \log \frac{\obspost}{\prior} \right]  = \E_{\prior\obslik} \left[ \log \frac{\obslik}{\obsmarg} \right].
    \label{eq:EIG_objective}
\end{align}
Note that $I(\xi)$ is equivalent to the mutual information (MI) between the parameters $\theta$ and data $y$ when performing experiment $\xi$. The optimal $\xi$ is then the one that maximises the EIG, i.e. $\xi^* = \arg\max_{\xi}I(\xi)$. 
Performing this optimization is a major computational challenge since the information objective is doubly intractable \citep{rainforth2018nesting}. 
For implicit models, the cost becomes even greater as the likelihood  is also not available in closed form, so estimating it, along with the marginal likelihood $p(y|\xi)$, is already itself a major computational problem \citep{Green2015, Lintusaari2017, Sisson2018, Cranmer2020}. 

Jointly optimizing the design variables for all undertaken experiments at the same time using~\eqref{eq:EIG_objective} is called \emph{static} experimental design. In practice, however, we are often more interested in performing multiple experiments \emph{adaptively} in a sequence $\xi_1, \dots, \xi_T$, so that the choice of each $\xi_t$ can be guided by past experiments, namely the corresponding \emph{history} $h_{t-1} \coloneqq \{(\xi_i, y_i) \}_{i=1:t-1}$.
The typical approach in such settings is to sequentially perform (approximate) posterior inference for $\theta|h_{t-1}$, followed by a one-step look ahead (myopic) BOED optimization that conditions on the observed history. In other words, to determine the designs $\xi_1,\ldots, \xi_T$, we sequentially optimize the objectives
\begin{equation}
\label{eq:sBOED}
    I_{h_{t-1}}(\xi_t) \coloneqq  
    \E_{p(\theta|h_{t-1}) p(y_t|\theta, \xi, h_{t -1})} \left[ \log \frac{p(y_t|\theta, \xi, h_{t -1})}{p(y_t|\xi, h_{t-1})}\right], \quad t=1, \ldots, T.
\end{equation}
However, such approaches incur significant computational cost during the experiment itself, particularly for implicit models \citep{Hainy2016likelihoodfree, kleinegesse2020sequential, foster2020unified}.
This has critical consequences: in most cases they cannot be run in real-time, undermining one's ability to use them in practice.

\subsection{Policy-based adaptive design with likelihoods}
For tractable likelihood models, \citet{foster2021dad} proposed a new framework, called Deep Adaptive Design (DAD), for adaptive experimental design that avoids expensive computations during the experiment. 
To achieve this, they introduce a parameterized deterministic design function, or policy,  $\pi_\phi$  that takes the history $h_{t-1}$ as input and returns the design $\xi_t=\pi_\phi(h_{t-1})$ to be used for the next experiment as output. 
This set-up allows them to consider the objective
\begin{align}
    \sEIG{\pi_\phi} 
    = \E_{\prior \histlik[\pi_\phi]} \left[\sum_{t=1}^{T} I_{h_{t-1}}(\xi_t)\right], \quad \xi_t = \pi_\phi(h_{t-1}), 
    \label{eq:EIG_policy}
\end{align}
which crucially depends on the policy $\policy$ rather than the individual design $\xi_t$. Learning a policy up-front, rather than designs, is what allows adaptive experiments to be performed in real-time. 

Under the assumption that $y_t$ is independent of $h_{t-1}$ conditional on the parameters~$\theta$ and the design~$\xi_t$, i.e.~$p(y_t|\theta, \xi_t, h_{t -1})=p(y_t|\theta, \xi_t)$, \citet{foster2021dad} showed that the objective can be simplified to
\begin{align}
    \sEIG{\pi_\phi} 
	= \E_{\prior \histlik[\pi_\phi]}\left[ \log \frac{\histlik[\pi_\phi]}{\histmarg[\pi_\phi]} \right], \quad \histlik[\pi_\phi]=\prod_{t=1}^T\obsliki{t}.
    \label{eq:sEIG_objective}
\end{align}
To deal with the marginal $\histmarg[\pi_\phi]$ in the denominator, they then derived several optimizable lower bounds on $\sEIG{\pi_\phi}$, such as the sequential Prior Contrastive Estimation (sPCE) bound
\begin{equation}
	\label{eq:sPCE_objective}
	\mathcal{L}^\text{sPCE}_T(\designnet,L) = \E_{\priori{0}\histlik[\designnet]
	\priori{1:L}} \left[ \log \frac{\histliki{0}[\designnet]}{\frac{1}{L+1}\sum_{\ell=0}^L \histliki{\ell}[\designnet]} \right] \le \sEIG{\designnet} ~~ \forall L \geq 1.
\end{equation}
The parameters of the policy $\phi$, which takes the form of a deep neural network, are now learned prior to the experiment(s) using stochastic gradient ascent on this bound with simulated experimental histories.
Design decisions can then be made using a single forward pass of $\pi_{\phi}$ during deployment. 
Unfortunately, training the DAD network by optimizing \eqref{eq:sPCE_objective} requires the likelihood density $\histlik[\pi]$ to be analytically available---an assumption that is too restrictive in many practical situations.
The architecture for DAD also assumes conditionally independent designs, which is unsuitable in some settings like time-series data.
Our method lifts both of these restrictions.

\section{Implicit Deep Adaptive Design}
\label{sec:method}

We have seen that the traditional step-by-step approach to adaptive design for implicit models \citep{Hainy2016likelihoodfree, kleinegesse2020sequential,foster2020unified} is too costly to deploy for most applications, whilst the only existing policy-based approach, DAD~\citep{foster2021dad}, makes restrictive assumptions that prevent it being applied to implicit models.
We aim to relax the restrictive assumptions of the latter, making policy-based BOED applicable to all models where we can sample from $y|\theta,\xi$ and compute the derivative $\partial y / \partial \xi$, a strict superset of the class of models that can be handled by DAD.
This requires new training objectives for the policy network that do not involve an explicit likelihood and are not based on conditionally independent experiments, along with new architectures that work for non-exchangeable models like time series.

\subsection{Information lower bounds for policy-based experimental design without likelihoods}
To establish a suitable likelihood-free training objective for the implicit setting, our high-level idea is to leverage recent advances in variational MI \citep[see][for an overview]{poole2018variational},
which have shown promise for \emph{static} BOED \citep{foster2020unified,kleinegesse2020mine, kleinegesse2021gradientbased}.
While using these bounds in the traditional framework of~\eqref{eq:sBOED} would not permit real-time experiments, one could consider a naive application of them to the policy objective of \eqref{eq:EIG_policy} by replacing each $I_{h_{t-1}}$ with a suitable variational lower bound that uses a `critic' $U_{t} :\mathcal{H}^{t-1} \times \Theta \rightarrow \R$ to avoid explicit likelihood evaluations, where $\mathcal{H}^{t-1}$ and $\Theta$ are the spaces of histories and parameters respectively. 
An effective critic successfully encapsulates the true likelihood, tightening the bound. Although its form depends on the choice of bound, all critics are parametrized and trained in the same way, namely by a neural network $U_{\phi_t}$ which is optimized to tighten the bound. 
Unfortunately, replacing each $I_{h_{t-1}}$ involves learning $T$ such critic networks and requires samples from all posteriors $p(\theta|h_{t-1})$, which will typically be impractically costly.

To avoid this issue, we show that we can obtain a unified information objective similar to~\eqref{eq:sEIG_objective}, \emph{even without conditionally independent experiments}.
The following proposition therefore marks the first key milestone in eliminating the restrictive assumptions of \cite{foster2021dad}, by establishing a unified objective without intermediate posteriors that is valid even when the model itself changes between time steps.
\begin{restatable}[Generalized total expected information gain]{proposition}{unified}
	\label{thm:unified}
	Consider the data generating distribution
	   $\histlik[\pi]=\prod_{t=1:T} p(y_t | \theta, \xi_t, h_{t-1}),$
	where $\xi_t = \policy(h_{t-1})$ are the designs generated by the policy and, unlike in \eqref{eq:sEIG_objective}, $y_t$ is allowed to depend on the history $h_{t-1}$.
	Then we can write 
	\eqref{eq:EIG_policy} as
	\begin{align}
	\label{eq:our_unified_EIG}
	    \sEIG{\pi} 
	= \E_{\prior \histlik[\pi]}\left[ \log \histlik[\pi]\right] - \E_{\histmarg[\pi]}\left[\log \histmarg[\pi] \right].
	\end{align}
\end{restatable}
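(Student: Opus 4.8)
The plan is to collapse the telescoped sum in \eqref{eq:EIG_policy} down to the single information quantity in \eqref{eq:our_unified_EIG} by substituting the definition of each one-step term $I_{h_{t-1}}(\xi_t)$ from \eqref{eq:sBOED} and exploiting two facts: that the outer expectation in \eqref{eq:EIG_policy} is taken under the full joint $\prior\histlik[\pi]$, and that every design $\xi_t=\policy(h_{t-1})$ is a \emph{deterministic} function of the history. First I would note that $I_{h_{t-1}}(\xi_t)$ depends on the random quantities only through $h_{t-1}$, so the outer expectation over the whole history combines, by the tower property, with the conditional expectations already present inside the definition. Concretely, $\E_{\prior\histlik[\pi]}$ applied to $I_{h_{t-1}}(\xi_t)$ agrees with averaging the inner log-ratio under the consistent joint $p(\theta, h_{t-1}, y_t)$, which is itself a marginal of $\prior\histlik[\pi]$ obtained by integrating out $y_{t+1:T}$. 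This rewrites $\sEIG{\pi}$ as a single expectation, under the full joint, of $\sum_{t=1}^T \log p(y_t\mid\theta,\xi_t,h_{t-1}) - \sum_{t=1}^T \log p(y_t\mid\xi_t,h_{t-1})$.

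Next I would telescope the two sums using only the chain rule of probability. The numerator sum collapses immediately, since $\sum_{t=1}^T \log p(y_t\mid\theta,\xi_t,h_{t-1}) = \log \prod_{t=1}^T p(y_t\mid\theta,\xi_t,h_{t-1}) = \log \histlik[\pi]$ by the definition of the data-generating distribution in the statement. The denominator sum is the crux: because $\xi_t=\policy(h_{t-1})$ is measurable with respect to $h_{t-1}$ and so carries no information beyond it, we have $p(y_t\mid\xi_t,h_{t-1}) = p(y_t\mid h_{t-1})$, and the ordinary chain rule of probability gives $\prod_{t=1}^T p(y_t\mid h_{t-1}) = \histmarg[\pi]$. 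Hence $\sum_{t=1}^T \log p(y_t\mid\xi_t,h_{t-1}) = \log \histmarg[\pi]$. Substituting both telescoped expressions, and finally integrating out $\theta$ in the second term, which is legitimate because $\histmarg[\pi]$ does not depend on $\theta$ and the marginal of $h_T$ under $\prior\histlik[\pi]$ is exactly $\histmarg[\pi]$, yields \eqref{eq:our_unified_EIG}.

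The step I expect to be the main obstacle, and the one doing the real work, is the telescoping of the marginal term. In the conditionally independent setting of \citet{foster2021dad} the likelihood itself factorizes as $\prod_t p(y_t\mid\theta,\xi_t)$, as in \eqref{eq:sEIG_objective}, but here I must avoid that assumption entirely. The point is that the factorization $\histmarg[\pi] = \prod_t p(y_t\mid h_{t-1})$ holds for \emph{any} data-generating process purely by the chain rule of probability, and the only structural ingredient required is that conditioning on $\xi_t$ adds nothing beyond conditioning on $h_{t-1}$, which is guaranteed by the design being a deterministic function of the history. Care is also needed in the first step to confirm that merging the nested conditional expectations is valid, i.e.\ that $p(\theta,h_{t-1},y_t)$ genuinely is the marginal of the full joint; this is where I would check consistency most carefully, though it follows routinely from the law of total expectation.
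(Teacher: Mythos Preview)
Your argument is correct and, in fact, more direct than the paper's. The paper first uses Bayes' rule to rewrite each $I_{h_{t-1}}(\xi_t)$ as a difference of posterior negative entropies, $\E_{p(\theta|h_t)}[\log p(\theta|h_t)] - \E_{p(\theta|h_{t-1})}[\log p(\theta|h_{t-1})]$, takes the outer expectation over $p(h_T|\pi)$, and then applies a genuine telescoping sum so that only the $t=0$ and $t=T$ terms survive; a final application of Bayes' rule converts the posterior ratio back into the likelihood ratio $\histlik[\pi]/\histmarg[\pi]$. You instead stay in likelihood space throughout: after merging the nested expectations via the tower property, you collapse $\sum_t \log p(y_t|\theta,\xi_t,h_{t-1})$ and $\sum_t \log p(y_t|\xi_t,h_{t-1})$ directly into $\log\histlik[\pi]$ and $\log\histmarg[\pi]$ using only the chain rule of probability and the measurability of $\xi_t$ with respect to $h_{t-1}$. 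Your route avoids the two Bayes-rule detours through the posterior and makes the role of the deterministic policy (the identity $p(y_t|\xi_t,h_{t-1})=p(y_t|h_{t-1})$) more explicit; the paper's route, by contrast, makes the connection to cumulative entropy reduction more visible. One minor quibble: what you call ``telescoping'' is really just the chain rule collapsing a product, not a telescoping sum in the usual sense of consecutive cancellations---that term applies more naturally to the paper's entropy-difference argument.
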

Proofs are presented in Appendix~\ref{sec:app_proofs}. The advantage of \eqref{eq:our_unified_EIG} is that we can draw samples from $\prior \histlik[\pi]$ simply by sampling our model and taking forward passes through the design network. 
However, neither of the \emph{densities} $\histlik[\pi]$ and $\histmarg[\pi]$ are tractable for implicit models.

To side-step this intractability, we observe that $\sEIG{\pi}$ takes an analogous form to a MI between $\theta$ and $h_T$.
For measure-theoretic reasons, namely because the $\xi_{1:T}$ are deterministic given $y_{1:T}$ (see Appendix~\ref{sec:app_proofs} for a full discussion), it is not the true MI.
However, the following two propositions show that we can treat $\sEIG{\pi}$ \emph{as if it were this MI}.
Specifically, we show that the InfoNCE \citep{oord2018representation} and NWJ \citep{nguyen2010nwj} bounds on the MI can be adapted to establish tractable lower bounds on our unified objective $\sEIG{\pi}$.
These two bounds both utilize a \emph{single} auxiliary critic network $U_{\psi}$ that is trained simultaneously with the design network.

\begin{restatable}[NWJ bound for implicit policy-based BOED]{proposition}{seqboundsnwj}
	\label{thm:seqbounds_nwj}
	For a design policy $\policy$ and a critic function $U:\mathcal{H}^T \times \Theta \rightarrow \R$, let
    	\begin{equation}
        	\label{eq:seqNWJ_objective}
             \mathcal{L}^\text{NWJ}_T(\policy, \critic) \coloneqq  \E_{\prior\histlik[\policy]} \left[ \critic(h_T, \theta) \right] - e^{-1}\E_{\prior\histmarg[\policy] } \left[ \exp(\critic(h_T, \theta)) \right],
    	\end{equation}
    	then $\sEIG{\policy} \geq \mathcal{L}^\text{NWJ}_T(\policy, \critic)$ holds for any $U$. Further, the inequality is tight for the \textit{optimal} critic $U^*_{\text{NWJ}}(h_T, \theta)=\log \histlik[\policy] - \log \histmarg[\policy] + 1$.
\end{restatable}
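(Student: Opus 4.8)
The plan is to recognise $\sEIG{\policy}$ from Proposition~\ref{thm:unified} as a Kullback--Leibler divergence and then invoke the Fenchel-duality argument underlying the NWJ bound. Starting from \eqref{eq:our_unified_EIG}, I would first observe that the second expectation, $\E_{\histmarg[\policy]}[\log \histmarg[\policy]]$, can be taken with respect to the full joint $\prior\histlik[\policy]$ without changing its value, since $\log \histmarg[\policy]$ depends only on $h_T$ and marginalising the joint over $\latent$ recovers $\histmarg[\policy]$. This collapses \eqref{eq:our_unified_EIG} into a single expectation of a log-ratio,
\[
\sEIG{\policy} = \E_{\prior\histlik[\policy]}\left[\log\frac{\histlik[\policy]}{\histmarg[\policy]}\right],
\]
which is precisely the divergence $D_{\mathrm{KL}}\big(\prior\histlik[\policy]\,\|\,\prior\histmarg[\policy]\big)$ between the joint and the product of marginals. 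The crucial point enabling us to treat $\sEIG{\policy}$ \emph{as if} it were a mutual information is that the NWJ inequality needs only this KL structure---a well-defined ratio of densities---and not a genuine product-measure MI interpretation, so the deterministic nature of the designs is immaterial once we work over the outcome space $y_{1:T}$.

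The heart of the argument is then a single pointwise inequality. Writing $r \coloneqq \histlik[\policy]/\histmarg[\policy]$ and $Q \coloneqq \prior\histmarg[\policy]$, I would express the gap as
\[
\sEIG{\policy} - \mathcal{L}^\text{NWJ}_T(\policy, \critic) = \E_{Q}\left[\, r\log r - r\,\critic(h_T,\latent) + e^{\critic(h_T,\latent)-1}\,\right],
\]
using $\E_{\prior\histlik[\policy]}[\,\cdot\,] = \E_Q[\,r\,\cdot\,]$ for the first two terms and $e^{-1}\E_Q[e^{\critic}] = \E_Q[e^{\critic-1}]$ for the last. It then suffices to show the integrand is nonnegative for each fixed $r \ge 0$. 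Treating $g(u) \coloneqq e^{u-1} - r u + r\log r$ as a function of the scalar $u = \critic(h_T,\latent)$, I would minimise over $u$: the stationarity condition $g'(u)=e^{u-1}-r=0$ gives $u^\star = 1+\log r$, and $g''(u)=e^{u-1}>0$ confirms a global minimum, at which $g(u^\star) = r - r(1+\log r) + r\log r = 0$. Hence $g(u)\ge 0$ everywhere, yielding $\sEIG{\policy}\ge \mathcal{L}^\text{NWJ}_T(\policy,\critic)$ for every critic $\critic$.

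Tightness then follows immediately by reading off the minimiser: equality holds pointwise exactly when $\critic(h_T,\latent)=u^\star = 1+\log r = \log\histlik[\policy] - \log\histmarg[\policy] + 1$, which is the stated optimal critic $U^*_{\text{NWJ}}$. I expect the genuine subtlety to lie not in this elementary convexity calculation but in the measure-theoretic bookkeeping of the first step: confirming that $r$ is a legitimate density ratio over the outcome space, i.e. that the likelihood is absolutely continuous with respect to the marginal over $y_{1:T}$ for the given policy. Since Proposition~\ref{thm:unified} already delivers $\sEIG{\policy}$ in the required density-ratio form, this reduces to verifying absolute continuity, after which the remainder is the standard NWJ derivation specialised to this pair of measures.
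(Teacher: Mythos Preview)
Your argument is correct but proceeds differently from the paper. The paper's proof is a two-step construction: it first introduces the self-normalised function $g(h_T,\theta) = \exp(U)/\E_{\histmarg[\policy]}[\exp(U)]$, multiplies and divides by $g$ inside the log, and identifies the residual term as a KL divergence between the joint and $\hat{p}(h_T,\theta) \coloneqq \prior\histmarg[\policy]\,g(h_T,\theta)$ (verifying explicitly that $\hat{p}$ integrates to $1$). This yields the Donsker--Varadhan bound $\sEIG{\policy} \ge \E_{\prior\histlik[\policy]}[U] - \E_{\prior}[\log \E_{\histmarg[\policy]}e^{U}]$, and only then does the paper apply the scalar inequality $\log x \le e^{-1}x$ to convert the log-partition term into the NWJ form. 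Tightness is checked by direct substitution of $U^*$.

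Your route collapses both steps into a single pointwise convexity calculation: writing the gap as $\E_{Q}[g(U)]$ with $g(u)=e^{u-1}-ru+r\log r$ and minimising over $u$ is exactly the Fenchel-dual derivation of the NWJ inequality, and it delivers the bound and the optimal critic simultaneously. This is more economical and avoids the auxiliary distribution $\hat{p}$ and the separate $\log x \le x/e$ step; the paper's approach, on the other hand, makes the intermediate Donsker--Varadhan bound explicit, which connects more transparently to the broader family of variational MI estimators discussed elsewhere in the paper. Both arguments rely on the same measure-theoretic reduction you flag---that $\sEIG{\policy}$ is a bona fide KL on the outcome space $y_{1:T}$---and the paper, like you, takes this as given from Proposition~\ref{thm:unified}.
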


\begin{restatable}[InfoNCE bound for implicit policy-based BOED]{proposition}{seqboundsinfo}
	\label{thm:seqbounds_info}
	Let $\theta_{1:L}\sim \priori{1:L} \!=\! \prod_i\priori{i}$ be a set of contrastive samples where $L\ge 1$.
	For design policy $\policy$ and critic function $U\!:\!\mathcal{H}^T \!\times\! \Theta \!\rightarrow\! \R$, let
    	\begin{equation}
        	\label{eq:seqNCE_objective}
            \mathcal{L}^\text{NCE}_T(\policy, \critic; L) \coloneqq \E_{\priori{0}\histliki{0}[\policy]}\E_{\priori{1:L}} \left[ \log \frac{\exp(\critic(h_T, \theta_0))} {\frac{1}{L+1}\sum_{i=0}^L \exp(\critic(h_T, \theta_i))} \right], 
    	\end{equation}
    then $\sEIG{\policy} \geq \mathcal{L}^\text{NCE}_T(\policy, \critic; L)$ for any $U$ and $L\geq 1$.
    Further, the optimal critic, $\critic^*_{\text{NCE}}(h_T, \theta)= \log \histlik[\policy] + c(h_T)$ where $c(h_T)$ is any arbitrary function depending only on the history, recovers the sPCE bound in~\eqref{eq:sPCE_objective};
    the inequality is tight in the limit as $L\rightarrow \infty$ for this \textit{optimal} critic. 
\end{restatable}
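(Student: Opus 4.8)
The plan is to prove the chain $\mathcal{L}^\text{NCE}_T(\policy, \critic; L) \le \mathcal{L}^\text{sPCE}_T(\policy, L) \le \sEIG{\policy}$, valid for every critic and every $L\ge 1$, from which the lower bound, the optimal-critic identity, and the tightness claim all follow. Throughout I would work directly with the densities $\histlik[\policy]$, $\histmarg[\policy]$ and $\prior$ supplied by Proposition~\ref{thm:unified} and its expression~\eqref{eq:our_unified_EIG}, so that no appeal to a genuine mutual information is needed: this is exactly what lets us sidestep the measure-theoretic caveat that $\xi_{1:T}$ is deterministic given $y_{1:T}$, and it is why we may "treat $\sEIG{\policy}$ as if it were this MI''.

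First I would recast the objective in a classification form. Writing $\sigma_j \coloneqq \exp(\critic(h_T,\theta_j))/\sum_{i=0}^L \exp(\critic(h_T,\theta_i))$ for the softmax weight on index $j$, the bound becomes $\mathcal{L}^\text{NCE}_T(\policy, \critic; L)=\log(L+1)+\E_P[\log\sigma_0]$, where $P \coloneqq \priori{0}\histliki{0}[\policy]\priori{1:L}$. The key device is the exchangeable mixture $\bar{P}\coloneqq \frac{1}{L+1}\sum_{j=0}^L \priori{j}\,\histliki{j}[\policy]\prod_{i\neq j}\priori{i}$, obtained by drawing the generating index $J$ uniformly; since $\sigma_j$ is symmetric in the labels, a relabelling argument gives $\E_P[\log\sigma_0]=\E_{\bar P}[\log\sigma_J]$. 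Under $\bar P$ the exact posterior over the generating index is $\bar P(j\mid h_T,\theta_{0:L})=\histliki{j}[\policy]/\sum_i \histliki{i}[\policy]$, which is precisely the softmax $\sigma_j^*$ induced by $\critic^*_{\text{NCE}}$ (the arbitrary $c(h_T)$ cancels in the normalisation). Recognising $\E_{\bar P}[-\log\sigma_J]$ as a cross-entropy and $\E_{\bar P}[-\log\sigma^*_J]$ as the matching conditional entropy, their difference is $\E_{\bar P}[\mathrm{KL}(\bar P(\cdot\mid h_T,\theta_{0:L})\,\|\,\sigma)]\ge 0$. This yields $\mathcal{L}^\text{NCE}_T(\policy, \critic; L)\le \mathcal{L}^\text{sPCE}_T(\policy, L)$ with equality exactly when $\sigma=\sigma^*$, which simultaneously establishes that $\critic^*_{\text{NCE}}(h_T,\theta)=\log\histlik[\policy]+c(h_T)$ recovers~\eqref{eq:sPCE_objective}.

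Second I would show $\mathcal{L}^\text{sPCE}_T(\policy, L)\le \sEIG{\policy}$. Using the identity $\frac{1}{L+1}\sum_{\ell}\histliki{\ell}[\policy]/\histmarg[\policy]=\bar P(\theta_{0:L},h_T)/\big(\prod_i \priori{i}\,\histmarg[\policy]\big)$ together with~\eqref{eq:our_unified_EIG}, the gap rearranges to $\sEIG{\policy}-\mathcal{L}^\text{sPCE}_T=\E_P[\log(\bar P/(\prod_i\priori{i}\,\histmarg[\policy]))]$; the integrand is symmetric in the labels, so the same relabelling identity replaces $\E_P$ by $\E_{\bar P}$ and turns this into $\mathrm{KL}\big(\bar P\,\|\,\prod_i\priori{i}\,\histmarg[\policy]\big)\ge 0$. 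Chaining with the first step gives $\sEIG{\policy}\ge \mathcal{L}^\text{NCE}_T(\policy, \critic; L)$ for all $\critic$ and all $L\ge 1$.

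Finally, for the tightness claim I would take $\critic=\critic^*_{\text{NCE}}$ so that the total gap collapses to exactly $\mathrm{KL}(\bar P\,\|\,\prod_i\priori{i}\,\histmarg[\policy])$, whose integrand is $\log\big(\frac{1}{L+1}\sum_{\ell=0}^L \histliki{\ell}[\policy]/\histmarg[\policy]\big)$. As $L\to\infty$ the strong law of large numbers drives $\frac{1}{L+1}\sum_\ell \histliki{\ell}[\policy]/\histmarg[\policy]\to 1$ almost surely, so the integrand vanishes pointwise; the remaining work is to exchange limit and expectation. I expect this dominated-convergence / uniform-integrability step, together with a careful setup of the relabelling symmetry that identifies $\E_P$ with $\E_{\bar P}$, to be the only delicate parts — the algebraic rearrangements and the non-negativity of the two KL terms are routine.
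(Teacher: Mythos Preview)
Your proof is correct but follows a genuinely different route from the paper. The paper establishes the bound in a single step: it multiplies and divides inside the log by $g(h_T,\theta_{0:L})=\exp(\critic(h_T,\theta_0))/\bigl(\tfrac{1}{L+1}\sum_i \exp(\critic(h_T,\theta_i))\bigr)$ and recognises the resulting remainder as the Kullback--Leibler divergence $\mathrm{KL}\bigl(\priori{0}\histliki{0}[\policy]\priori{1:L}\,\big\|\,\hat{p}\bigr)$, where $\hat{p}\coloneqq \priori{0}\histmarg[\policy]\priori{1:L}\,g$; the fact that $\hat{p}$ integrates to one is checked via the same index-symmetry you use. The optimal critic and the tightness as $L\to\infty$ are then obtained by direct substitution and by invoking the known behaviour of the sPCE bound.

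By contrast, you prove the two-step chain $\mathcal{L}^\text{NCE}_T\le \mathcal{L}^\text{sPCE}_T\le \sEIG{\policy}$. The first inequality comes from a classification (cross-entropy) reading of InfoNCE under the exchangeable mixture $\bar P$, which identifies the optimal critic with the Bayes posterior over the generating index and immediately yields the ``optimal critic recovers sPCE'' statement as a by-product. The second inequality is a separate KL argument between $\bar P$ and the product of marginals. Your decomposition thus makes explicit the intermediate fact $\mathcal{L}^\text{NCE}_T(\policy,\critic;L)\le \mathcal{L}^\text{sPCE}_T(\policy,L)$ for every critic, and ties the result to the contrastive-classification intuition that underlies InfoNCE more broadly. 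The paper's single-KL argument is shorter and gives the whole gap $\sEIG{\policy}-\mathcal{L}^\text{NCE}_T$ as one non-negative quantity, but it does not isolate the critic-dependent part from the $L$-dependent part the way your chain does. Your self-contained SLLN/dominated-convergence treatment of the $L\to\infty$ limit is also slightly more explicit than the paper, which defers this step to the cited sPCE result.
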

We propose these two alternative bounds due to their complementary properties: the NWJ bound can have large variance, but tends to be less biased.
That is, the NWJ bound tends to be tighter for good critics, but is itself more difficult to reliably estimate and thus optimize. 
While the NWJ critic must learn to self-normalize, the InfoNCE bound avoids this issue but typically will not be tight for finite $L$ even with an optimal critic  (note $\mathcal{L}^\text{NCE}_T \le\log(L+1)$~\cite{poole2018variational}).
Consequently, 
only the NWJ objective recovers the true optimal policy if our critic has infinite capacity and our optimization scheme is perfect, i.e.~$\arg\max_{\policy}\max_{\critic} \mathcal{L}^\text{NWJ}_T(\policy, \critic)=\policy^*\neq\arg\max_{\policy}\max_{\critic} \mathcal{L}^\text{NCE}_T(\policy, \critic;L)$ in general, but it can be more difficult to work with in practice. 
We present a third bound that provides a potential solution to this,  and further discuss the relative merits of the two bounds,  in Appendix~\ref{sec:app_proofs}.

We note that for both bounds the optimal critic does not depend on the learned policy.
The final trained critic can be used to approximate the density ratio $\histlik[\pi]/\histmarg[\pi] = p(\theta|h_T)/\prior$, either directly in the case of the NWJ critic, or via self-normalization for the InfoNCE bound. 
We can use this to help approximate the posterior over $\theta$ given the collected real data from the experiment. 
This means we can perform likelihood-free inference after training the critic, which extends previous results \cite{kleinegesse2020mine, kleinegesse2021gradientbased} from the static to the adaptive policy-based setting.

\begin{algorithm}[t]
\SetAlgoLined
\SetKwInput{Input}{Input}
\SetKwInput{Output}{Output}
\Input{Differentiable simulator $f$,  sampler for prior $p(\theta)$, 
number of experimental steps $T$}
\Output{Design network $\pi_\phi$, critic network $U_{\psi}$}
\While{\textnormal{Computational training budget not exceeded}}{
    Sample $\theta \sim p(\theta)$ and set $h_0=\varnothing$
    
    \For {$t=1,...,T$}{
    Compute $\xi_t = \pi_\phi(h_{t-1})$
    
    Sample $\varepsilon_t\sim p(\varepsilon)$ and compute $y_t = f(\varepsilon_t; \xi_{t},\theta,h_{t-1})$ 
    
    Set $h_t=\{(\xi_1,y_1),...,(\xi_t,y_t)\}$ 
    }
Estimate $\nabla_{\phi,\psi} \mathcal{L}_T (\designnet, \criticnet)$ as per \eqref{eq:infonce_gradient} where  $\mathcal{L}_T$ is $\mathcal{L}^\text{NWJ}_T$ \eqref{eq:seqNWJ_objective} or $\mathcal{L}^\text{NCE}_T$ \eqref{eq:seqNCE_objective} 
    
    Update the parameters $(\phi, \psi)$ using stochastic gradient ascent scheme
}
For deployment, use the deterministic trained design network $\pi_\phi$ to obtain a designs $\xi_t$ directly.
\caption{Implicit Deep Adaptive Design with (iDAD)}
\label{algo:iDAD}
\end{algorithm}

\subsection{Parameterization and gradient estimation}

In practice, we represent the policy $\policy$ and the critic $\critic$ as neural networks, $\designnet$ and $\criticnet$ respectively,~such that the lower bounds become a function $\mathcal{L}(\designnet, \criticnet)$ of their parameters. 
By simultaneously optimizing $\mathcal{L}(\designnet, \criticnet)$ with respect to both $\phi$ and $\psi$, we both learn a tight bound that accurately represents the true MI and a design policy network that produces high-quality designs under this metric.

We optimize these bounds using stochastic gradient methods~\citep{robbins1951stochastic,kingma2014adam}.
For this, we must account for the fact that the parameter $\phi$ affects the probability distributions with respect to which expectations are taken.
We deal with this problem by utilizing the reparametrization trick~\citep{rezende2014stochastic,mohamed2020monte}, for which we assume that design space $\Xi$ and observation space $\mathcal{Y}$ are continuous.
To this end, we first formalize the notion of a differentiable implicit model in the adaptive design setting as
\begin{align}
        y_t = f(\varepsilon_t; \xi_t(h_{t-1}),\theta,h_{t-1}), \quad \text{where} \quad \theta \sim p(\theta), \quad \varepsilon_t \sim p(\varepsilon)~\; \forall t \in \{1,\dots,T\}
\end{align}
and we assume that we can compute the derivatives $\partial f / \partial \xi$ and $\partial f / \partial h$.
Interestingly, it is possible to use an implicit prior without access to the density $p(\theta)$, and we do not need access to $\partial f / \partial \theta$.

Under these conditions, we can express the bounds in terms of expectations that do not depend on $\phi$ or $\psi$, and hence move the gradient operator inside. For $\mathcal{L}^\text{NCE}_T(\policy_\phi, \critic_\psi; L)$, for example, we have
\begin{align}
\label{eq:infonce_gradient}
    \nabla_{\phi,\psi}\mathcal{L}^\text{NCE}_T = \E_{p(\theta_{0:L})p(\varepsilon_{1:T})}\left[ \nabla_{\phi,\psi} \log \frac{\exp(\critic_\psi(h_T(\varepsilon_{1:T},\pi_\phi), \theta_0))} {\frac{1}{L+1}\sum_{i=0}^L \exp(\critic_\psi(h_T(\varepsilon_{1:T},\pi_\phi), \theta_i))} \right].
\end{align}
While each element of the history $h_T$ depends on $\phi$ in a possibly nested manner, we do not need to explicitly keep track of these dependencies thanks to automatic differentiation \citep{baydin2018ad,paszke2019pytorch}.

Like DAD, our new method---which we call \emph{implicit Deep Adaptive Design} (iDAD)---is trained with simulated  histories $h_T=\{(\xi_i, y_i) \}_{i=1:T}$ prior to the actual experiment, allowing design decision to be made using a single forward pass during deployment. 
Unlike DAD, however, it does not require knowledge of the likelihood function, nor the assumption of conditionally independent designs, which significantly broadens its applicability.
A summary of the iDAD approach is given in Algorithm~\ref{algo:iDAD}.

\subsection{Network architectures}

\begin{figure}[t]
\begin{subfigure}[b]{0.47\textwidth}
\centering
\tikzset{every picture/.style={line width=0.75pt}} 
\begin{tikzpicture}[x=0.75pt,y=0.75pt,yscale=-1,xscale=1]

\draw  [fill={rgb, 255:red, 248; green, 231; blue, 28 }  ,fill opacity=0.37 ] (125.71,40.99) .. controls (125.71,37.56) and (128.49,34.79) .. (131.91,34.79) -- (150.51,34.79) .. controls (153.94,34.79) and (156.71,37.56) .. (156.71,40.99) -- (156.71,59.59) .. controls (156.71,63.01) and (153.94,65.79) .. (150.51,65.79) -- (131.91,65.79) .. controls (128.49,65.79) and (125.71,63.01) .. (125.71,59.59) -- cycle ;
\draw  [fill={rgb, 255:red, 155; green, 201; blue, 175 }  ,fill opacity=0.6 ] (173.71,41.28) .. controls (173.71,37.85) and (176.49,35.08) .. (179.91,35.08) -- (267.51,35.08) .. controls (270.94,35.08) and (273.71,37.85) .. (273.71,41.28) -- (273.71,59.88) .. controls (273.71,63.3) and (270.94,66.08) .. (267.51,66.08) -- (179.91,66.08) .. controls (176.49,66.08) and (173.71,63.3) .. (173.71,59.88) -- cycle ;
\draw    (274,50) -- (290.37,66.37) ;
\draw [shift={(291.79,67.79)}, rotate = 225] [color={rgb, 255:red, 0; green, 0; blue, 0 }  ][line width=0.75]    (10.93,-3.29) .. controls (6.95,-1.4) and (3.31,-0.3) .. (0,0) .. controls (3.31,0.3) and (6.95,1.4) .. (10.93,3.29)   ;
\draw  [fill={rgb, 255:red, 144; green, 19; blue, 254 }  ,fill opacity=0.2 ] (293,64.28) .. controls (293,60.85) and (295.78,58.08) .. (299.2,58.08) -- (364.8,58.08) .. controls (368.22,58.08) and (371,60.85) .. (371,64.28) -- (371,82.88) .. controls (371,86.3) and (368.22,89.08) .. (364.8,89.08) -- (299.2,89.08) .. controls (295.78,89.08) and (293,86.3) .. (293,82.88) -- cycle ;
\draw  [fill={rgb, 255:red, 155; green, 155; blue, 155 }  ,fill opacity=0.3 ] (173.71,84.28) .. controls (173.71,80.85) and (176.49,78.08) .. (179.91,78.08) -- (267.51,78.08) .. controls (270.94,78.08) and (273.71,80.85) .. (273.71,84.28) -- (273.71,102.88) .. controls (273.71,106.3) and (270.94,109.08) .. (267.51,109.08) -- (179.91,109.08) .. controls (176.49,109.08) and (173.71,106.3) .. (173.71,102.88) -- cycle ;
\draw    (274,95) -- (291.3,77.7) ;
\draw [shift={(292.71,76.29)}, rotate = 495] [color={rgb, 255:red, 0; green, 0; blue, 0 }  ][line width=0.75]    (10.93,-3.29) .. controls (6.95,-1.4) and (3.31,-0.3) .. (0,0) .. controls (3.31,0.3) and (6.95,1.4) .. (10.93,3.29)   ;
\draw  [color={rgb, 255:red, 0; green, 0; blue, 0 }  ,draw opacity=1 ][fill={rgb, 255:red, 74; green, 144; blue, 226 }  ,fill opacity=0.35 ] (125.71,84) .. controls (125.71,80.58) and (128.49,77.8) .. (131.91,77.8) -- (150.51,77.8) .. controls (153.94,77.8) and (156.71,80.58) .. (156.71,84) -- (156.71,102.6) .. controls (156.71,106.02) and (153.94,108.8) .. (150.51,108.8) -- (131.91,108.8) .. controls (128.49,108.8) and (125.71,106.02) .. (125.71,102.6) -- cycle ;
\draw    (156,51) -- (171.71,51) ;
\draw [shift={(173.71,51)}, rotate = 180] [color={rgb, 255:red, 0; green, 0; blue, 0 }  ][line width=0.75]    (10.93,-3.29) .. controls (6.95,-1.4) and (3.31,-0.3) .. (0,0) .. controls (3.31,0.3) and (6.95,1.4) .. (10.93,3.29)   ;
\draw    (156,94) -- (171.71,94) ;
\draw [shift={(173.71,94)}, rotate = 180] [color={rgb, 255:red, 0; green, 0; blue, 0 }  ][line width=0.75]    (10.93,-3.29) .. controls (6.95,-1.4) and (3.31,-0.3) .. (0,0) .. controls (3.31,0.3) and (6.95,1.4) .. (10.93,3.29)   ;

\draw (142.05,50.43) node  [font=\large] [align=left] {$\displaystyle h_{T}$};
\draw (222.02,43) node [anchor=north] [inner sep=0.75pt]  [font=\small] [align=left] {History encoder};
\draw (333.25,66) node [anchor=north] [inner sep=0.75pt]  [font=\small] [align=left] {Dot Product};
\draw (141.05,93.43) node  [font=\large] [align=left] {$\displaystyle \theta $};
\draw (220.56,87) node [anchor=north] [inner sep=0.75pt]  [font=\small] [align=left] {$\theta$ encoder};
\end{tikzpicture}
\caption{Critic network, $\criticnet$}
\label{fig:architecture_critic}
\end{subfigure}
\hfill
\begin{subfigure}[b]{0.48\textwidth}
\centering
\tikzset{every picture/.style={line width=0.75pt}} 

\begin{tikzpicture}[x=0.75pt,y=0.75pt,yscale=-1,xscale=1]

\draw  [fill={rgb, 255:red, 248; green, 231; blue, 28 }  ,fill opacity=0.37 ] (104.71,45.99) .. controls (104.71,42.56) and (107.49,39.79) .. (110.91,39.79) -- (129.51,39.79) .. controls (132.94,39.79) and (135.71,42.56) .. (135.71,45.99) -- (135.71,64.59) .. controls (135.71,68.01) and (132.94,70.79) .. (129.51,70.79) -- (110.91,70.79) .. controls (107.49,70.79) and (104.71,68.01) .. (104.71,64.59) -- cycle ;
\draw  [fill={rgb, 255:red, 155; green, 155; blue, 155 }  ,fill opacity=0.3 ] (271,46.2) .. controls (271,42.78) and (273.78,40) .. (277.2,40) -- (297.8,40) .. controls (301.22,40) and (304,42.78) .. (304,46.2) -- (304,64.8) .. controls (304,68.22) and (301.22,71) .. (297.8,71) -- (277.2,71) .. controls (273.78,71) and (271,68.22) .. (271,64.8) -- cycle ;
\draw  [fill={rgb, 255:red, 155; green, 201; blue, 156 }  ,fill opacity=0.6 ] (153.71,46.28) .. controls (153.71,42.85) and (156.49,40.08) .. (159.91,40.08) -- (247.51,40.08) .. controls (250.94,40.08) and (253.71,42.85) .. (253.71,46.28) -- (253.71,64.88) .. controls (253.71,68.3) and (250.94,71.08) .. (247.51,71.08) -- (159.91,71.08) .. controls (156.49,71.08) and (153.71,68.3) .. (153.71,64.88) -- cycle ;
\draw    (253,55) -- (268.71,55) ;
\draw [shift={(270.71,55)}, rotate = 180] [color={rgb, 255:red, 0; green, 0; blue, 0 }  ][line width=0.75]    (10.93,-3.29) .. controls (6.95,-1.4) and (3.31,-0.3) .. (0,0) .. controls (3.31,0.3) and (6.95,1.4) .. (10.93,3.29)   ;
\draw  [color={rgb, 255:red, 0; green, 0; blue, 0 }  ,draw opacity=1 ][fill={rgb, 255:red, 245; green, 166; blue, 35 }  ,fill opacity=0.53 ] (321.7,46) .. controls (321.7,42.58) and (324.48,39.8) .. (327.9,39.8) -- (346.5,39.8) .. controls (349.92,39.8) and (352.7,42.58) .. (352.7,46) -- (352.7,64.6) .. controls (352.7,68.02) and (349.92,70.8) .. (346.5,70.8) -- (327.9,70.8) .. controls (324.48,70.8) and (321.7,68.02) .. (321.7,64.6) -- cycle ;
\draw    (304,55) -- (319.71,55) ;
\draw [shift={(321.71,55)}, rotate = 180] [color={rgb, 255:red, 0; green, 0; blue, 0 }  ][line width=0.75]    (10.93,-3.29) .. controls (6.95,-1.4) and (3.31,-0.3) .. (0,0) .. controls (3.31,0.3) and (6.95,1.4) .. (10.93,3.29)   ;
\draw    (136,55) -- (151.71,55) ;
\draw [shift={(153.71,55)}, rotate = 180] [color={rgb, 255:red, 0; green, 0; blue, 0 }  ][line width=0.75]    (10.93,-3.29) .. controls (6.95,-1.4) and (3.31,-0.3) .. (0,0) .. controls (3.31,0.3) and (6.95,1.4) .. (10.93,3.29)   ;
\draw    (121.07,79.07) .. controls (167.45,114.42) and (280.25,113.75) .. (334.43,78.29) ;
\draw [shift={(119,77.43)}, rotate = 39.43] [color={rgb, 255:red, 0; green, 0; blue, 0 }  ][line width=0.75]    (10.93,-3.29) .. controls (6.95,-1.4) and (3.31,-0.3) .. (0,0) .. controls (3.31,0.3) and (6.95,1.4) .. (10.93,3.29)   ;

\draw (121.05,55.43) node  [font=\large] [align=left] {$h_{t}$};
\draw (203.02,48) node [anchor=north] [inner sep=0.75pt]  [font=\small] [align=left] {History \ encoder};
\draw (287.28,49) node [anchor=north] [inner sep=0.75pt]  [font=\small] [align=left] {MLP};
\draw (338.05,46) node [anchor=north] [inner sep=0.75pt]  [font=\large] [align=left] {$\xi_{t+1}$};
\draw (227.05,80) node [anchor=north] [inner sep=0.75pt]   [align=left] {$y_{t+1} \sim y|\xi_{t+1}, \theta ,h_{t}$};
\end{tikzpicture}
\caption{Policy network, $\designnet$}
\label{fig:architecture_policy}
\end{subfigure}
\caption{Overview of network architectures used in iDAD.\vspace{-10pt}} 
\label{fig:architectures}
\end{figure}
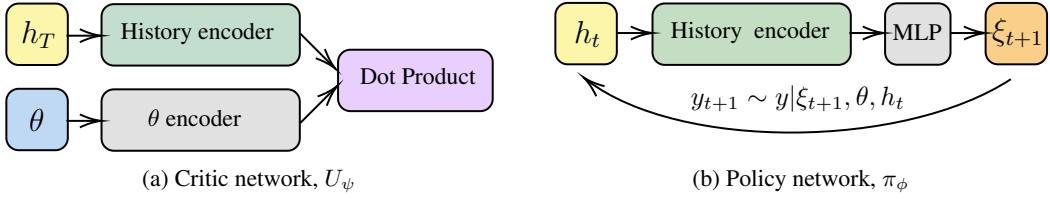

The iDAD approach involves the simultaneous training of the \emph{policy} $\pi_\phi$ and \emph{critic} $U_\psi$ networks.
It is essential to choose the neural architectures of these two components carefully to learn effective policies: poor choices of critic architecture will lead to loose, unrepresentative, bounds, while poor choices of policy architecture will directly lead to ineffective policies.
Good choices of architecture need to balance flexibility with ease of training, and will typically require the incorporation of problem-specific inductive biases.
A high-level summary of our architectures is shown in Figure~\ref{fig:architectures}.

The critic network, $U_{\psi}$, takes a \emph{complete} history $h_T$ and the parameter $\theta$ as input, and outputs a scalar.
Our suggested architecture first encodes the two inputs separately to representations of the same dimension, using a \emph{history encoder}, $E_{\psi_h}$, and a \emph{parameter encoder}, $E_{\psi_\theta}$, respectively.
The output of the critic is then simply taken as their dot product $\criticnet(h_T, \theta) \coloneqq E_{\psi_h}(h_T)^\top E_{\psi_\theta}(\theta)$; after training, the two encodings correspond to approximate sufficient statistics \citep{Chen2021}.
This setup corresponds to a separable critic architecture, as is commonly used in the representation learning literature~\cite{oord2018representation,bachman2019learning,chen2020simclr}.
While we use a simple MLP for $E_{\psi_\theta}$, the setup for $E_{\psi_h}$ varies with the context as we discuss below.

The policy network, $\pi_{\phi}$, takes the available history, $h_t$, as input, and outputs a design.
Our suggested architecture makes use of a history encoder, $E_{\phi_h}$, of the same form as $E_{\psi_h}$, except that it must now take in varying length inputs; its output remains a fixed dimensional embedding.
We then pass this embedding through an MLP to produce the design $\xi_{t+1}$.
At the next iteration of the experiment, the same policy network is then called again with the updated history $h_{t+1} = h_t \cup \{(\xi_{t+1},y_{t+1})\}$.

We use the same architecture for both history encoders, $E_{\psi_h}$ and $E_{\phi_h}$, but do not share network parameters between them.
This architecture first individually embeds each design--outcome pair $(\xi_t,y_t)$ to a corresponding representation, $r_t$, using a simple MLP that is shared across all time steps.
The produced history encoding is then an aggregation of these representations, with how this is done depending on whether the experiments are \emph{conditionally independent}, i.e.~$y_t \perp \!\!\! \perp h_{t-1} | \theta, \xi_t$, or not.

\citet{foster2021dad} proved that if experiments are conditionally independent, then the optimal policy is invariant to the order of the history. 
We prove that the same is true for the critic in Proposition~\ref{thm:invariance} in Appendix~\ref{sec:appendix:arch}.
In our setup, we can exploit this result by using a \emph{permutation invariant} aggregation strategy for $\{r_1,\dots,r_t\}$ when conditional independence holds.
The simplest approach to do this would be to use sum-pooling \citep{zaheer2017deep}, as was done in DAD. 
However, to improve on this, we instead propose using a more advanced permutation invariant architecture based on self-attention~\citep{vaswani2017attention,devlin2019bert,ramachandran2019standalone,huang2018musictransformer,parisotto2020stabilizing}, namely that of~\citet{parmar2018image}; we find this provides notable empirical gains.
When conditional independence does not hold, this approach is no longer appropriate and we instead use an LSTM~\cite{hochreiter1997long} for the aggregation.
See Appendix~\ref{sec:appendix:arch} for further details.

\section{Related work}
Adaptive policy-based BOED has only recently been introduced \citep{foster2021dad} and has not yet been extended to implicit models---the gap that this work addresses.
Previous approaches to adaptive experiments usually follow the two-step greedy procedure described in Section~\ref{sec:background}. 
Methods for MI/EIG estimation without likelihoods include the use of variational bounds \citep{foster2019variational,foster2020unified,kleinegesse2020mine} and ratio estimation \citep{kleinegesse2018efficient, kleinegesse2020sequential}; approximate Bayesian computation together with kernel density estimation \citep{Price2016}; and approximating the intractable likelihood first, for example via polynomial chaos expansion \citep{huan2013simulation}, followed by applying likelihood-based estimators, such as nested Monte Carlo~\cite{rainforth2018nesting}.
The maximization step in more traditional methods tends to rely on gradient-free optimization, including grid-search, evolutionary algorithms \citep{price2018induced}, Bayesian optimization \cite{kleinegesse2020sequential, foster2019variational}, or Gaussian process surrogates \citep{overstall2020bayesian}. More recently, gradient-based approaches have been introduced~\cite{foster2019variational,kleinegesse2020mine}, some of which allow the estimation and optimization simultaneously in a single stochastic-gradient scheme~\citep{huan2014gradient,foster2020unified, kleinegesse2021gradientbased}.
From a posterior estimation perspective, likelihood-free inference can be performed via approximate Bayesian computation \citep{Lintusaari2017, Sisson2018}, ratio estimation \citep{dutta2016likelihood}, conventional MCMC for methods that make tractable approximation to the likelihood \citep{huan2013simulation,huan2014gradient}, or as a byproduct of MI estimation \cite{foster2020unified, kleinegesse2018efficient, kleinegesse2020sequential, kleinegesse2021gradientbased}.

\section{Experiments}
\label{sec:experiments}

\begin{wraptable}[10]{r}{6.9cm}
\vspace{-0.4cm}
\setlength{\tabcolsep}{4pt}  
\renewcommand{\arraystretch}{0.9}  
	\caption{Key properties of considered methods.\vspace{-3pt}} 
	\label{tab:baselines}
	\begin{tabular}{lcccc}
	\toprule
		           & Adaptive & Real-time  & Implicit     \\
		\toprule
		Random            & \xmark   &   N/A &  \cmark   \\
		Equal interval      & \xmark   &  N/A  & \cmark \\
		MINEBED            & \xmark   &  N/A  & \cmark  \\
		SG-BOED             &  \xmark  &  N/A  & \cmark  \\
		Variational        &  \cmark  &  \xmark  & \cmark  \\
		DAD                &  \cmark  &  \cmark  &  \xmark   \\
		\textbf{iDAD}     &  \cmark  &  \cmark  &   \cmark    \\
		\bottomrule
	\end{tabular}
\end{wraptable}
We evaluate the performance of \textbf{iDAD} on a number of real-world experimental design problems and a range of baselines.
A summary of all the methods that we consider is given in Table~\ref{tab:baselines}.
Since we aim to perform adaptive experiments in \textit{real-time}, we focus mostly on baselines that do not require significant computational time during the experiment.
These include heuristic approaches that require no training, namely \textbf{equal} interval designs (when possible) and \textbf{random} designs, as well as static BOED approaches, where we, non-adaptively, choose all the designs prior to the experiment by optimising the mutual information objective of Equation~\eqref{eq:EIG_objective} with $\xi=\{\xi_1,...,\xi_T\}$ and $y=\{y_1,\dots,y_T\}$.
The static BOED approaches we consider are the \textbf{MINEBED} method of \cite{kleinegesse2020mine} and the likelihood-free ACE approach of \cite{foster2020unified}, where we use the prior as a proposal distribution, referring to this baseline as \textbf{SG-BOED}. al
We also implement the expensive traditional non-amortized myopic strategy described in Section~\ref{sec:background}, for which we use the \textbf{variational} approach of~\cite{foster2020unified}, with the Barber-Agakov bound~\citep{barber2003imalgorithm,foster2019variational},
at each experiment step (see Appendix~\ref{sec:app_variational_baseline} for details). 
Finally, where possible, we compare our method with DAD~\citep{foster2021dad}, in order to assess the performance gap that would arise if we had an analytic likelihood.
This comparison is done primarily for evaluation purposes---because it has access to the likelihood density, DAD serves as an upper bound on the performance iDAD can achieve; one should use explicit likelihood methods whenever possible.  

The main performance metric that we focus on is the total EIG, $\mathcal{I}_T(\pi)$, as given in \eqref{eq:our_unified_EIG}. 
In cases where the likelihood is available, 
we estimate the $\mathcal{I}_T(\pi)$ using the sPCE lower bound in~\eqref{eq:sPCE_objective} and its sister upper bound, the sequential Nested Monte Carlo bound~\citep[sNMC;][]{foster2021dad}. To ensure that the bounds are tight, we evaluate them with a large number of contrastive samples, i.e.~$L\ge10^5$. 
Where the likelihood is truly intractable, 
we assess the iDAD strategy in a more qualitative manner by looking at the optimal designs and approximate posteriors. 
For the adaptive experiments, we further consider the deployment time (i.e.~the time required to propose a design), which is a critical metric for our aims. 
All deployment times exclude the time needed to determine the first experiment as it can be computed up-front, during the training phase.  
Timings for training the policy itself are given in Appendix~\ref{sec:appendix_experiments}.

\subsection{Location Finding} \label{sec:experiments_locfin}

\begin{table}[t]
\centering
	\caption{Lower bounds on the total information, $\mathcal{I}_{10}(\pi)$, for the location finding experiment in Section~\ref{sec:experiments_locfin}. The bounds were estimated using $L=5\times10^5$ contrastive samples. Errors indicate $\pm1$ standard errors estimated over 4096  histories (128 for variational).  Corresponding upper bounds are given in Table~\ref{tab:locfin_T10_highd_upper} in Appendix~\ref{sec:appendix_experiments}. 
	\label{tab:locfin_T10_highd}}
	\smallskip
	\begin{tabular}{lcccc}
		Method \textbackslash~$\theta$ dim.   &  4D &  6D & 10D & 20D   \\
		\toprule
		Random &  4.791 $\pm$ 0.040	 &  3.468 $\pm$ 0.014  & 1.889 $\pm$ 0.011 &  0.552 $\pm$ 0.006 \\
		MINEBED       & 5.518 $\pm$ 0.028   &  4.221 $\pm$ 0.028 & 2.458 $\pm$ 0.029 & 0.801 $\pm$ 0.019   \\
		SG-BOED       & 5.547 $\pm$ 0.028   & 4.215 $\pm$ 0.030 & 2.454 $\pm$  0.029 & 0.803 $\pm$ 0.019 \\
		Variational   &  4.639   $\pm$ 0.144  & 3.625 $\pm$ 0.165 & 2.181 $\pm$ 0.151 & 0.669 $\pm$ 0.097 \\
		\textbf{iDAD} (NWJ)    &  \textbf{7.694} $\pm$ \textbf{0.045}  &  5.765 $\pm$ 0.036 & \textbf{3.252} $\pm$ \textbf{0.039} & \textbf{0.877} $\pm$ \textbf{0.022}   \\
		\textbf{iDAD} (InfoNCE) & \textbf{7.750} $\pm$ \textbf{0.039}  & \textbf{5.986} $\pm$  \textbf{0.037} & \textbf{3.251} $\pm$ \textbf{0.039} & \textbf{0.871}  $\pm$ \textbf{0.020} \\
		\midrule
		DAD     &  7.967 $\pm$ 0.034  & 6.300 $\pm$ 0.030 &  3.337 $\pm$ 0.039 & 0.937 $\pm$ 0.022    \\
		\bottomrule
	\end{tabular}
\end{table}

We first demonstrate our approach on the location finding experiment from \cite{foster2021dad}. 
Inspired by the acoustic energy attenuation model \cite{Sheng2005}, this experiment involves finding the locations of multiple hidden sources, each emitting a signal with intensity that decreases according to the inverse-square law.
The \emph{total intensity}---a superposition of these signals---can be measured noisily at any location. 
The design problem is choosing where to measure the total signal in order to uncover the sources.

\begin{table}[t]
\centering
	\caption{Lower and upper bounds on MI $\mathcal{I}_{10}(\pi)$ for different network architectures on location finding experiment using the InfoNCE bound. All estimates obtained as in Table~\ref{tab:locfin_T10_highd}.
		\label{tab:locfin_T10_attention}
	}
	\smallskip
	\begin{tabular}{llcc}
		Design & Critic  &  Lower bound  &  Upper bound \\
		\toprule
		\textbf{Attention} & \textbf{Attention}  &  \textbf{7.750} $\pm$ \textbf{0.039}  & \textbf{7.863} $\pm$ \textbf{0.043} \\
		Attention & Pooling   & 7.567 $\pm$ 0.037 &   7.632 $\pm$ 0.039  \\
		Pooling & Attention   & 7.398 $\pm$ 0.040 &   7.470 $\pm$ 0.042  \\
		Pooling & Pooling   & 7.135 $\pm$ 0.034   & 7.192  $\pm$  0.041     \\
		\bottomrule
	\end{tabular}
\end{table}

\begin{wraptable}[10]{r}{6.4cm}
\setlength{\tabcolsep}{4pt} 
\renewcommand{\arraystretch}{0.9} 
	\caption{Deployment time of adaptive methods in 2D, measured on a CPU. Errors were calculated on the basis of 10 runs.}
	\label{tab:locfin_T10_deployment_times}
	\begin{tabular}{lr}
		Method  &  Deployment time (sec.)  \\
		\toprule
		Variational &  2256.0\phantom{000} $\pm$ 1\% \\
		iDAD (NWJ)    &   0.0167 $\pm$ 2\%   \\
		iDAD (InfoNCE) &  0.0168 $\pm$ 2\%    \\ 
		\midrule
		DAD     &  0.0070 $\pm$ 6\% \\
		\bottomrule
	\end{tabular}
\end{wraptable}
In Table~\ref{tab:locfin_T10_highd} we can see that iDAD substantially outperforms all baselines including, perhaps surprisingly, the traditional (non-amortized) adaptive variational approach, despite its large computational cost shown Table~\ref{tab:locfin_T10_deployment_times}. The poor performance of the variational approach is likely driven by the inability of the mean-field variational family to capture the highly non-Gaussian true posterior, highlighting the detrimental effect that wrong posteriors can have on determining optimal designs when using the traditional sequential BOED approach.

Table~\ref{tab:locfin_T10_highd} further shows that the performance gap to the likelihood-based DAD method is small, even as the dimension of the design and parameter space grows. 
Though the information gained by all methods decreases with the dimensionality, this is to be expected: in higher dimensions it is inherently more difficult to infer the relative direction of the sources from observing their intensity.
Overall, this experiment demonstrates that iDAD is able to learn near-optimal amortized design policies without likelihoods, while being run in milliseconds at deployment.

\textbf{Ablation: attention to history.}~ We next assess the benefit of utilizing our proposed more sophisticated permutation invariant architectures, compared to the simple pooling of \citep{zaheer2017deep} used in \cite{foster2021dad}. Our approach incorporates attention layers into both networks that we train. This leads us to four possible combinations of network architectures.
Table~\ref{tab:locfin_T10_attention} compares the efficacy of the resulting design policies and strongly suggests that incorporating attention mechanisms in either and/or both networks improves performance, with inclusion in the design network particularly important.

We preform further ablation studies to investigate and demonstrate important properties of our method, such as its scalability with the number of experiments $T$, stability between different training runs and performance to errors in the design network (introduced by not training the network to convergence). Results and discussion are provided in Appendix~\ref{sec:app_experiments_locfin_furtherresults}. 

\subsection{Pharmacokinetic model} \label{sec:experiments_pharmaco}

\begin{figure}[t]
  \centering
  \includegraphics[width=0.31\textwidth]{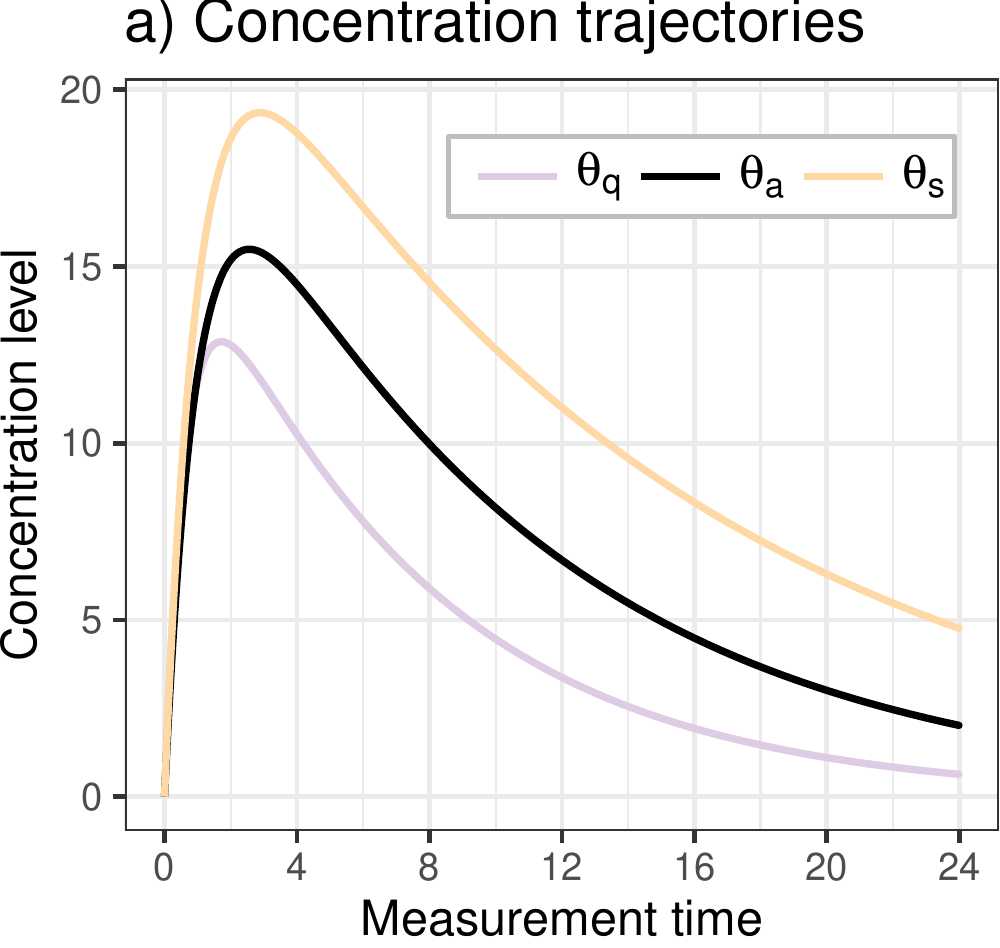}
  ~
  \includegraphics[width=0.31\textwidth]{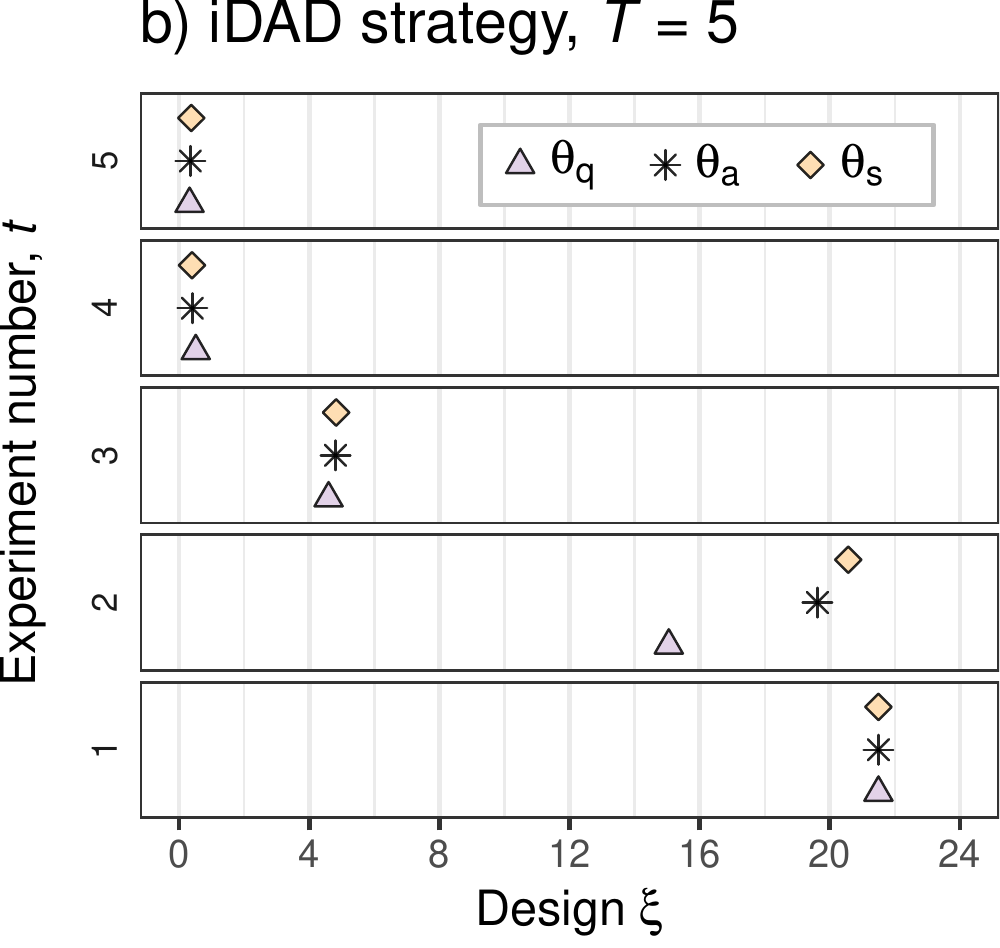}
  ~
   \includegraphics[width=0.31\textwidth]{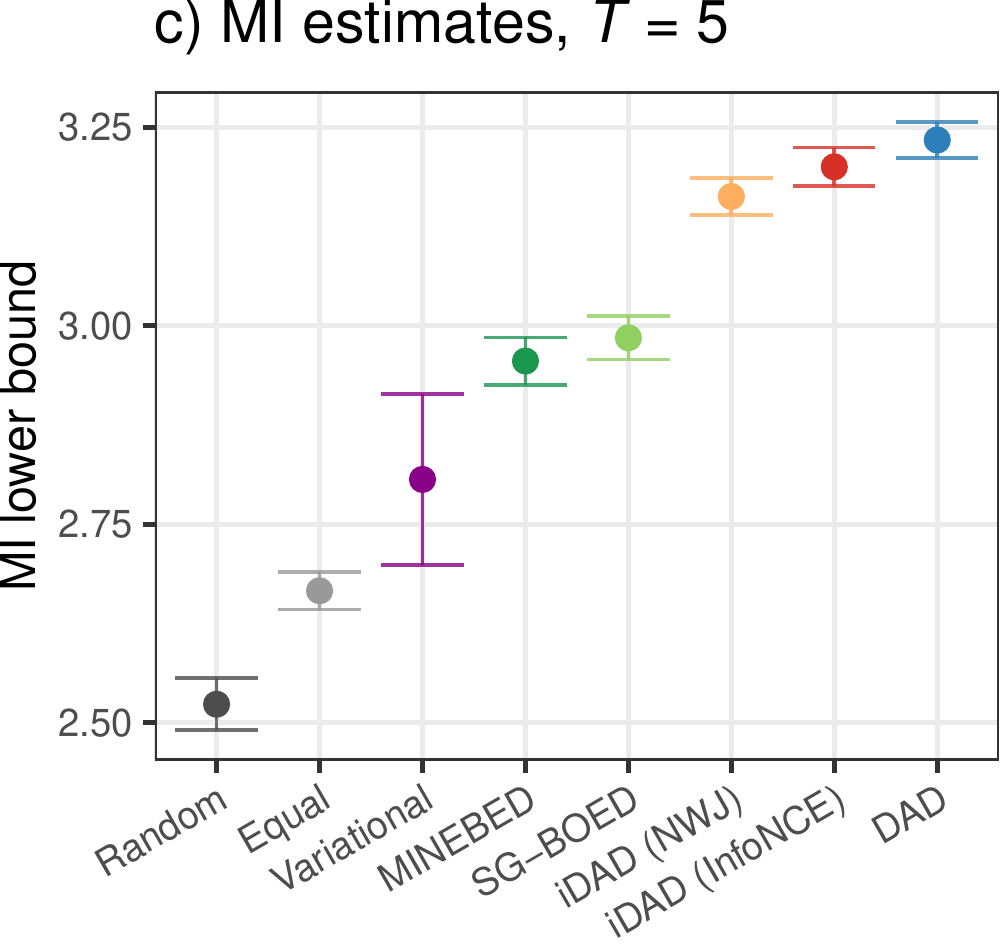}
  \caption{Plots for pharmacokinetics experiment. %
  a) Visualisation of model showing concentration level as a function of measurement time for 3 values of $\theta$, resulting in a quick ($\theta_q$), average ($\theta_a$), or slow ($\theta_s$) trajectory. 
  b) Designs selected by an iDAD policy trained with InfoNCE. 
  c) MI lower bounds achieved by iDAD and baselines. All estimates obtained as in Table~\ref{tab:locfin_T10_highd}.
  \vspace{-0.2cm}
  \label{fig:pk_trajectories_designs_eval}}
\end{figure}

Our next experiment is taken from the pharmacokinetics literature and has been studied in other recent works on BOED for implicit models \citep{kleinegesse2020mine, zhang2021sagabed}. 
Specifically, we consider the compartmental model of \cite{ryan2014towards}, for which the distribution of an administered drug through the body is governed by three parameters: absorption rate $k_\alpha$, elimination rate $k_e$, and volume $V$, which form the parameters of interest, i.e.~$\theta=(k_\alpha, k_e, V)$. Given $T=5$ patients, the design problem is to adaptively choose blood sampling times, $0\leq \xi_t \leq 24$ hours, for each, measured from the the point the drug was administered (with patient $2$ not being administered until after sampling patient $1$ etc).
Plausible concentration trajectories are shown in Figure~\ref{fig:pk_trajectories_designs_eval}{\color{red}a)}.
Full details and further results are given in Appendix~\ref{sec:appendix_experiments_pk}. 

We first qualitatively consider the design policy of iDAD (trained with the InfoNCE objective) in Figure~\ref{fig:pk_trajectories_designs_eval}{\color{red}b)}.
As we have not yet observed any data, the optimal design for the first patient (bottom row) is the same for all $\theta$. 
For the second patient, only guided by $\xi_1$ and the outcome $y_1$, iDAD is already able to distinguish between quickly and slowly decaying concentration trajectories: it proposes a significantly earlier measurement time for the quickly decaying trajectory (purple triangle, $\theta_q$) and later time for the slowly decaying one (yellow diamond, $\theta_s$).
For the third patient, iDAD always targets the peak of the drug concentration distribution which is quite similar for all $\theta$.
Measurements for the last two patients are made soon after the drug has been administered ($\sim15-30$ min), when concentration levels increase rapidly, to capture information about how quickly the drug is absorbed. 

\begin{wrapfigure}[13]{r}{0.32\textwidth}
  \vspace{-0.2cm}
  \centering
  \includegraphics[width=0.32\textwidth]{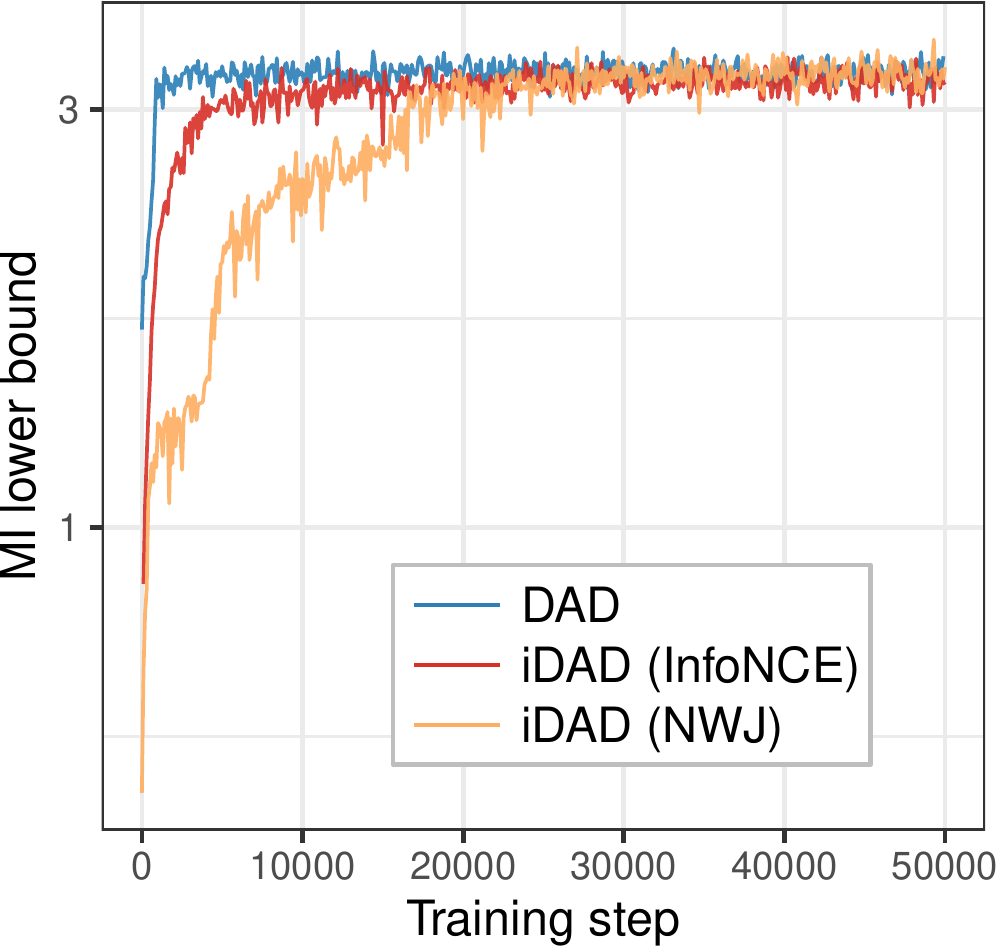}
  \vspace{-15pt}
  \caption{Convergence of MI lower bounds.}
  \label{fig:pk_loss}
\end{wrapfigure}
To provide more quantitative assessment and compare to our baselines, we again consider the final EIG values as shown in Figure~\ref{fig:pk_trajectories_designs_eval}{\color{red}c)}.
This reveals that the iDAD strategies perform best among the methods that are applicable to implicit models, confirming that the learnt policies propose superior designs. The performance gap to DAD, which relies on explicit likelihoods, is not statistically significant (at the 5\% level) for iDAD trained with InfoNCE, while significant, but still small, for NWJ. 

Finally, we consider the convergence of the iDAD networks under the different training objectives and compare to DAD for reference.
As shown in Figure~\ref{fig:pk_loss}, although all three converge to approximately the same value, they do so at rather different speeds: while DAD requires about 5000 gradient updates, implicit methods need longer training and tend to exhibit higher variance, particularly NWJ. 

\subsection{SIR Model} \label{sec:experiments_sir}

In this experiment, we demonstrate our approach on an implicit model from epidemiology. Namely, we consider a formulation of the stochastic SIR model~\citep{cook2008optimal} that is based on stochastic differential equations (SDEs), as done by~\citep{kleinegesse2021gradientbased}. Here, individuals in a fixed population belong to one of three categories: susceptible, infected or recovered.  
Susceptible people can become infected and then recover, with the dynamics of these two events being governed by two model parameters---the infection rate $\beta$ and the recovery rate $\gamma$.
Our aim is to determine the optimal  times $\tau$ at which to measure the  number of infected people, $I(\tau)$, in order to estimate the two parameters. This implicit model is challenging because data simulation is expensive, since we need to solve many SDEs, and experimental designs have a time-dependency. See Appendix~\ref{sec:appendix_sir} for full details.

\begin{wraptable}[11]{r}{5.6cm}
\centering
\caption{MI lower bounds ($\pm 1$ s.e.).}
	\begin{tabular}{lrr}
	\toprule
		Method              & Lower bound     \\
		\toprule
		Random              & 1.915 $\pm$ 0.032       \\ 
		Equal interval      & 2.669 $\pm$ 0.023       \\ 
		MINEBED      &  3.400 $\pm$     0.001       \\
		SG-BOED & 3.752 $\pm$ 0.020 \\
		iDAD (NWJ) & 3.869 $\pm$ 0.001    \\ 
		\textbf{iDAD} (InfoNCE) &  \textbf{3.915} $\pm$  \textbf{0.020}   \\ 
		\bottomrule
	\end{tabular}
	\label{tab:sir_results}
	\vspace{-0.2cm}
\end{wraptable}
We train a iDAD networks to perform $T=5$ experiments and compare against random, equal interval, and static design baselines; DAD cannot be run because the problem corresponds to a true implicit model. 
Table~\ref{tab:sir_results} shows lower bound estimates on the MI and demonstrates that iDAD outperforms all compared methods. 
Note that a degree of caution is required when analysing the results, as they are influenced by unavoidable biases in the estimation process. 
Namely, a critic is still required to estimate the MI lower bound, and there may be variations in the effectiveness of these critics, with less effective ones corresponding to looser bounds and therefore underestimating the true MI.
Nonetheless, for other models where such checks are possible, we have found the bounds to be relatively tight, while, even if this turns out not to be the case here, the fact that the critics for the static approaches are easier to train should mean our relative evaluations for iDAD (and Random) are still conservative compared to the other baselines.

Figure~\ref{fig:sir_results} further demonstrates important qualitative results for this model. Figure~\ref{fig:sir_results}{\color{red}a)} shows different epidemic trajectories, i.e.~the number of infected $I(\tau)$ people as a function of measurement time $\tau$, whilst \ref{fig:sir_results}{\color{red}b)} plots their corresponding designs obtained from the learned iDAD policy.
Importantly, diseases with a significantly different profile, e.g.~a slow ($R=1.3$) or a fast ($R=8.0$) spread result in different sets of optimal designs, highlighting the adaptivity of iDAD. 
Finally, Figure~\ref{fig:sir_results}{\color{red}c)} shows an example posterior distribution estimate from the learnt iDAD critic network, which we see is consistent with the ground truth parameters.

\begin{figure}[t]
  \centering
  \includegraphics[width=0.31\textwidth]{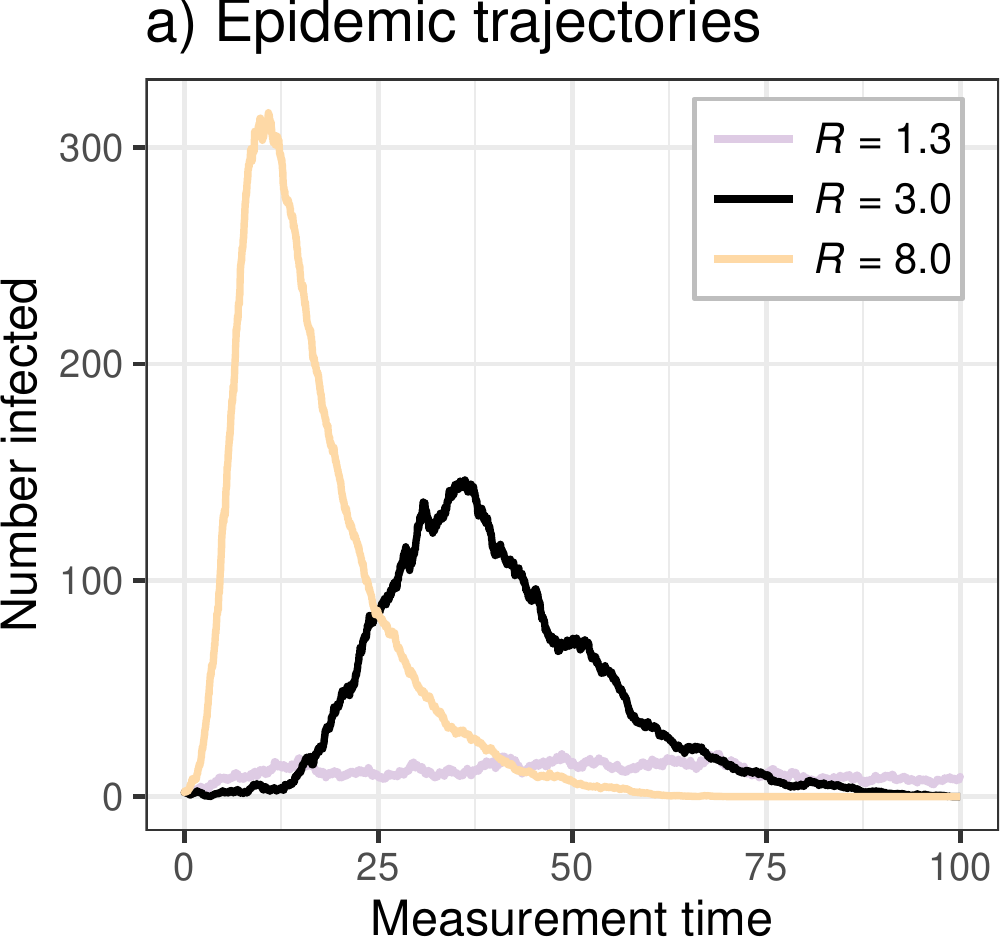}
  ~
  \includegraphics[width=0.31\textwidth]{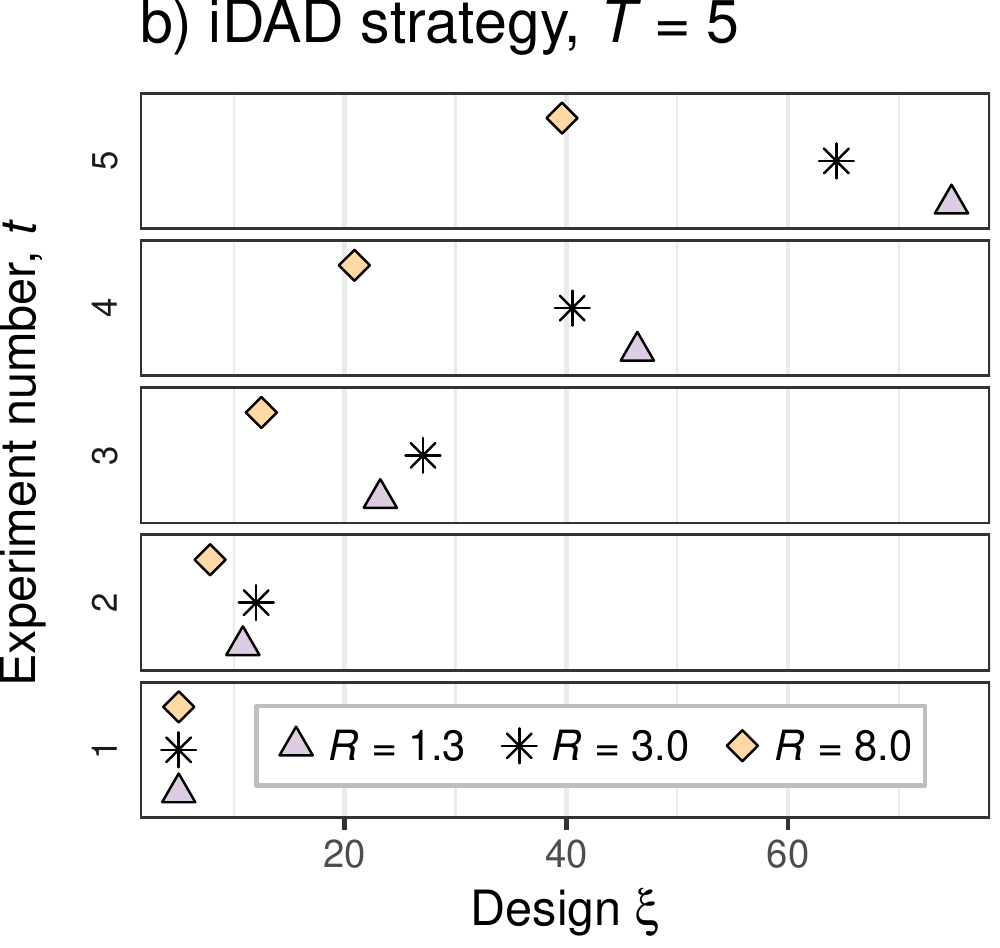}
  ~ 
   \includegraphics[width=0.31\textwidth]{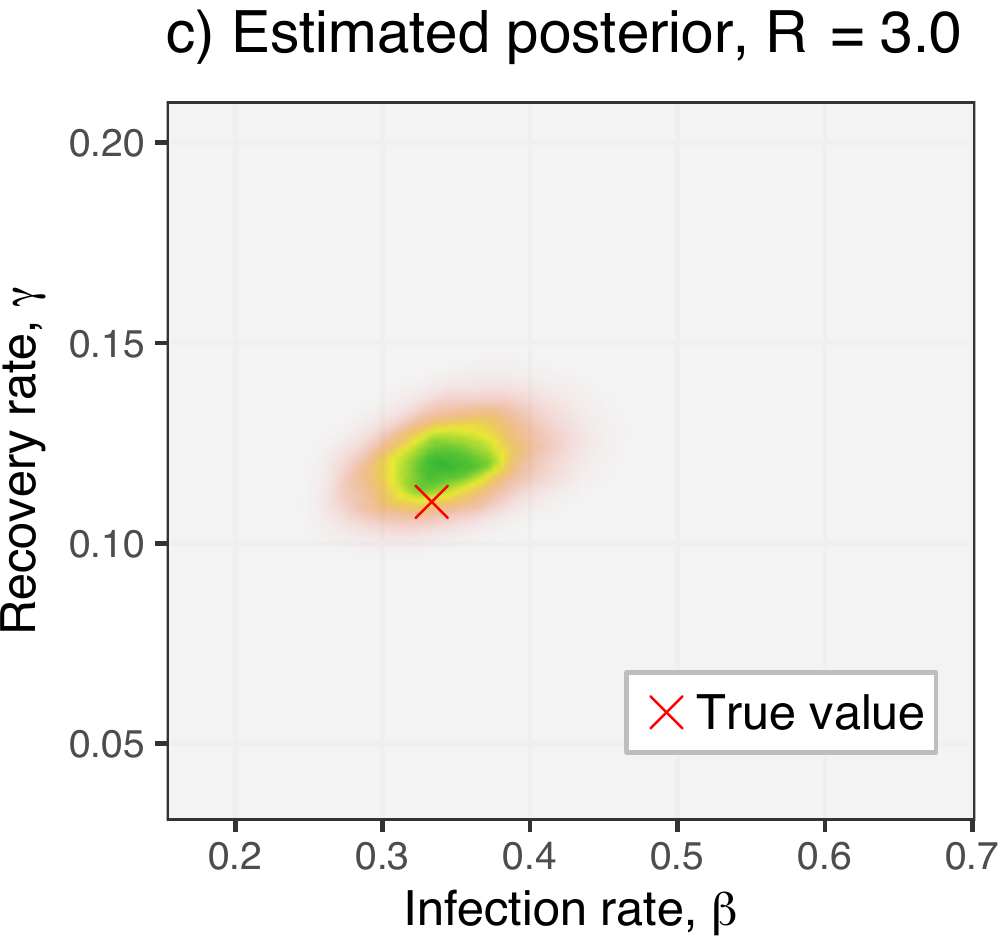}
  \caption{a) Epidemic trajectories for $3$ realization of $(\beta, \gamma)$ with different reproduction numbers $R=\beta / \gamma$. b) Designs selected by an iDAD policy trained with NWJ. c) Example posterior estimates from the critic network 
  given data generated with the ground-truth parameters shown by the red cross.}
  \label{fig:sir_results}
  \vspace{-0.3cm}
\end{figure}

\section{Discussion}
\label{sec:discussion}
{\bf Limitations.}
The benefit that iDAD can be used in live experiments comes at the cost of substantial training that can be computationally expensive. 
However, this is mitigated by its amortization of the adaptive design process, such that only one network needs training, even if we have multiple experiment instances.
The cost--performance trade-off can also be directly controlled by judicious choices of architecture and the amount of training performed.
Another natural limitation is that the use of gradients naturally restricts the approach to continuous design settings, something which future work might look to address. 

{\bf Conclusions.}
In this paper we introduced iDAD---the first policy-based adaptive BOED method that can be applied to implicit models.
By training a design network without likelihoods upfront, iDAD is thus the first method that allows real-time adaptive experiments for simulator-based models.
In our experiments, iDAD performed significantly better than all likelihood-free baselines. 
Further, by using models where the likelihood is available as a test bed, 
we found that it was able to almost match the analogous likelihood-based adaptive approach, which acts as an upper bound on what might be achieved without access to the likelihood itself. 
In conclusion, we believe iDAD marks a step change in Bayesian experimental design for \emph{implicit} models, allowing designs to be proposed quickly, adaptively, and non-myopically during the live experiment.

\begin{ack}
DRI is supported by EPSRC through the Modern Statistics and Statistical Machine Learning (StatML) CDT programme, grant no. EP/S023151/1. AF gratefully acknowledges funding from EPSRC grant no. EP/N509711/1. SK was supported in part by the EPSRC Centre for Doctoral Training in Data Science, funded by the UK Engineering and Physical Sciences Research Council (grant EP/L016427/1) and the University of Edinburgh.

\end{ack}

\bibliographystyle{plainnat}
\bibliography{refs}

\begin{thebibliography}{63}
\providecommand{\natexlab}[1]{#1}
\providecommand{\url}[1]{\texttt{#1}}
\expandafter\ifx\csname urlstyle\endcsname\relax
  \providecommand{\doi}[1]{doi: #1}\else
  \providecommand{\doi}{doi: \begingroup \urlstyle{rm}\Url}\fi

\bibitem[Allen et~al.(2008)Allen, Allen, Arciniega, and
  Greenwood]{allen2008construction}
Edward~J. Allen, Linda J.~S. Allen, Armando Arciniega, and Priscilla~E.
  Greenwood.
\newblock Construction of equivalent stochastic differential equation models.
\newblock \emph{Stochastic Analysis and Applications}, 26\penalty0
  (2):\penalty0 274--297, 2008.

\bibitem[Allen(2017)]{allen2017}
Linda~J.S. Allen.
\newblock A primer on stochastic epidemic models: Formulation, numerical
  simulation, and analysis.
\newblock \emph{Infectious Disease Modelling}, 2\penalty0 (2):\penalty0
  128--142, 2017.
\newblock ISSN 2468-0427.
\newblock \doi{https://doi.org/10.1016/j.idm.2017.03.001}.

\bibitem[Bachman et~al.(2019)Bachman, Hjelm, and
  Buchwalter]{bachman2019learning}
Philip Bachman, R~Devon Hjelm, and William Buchwalter.
\newblock Learning representations by maximizing mutual information across
  views.
\newblock \emph{arXiv preprint arXiv:1906.00910}, 2019.

\bibitem[Barber and Agakov(2003)]{barber2003imalgorithm}
David Barber and Felix Agakov.
\newblock The {IM} algorithm: A variational approach to information
  maximization.
\newblock In \emph{Proceedings of the 16th International Conference on Neural
  Information Processing Systems}, NIPS'03, page 201–208, Cambridge, MA, USA,
  2003. MIT Press.

\bibitem[Baydin et~al.(2018)Baydin, Pearlmutter, Radul, and
  Siskind]{baydin2018ad}
Atilim~Gunes Baydin, Barak~A Pearlmutter, Alexey~Andreyevich Radul, and
  Jeffrey~Mark Siskind.
\newblock Automatic differentiation in machine learning: a survey.
\newblock \emph{Journal of machine learning research}, 18, 2018.

\bibitem[Bingham et~al.(2018)Bingham, Chen, Jankowiak, Obermeyer, Pradhan,
  Karaletsos, Singh, Szerlip, Horsfall, and Goodman]{pyro}
Eli Bingham, Jonathan~P Chen, Martin Jankowiak, Fritz Obermeyer, Neeraj
  Pradhan, Theofanis Karaletsos, Rohit Singh, Paul Szerlip, Paul Horsfall, and
  Noah~D Goodman.
\newblock Pyro: Deep universal probabilistic programming.
\newblock \emph{Journal of Machine Learning Research}, 2018.

\bibitem[Chaloner and Verdinelli(1995)]{chaloner1995}
Kathryn Chaloner and Isabella Verdinelli.
\newblock {B}ayesian experimental design: A review.
\newblock \emph{Statistical Science}, pages 273--304, 1995.

\bibitem[Chen et~al.(2020)Chen, Kornblith, Norouzi, and Hinton]{chen2020simclr}
Ting Chen, Simon Kornblith, Mohammad Norouzi, and Geoffrey Hinton.
\newblock A simple framework for contrastive learning of visual
  representations.
\newblock In \emph{Proceedings of the 37th International Conference on Machine
  Learning}, volume 119 of \emph{Proceedings of Machine Learning Research},
  pages 1597--1607. PMLR, 13--18 Jul 2020.

\bibitem[Chen et~al.(2021)Chen, Zhang, Gutmann, Courville, and Zhu]{Chen2021}
Yanzhi Chen, Dinghuai Zhang, Michael~U. Gutmann, Aaron Courville, and Zhanxing
  Zhu.
\newblock Neural approximate sufficient statistics for implicit models.
\newblock In \emph{International Conference on Learning Representations}, 2021.

\bibitem[Cook et~al.(2008)Cook, Gibson, and Gilligan]{cook2008optimal}
Alex~R Cook, Gavin~J Gibson, and Christopher~A Gilligan.
\newblock Optimal observation times in experimental epidemic processes.
\newblock \emph{Biometrics}, 64\penalty0 (3):\penalty0 860--868, 2008.

\bibitem[Cranmer et~al.(2020)Cranmer, Brehmer, and Louppe]{Cranmer2020}
Kyle Cranmer, Johann Brehmer, and Gilles Louppe.
\newblock The frontier of simulation-based inference.
\newblock \emph{Proceedings of the National Academy of Sciences}, 2020.

\bibitem[Dehideniya et~al.(2018)Dehideniya, Drovandi, and
  McGree]{dehideniya2018}
Mahasen~B. Dehideniya, Christopher~C. Drovandi, and James~M. McGree.
\newblock Optimal {B}ayesian design for discriminating between models with
  intractable likelihoods in epidemiology.
\newblock \emph{Computational Statistics \& Data Analysis}, 124:\penalty0
  277--297, 2018.
\newblock ISSN 0167-9473.
\newblock \doi{https://doi.org/10.1016/j.csda.2018.03.004}.

\bibitem[Devlin et~al.(2019)Devlin, Chang, Lee, and Toutanova]{devlin2019bert}
Jacob Devlin, Ming~Wei Chang, Kenton Lee, and Kristina Toutanova.
\newblock {BERT: Pre-training of deep bidirectional transformers for language
  understanding}.
\newblock In \emph{Proceedings of the 2019 Conference of the North {A}merican
  Chapter of the Association for Computational Linguistics: Human Language
  Technologies}, volume~1, pages 4171--4186, 2019.

\bibitem[Dubois et~al.(2020)Dubois, Gordon, and Foong]{dubois2020npf}
Yann Dubois, Jonathan Gordon, and Andrew~YK Foong.
\newblock Neural {P}rocess {F}amily.
\newblock http://yanndubs.github.io/Neural-Process-Family/, September 2020.

\bibitem[Foster et~al.(2019)Foster, Jankowiak, Bingham, Horsfall, Teh,
  Rainforth, and Goodman]{foster2019variational}
Adam Foster, Martin Jankowiak, Elias Bingham, Paul Horsfall, Yee~Whye Teh,
  Thomas Rainforth, and Noah Goodman.
\newblock Variational {B}ayesian optimal experimental design.
\newblock In \emph{Advances in Neural Information Processing Systems 32}, pages
  14036--14047. Curran Associates, Inc., 2019.

\bibitem[Foster et~al.(2020)Foster, Jankowiak, O’Meara, Teh, and
  Rainforth]{foster2020unified}
Adam Foster, Martin Jankowiak, Matthew O’Meara, Yee~Whye Teh, and Tom
  Rainforth.
\newblock A unified stochastic gradient approach to designing
  {B}ayesian-optimal experiments.
\newblock In \emph{International Conference on Artificial Intelligence and
  Statistics}, pages 2959--2969. PMLR, 2020.

\bibitem[Foster et~al.(2021)Foster, Ivanova, Malik, and
  Rainforth]{foster2021dad}
Adam Foster, Desi~R Ivanova, Ilyas Malik, and Tom Rainforth.
\newblock Deep adaptive design: Amortizing sequential {B}ayesian experimental
  design.
\newblock \emph{Proceedings of the 38th International Conference on Machine
  Learning (ICML), PMLR 139}, 2021.

\bibitem[Gelman et~al.(2013)Gelman, Carlin, Stern, Dunson, Vehtari, and
  Rubin]{gelman2013bayesian}
Andrew Gelman, John~B Carlin, Hal~S Stern, David~B Dunson, Aki Vehtari, and
  Donald~B Rubin.
\newblock \emph{{B}ayesian data analysis}.
\newblock Chapman and Hall/CRC, 2013.

\bibitem[Graham and Storkey(2017)]{graham2017asymptotically}
Matthew Graham and Amos Storkey.
\newblock Asymptotically exact inference in differentiable generative models.
\newblock In \emph{Artificial Intelligence and Statistics}, pages 499--508.
  PMLR, 2017.

\bibitem[Green et~al.(2015)Green, Latuszynski, Pereyra, and Robert]{Green2015}
PeterJ. Green, Krzysztof Latuszynski, Marcelo Pereyra, and Christian~P. Robert.
\newblock {B}ayesian computation: a summary of the current state, and samples
  backwards and forwards.
\newblock \emph{Statistics and Computing}, 25\penalty0 (4):\penalty0 835--862,
  2015.
\newblock \doi{10.1007/s11222-015-9574-5}.

\bibitem[Hainy et~al.(2016)Hainy, Drovandi, and
  McGree]{Hainy2016likelihoodfree}
Markus Hainy, Christopher~C. Drovandi, and James~M. McGree.
\newblock Likelihood-free extensions for {B}ayesian sequentially designed
  experiments.
\newblock In Joachim Kunert, Christine~H. M{\"u}ller, and Anthony~C. Atkinson,
  editors, \emph{mODa 11 - Advances in Model-Oriented Design and Analysis},
  pages 153--161. Springer International Publishing, 2016.

\bibitem[Hochreiter and Schmidhuber(1997)]{hochreiter1997long}
Sepp Hochreiter and J{\"u}rgen Schmidhuber.
\newblock Long short-term memory.
\newblock \emph{Neural computation}, 9\penalty0 (8):\penalty0 1735--1780, 1997.

\bibitem[Huan and Marzouk(2014)]{huan2014gradient}
Xun Huan and Youssef Marzouk.
\newblock Gradient-based stochastic optimization methods in {B}ayesian
  experimental design.
\newblock \emph{International Journal for Uncertainty Quantification},
  4\penalty0 (6), 2014.

\bibitem[Huan and Marzouk(2013)]{huan2013simulation}
Xun Huan and Youssef~M Marzouk.
\newblock Simulation-based optimal {B}ayesian experimental design for nonlinear
  systems.
\newblock \emph{Journal of Computational Physics}, 232\penalty0 (1):\penalty0
  288--317, 2013.

\bibitem[Huang et~al.(2019)Huang, Vaswani, Uszkoreit, Shazeer, Simon,
  Hawthorne, Dai, Hoffman, Dinculescu, and Eck]{huang2018musictransformer}
Cheng Zhi~Anna Huang, Ashish Vaswani, Jakob Uszkoreit, Noam Shazeer, Ian Simon,
  Curtis Hawthorne, Andrew~M. Dai, Matthew~D. Hoffman, Monica Dinculescu, and
  Douglas Eck.
\newblock {Music transformer: Generating music with long-term structure}, 2019.
\newblock ISSN 23318422.

\bibitem[Kingma and Ba(2014)]{kingma2014adam}
Diederik~P Kingma and Jimmy Ba.
\newblock Adam: {A} method for stochastic optimization.
\newblock \emph{arXiv preprint arXiv:1412.6980}, 2014.

\bibitem[Kleinegesse and Gutmann(2019)]{kleinegesse2018efficient}
S.~Kleinegesse and M.U. Gutmann.
\newblock Efficient {B}ayesian experimental design for implicit models.
\newblock In Kamalika Chaudhuri and Masashi Sugiyama, editors,
  \emph{Proceedings of the International Conference on Artificial Intelligence
  and Statistics (AISTATS)}, volume~89 of \emph{Proceedings of Machine Learning
  Research}, pages 1584--1592. PMLR, 2019.

\bibitem[Kleinegesse and Gutmann(2020)]{kleinegesse2020mine}
Steven Kleinegesse and Michael Gutmann.
\newblock {B}ayesian experimental design for implicit models by mutual
  information neural estimation.
\newblock In \emph{Proceedings of the 37th International Conference on Machine
  Learning}, Proceedings of Machine Learning Research, pages 5316--5326. PMLR,
  2020.

\bibitem[Kleinegesse and Gutmann(2021)]{kleinegesse2021gradientbased}
Steven Kleinegesse and Michael~U. Gutmann.
\newblock Gradient-based {B}ayesian experimental design for implicit models
  using mutual information lower bounds.
\newblock \emph{arXiv preprint arXiv:2105.04379}, 2021.

\bibitem[Kleinegesse et~al.(2021)Kleinegesse, Drovandi, and
  Gutmann]{kleinegesse2020sequential}
Steven Kleinegesse, Christopher Drovandi, and Michael~U. Gutmann.
\newblock Sequential {B}ayesian experimental design for implicit models via
  mutual information.
\newblock \emph{Bayesian Analysis}, pages 1 -- 30, 2021.
\newblock \doi{10.1214/20-BA1225}.

\bibitem[Lacoste et~al.(2019)Lacoste, Luccioni, Schmidt, and
  Dandres]{lacoste2019quantifying}
Alexandre Lacoste, Alexandra Luccioni, Victor Schmidt, and Thomas Dandres.
\newblock Quantifying the carbon emissions of machine learning.
\newblock \emph{arXiv preprint arXiv:1910.09700}, 2019.

\bibitem[Lindley(1956)]{lindley1956}
Dennis~V Lindley.
\newblock On a measure of the information provided by an experiment.
\newblock \emph{The Annals of Mathematical Statistics}, pages 986--1005, 1956.

\bibitem[Lintusaari et~al.(2017)Lintusaari, Gutmann, Dutta, Kaski, and
  Corander]{Lintusaari2017}
J.~Lintusaari, M.U. Gutmann, R.~Dutta, S.~Kaski, and J.~Corander.
\newblock Fundamentals and recent developments in approximate {B}ayesian
  computation.
\newblock \emph{Systematic Biology}, 66\penalty0 (1):\penalty0 e66--e82,
  January 2017.

\bibitem[McAllester and Stratos(2020)]{mcallester2020formal}
David McAllester and Karl Stratos.
\newblock Formal limitations on the measurement of mutual information.
\newblock In \emph{Proceedings of the Twenty Third International Conference on
  Artificial Intelligence and Statistics}, volume 108 of \emph{Proceedings of
  Machine Learning Research}, pages 875--884. PMLR, 2020.

\bibitem[Mohamed et~al.(2020)Mohamed, Rosca, Figurnov, and
  Mnih]{mohamed2020monte}
Shakir Mohamed, Mihaela Rosca, Michael Figurnov, and Andriy Mnih.
\newblock {Monte Carlo} gradient estimation in machine learning.
\newblock \emph{Journal of Machine Learning Research}, 21\penalty0
  (132):\penalty0 1--62, 2020.

\bibitem[Myung et~al.(2013)Myung, Cavagnaro, and Pitt]{myung2013}
Jay~I Myung, Daniel~R Cavagnaro, and Mark~A Pitt.
\newblock A tutorial on adaptive design optimization.
\newblock \emph{Journal of mathematical psychology}, 57\penalty0
  (3-4):\penalty0 53--67, 2013.

\bibitem[Nguyen et~al.(2010)Nguyen, Wainwright, and Jordan]{nguyen2010nwj}
Xuanlong Nguyen, Martin~J. Wainwright, and Michael~I. Jordan.
\newblock {Estimating divergence functionals and the likelihood ratio by convex
  risk minimization}.
\newblock \emph{IEEE Transactions on Information Theory}, 56\penalty0 (11),
  2010.
\newblock ISSN 00189448.
\newblock \doi{10.1109/TIT.2010.2068870}.

\bibitem[Overstall and McGree(2020)]{overstall2020bayesian}
Antony Overstall and James McGree.
\newblock {B}ayesian design of experiments for intractable likelihood models
  using coupled auxiliary models and multivariate emulation.
\newblock \emph{Bayesian Analysis}, 15\penalty0 (1):\penalty0 103 -- 131, 2020.
\newblock \doi{10.1214/19-BA1144}.

\bibitem[Parisotto et~al.(2020)Parisotto, Song, Rae, Pascanu, Gulcehre,
  Jayakumar, Jaderberg, Kaufman, Clark, Noury, Botvinick, Heess, and
  Hadsell]{parisotto2020stabilizing}
Emilio Parisotto, Francis Song, Jack Rae, Razvan Pascanu, Caglar Gulcehre,
  Siddhant Jayakumar, Max Jaderberg, Rapha{\"e}l~Lopez Kaufman, Aidan Clark,
  Seb Noury, Matthew Botvinick, Nicolas Heess, and Raia Hadsell.
\newblock Stabilizing transformers for reinforcement learning.
\newblock In \emph{Proceedings of the 37th International Conference on Machine
  Learning}, volume 119 of \emph{Proceedings of Machine Learning Research},
  pages 7487--7498. PMLR, 2020.

\bibitem[Parmar et~al.(2018)Parmar, Vaswani, Uszkoreit, Kaiser, Shazeer, Ku,
  and Tran]{parmar2018image}
Niki Parmar, Ashish Vaswani, Jakob Uszkoreit, Lukasz Kaiser, Noam Shazeer,
  Alexander Ku, and Dustin Tran.
\newblock Image transformer.
\newblock In \emph{International Conference on Machine Learning}, pages
  4055--4064. PMLR, 2018.

\bibitem[Paszke et~al.(2019)Paszke, Gross, Massa, Lerer, Bradbury, Chanan,
  Killeen, Lin, Gimelshein, Antiga, Desmaison, Kopf, Yang, DeVito, Raison,
  Tejani, Chilamkurthy, Steiner, Fang, Bai, and Chintala]{paszke2019pytorch}
Adam Paszke, Sam Gross, Francisco Massa, Adam Lerer, James Bradbury, Gregory
  Chanan, Trevor Killeen, Zeming Lin, Natalia Gimelshein, Luca Antiga, Alban
  Desmaison, Andreas Kopf, Edward Yang, Zachary DeVito, Martin Raison, Alykhan
  Tejani, Sasank Chilamkurthy, Benoit Steiner, Lu~Fang, Junjie Bai, and Soumith
  Chintala.
\newblock Pytorch: An imperative style, high-performance deep learning library.
\newblock In \emph{Advances in Neural Information Processing Systems 32}, pages
  8024--8035. Curran Associates, Inc., 2019.

\bibitem[Poole et~al.(2019)Poole, Ozair, van~den Oord, Alemi, and
  Tucker]{poole2018variational}
Ben Poole, Sherjil Ozair, A{\"a}ron van~den Oord, Alex Alemi, and George
  Tucker.
\newblock On variational bounds of mutual information.
\newblock In \emph{International Conference on Machine Learning}, pages
  5171--5180, 2019.

\bibitem[Price et~al.(2016)Price, Bean, Ross, and Tuke]{Price2016}
David~J. Price, Nigel~G. Bean, Joshua~V. Ross, and Jonathan Tuke.
\newblock On the efficient determination of optimal {B}ayesian experimental
  designs using {ABC}: A case study in optimal observation of epidemics.
\newblock \emph{Journal of Statistical Planning and Inference}, 172:\penalty0
  1--15, May 2016.

\bibitem[Price et~al.(2018)Price, Bean, Ross, and Tuke]{price2018induced}
David~J Price, Nigel~G Bean, Joshua~V Ross, and Jonathan Tuke.
\newblock An induced natural selection heuristic for finding optimal {B}ayesian
  experimental designs.
\newblock \emph{{Computational Statistics \& Data Analysis}}, 126:\penalty0
  112--124, 2018.

\bibitem[Rainforth(2017)]{rainforth2017thesis}
Tom Rainforth.
\newblock \emph{{Automating Inference, Learning, and Design using Probabilistic
  Programming}}.
\newblock PhD thesis, University of Oxford, 2017.

\bibitem[Rainforth et~al.(2018)Rainforth, Cornish, Yang, Warrington, and
  Wood]{rainforth2018nesting}
Tom Rainforth, Rob Cornish, Hongseok Yang, Andrew Warrington, and Frank Wood.
\newblock On nesting {Monte Carlo} estimators.
\newblock In \emph{International Conference on Machine Learning}, pages
  4267--4276. PMLR, 2018.

\bibitem[Ramachandran et~al.(2019)Ramachandran, Parmar, Vaswani, Bello,
  Levskaya, and Shlens]{ramachandran2019standalone}
Prajit Ramachandran, Niki Parmar, Ashish Vaswani, Irwan Bello, Anselm Levskaya,
  and Jon Shlens.
\newblock Stand-alone self-attention in vision models.
\newblock In \emph{Advances in Neural Information Processing Systems},
  volume~32. Curran Associates, Inc., 2019.

\bibitem[Rezende et~al.(2014)Rezende, Mohamed, and
  Wierstra]{rezende2014stochastic}
Danilo~Jimenez Rezende, Shakir Mohamed, and Daan Wierstra.
\newblock Stochastic backpropagation and approximate inference in deep
  generative models.
\newblock In \emph{Proceedings of the 31st International Conference on Machine
  Learning}, volume~32, pages 1278--1286, 2014.

\bibitem[Robbins and Monro(1951)]{robbins1951stochastic}
Herbert Robbins and Sutton Monro.
\newblock A stochastic approximation method.
\newblock \emph{The annals of mathematical statistics}, pages 400--407, 1951.

\bibitem[Ryan et~al.(2014)Ryan, Drovandi, Thompson, and
  Pettitt]{ryan2014towards}
Elizabeth~G. Ryan, Christopher~C. Drovandi, M.~Helen Thompson, and Anthony~N.
  Pettitt.
\newblock Towards {B}ayesian experimental design for nonlinear models that
  require a large number of sampling times.
\newblock \emph{Computational Statistics \& Data Analysis}, 70:\penalty0
  45--60, 2014.
\newblock ISSN 0167-9473.
\newblock \doi{https://doi.org/10.1016/j.csda.2013.08.017}.

\bibitem[Ryan et~al.(2016)Ryan, Drovandi, McGree, and Pettitt]{ryan2016review}
Elizabeth~G Ryan, Christopher~C Drovandi, James~M McGree, and Anthony~N
  Pettitt.
\newblock A review of modern computational algorithms for {B}ayesian optimal
  design.
\newblock \emph{International Statistical Review}, 84\penalty0 (1):\penalty0
  128--154, 2016.

\bibitem[Shababo et~al.(2013)Shababo, Paige, Pakman, and
  Paninski]{shababo2013bayesian}
Ben Shababo, Brooks Paige, Ari Pakman, and Liam Paninski.
\newblock {B}ayesian inference and online experimental design for mapping
  neural microcircuits.
\newblock In \emph{Advances in Neural Information Processing Systems}, pages
  1304--1312, 2013.

\bibitem[Sheng and Hu(2005)]{Sheng2005}
Xiaohong Sheng and Yu~Hen Hu.
\newblock {Maximum likelihood multiple-source localization using acoustic
  energy measurements with wireless sensor networks}.
\newblock \emph{IEEE Transactions on Signal Processing}, 2005.
\newblock ISSN 1053587X.
\newblock \doi{10.1109/TSP.2004.838930}.

\bibitem[Sisson et~al.(2018)Sisson, Fan, and Beaumont]{Sisson2018}
S.A. Sisson, Y.~Fan, and M.~Beaumont.
\newblock \emph{Handbook of Approximate {B}ayesian Computation}.
\newblock Chapman \& Hall/CRC Handbooks of Modern Statistical Methods. CRC
  Press, 2018.
\newblock ISBN 9781351643467.

\bibitem[Song and Ermon(2020)]{song2020limitations}
Jiaming Song and Stefano Ermon.
\newblock {Understanding the limitations of variational mutual information
  estimators}.
\newblock In \emph{International Conference on Learning Representations}, 2020.

\bibitem[Thomas et~al.(2016)Thomas, Dutta, Corander, Kaski, and
  Gutmann]{dutta2016likelihood}
Owen Thomas, Ritabrata Dutta, Jukka Corander, Samuel Kaski, and Michael~U
  Gutmann.
\newblock Likelihood-free inference by ratio estimation.
\newblock \emph{arXiv preprint arXiv:1611.10242}, 2016.

\bibitem[van~den Oord et~al.(2018)van~den Oord, Li, and
  Vinyals]{oord2018representation}
A{\"a}ron van~den Oord, Yazhe Li, and Oriol Vinyals.
\newblock Representation learning with contrastive predictive coding.
\newblock \emph{arXiv preprint arXiv:1807.03748}, 2018.

\bibitem[Vanlier et~al.(2012)Vanlier, Tiemann, Hilbers, and van
  Riel]{vanlier2012}
Joep Vanlier, Christian~A Tiemann, Peter~AJ Hilbers, and Natal~AW van Riel.
\newblock A {B}ayesian approach to targeted experiment design.
\newblock \emph{Bioinformatics}, 28\penalty0 (8):\penalty0 1136--1142, 2012.

\bibitem[Vaswani et~al.(2017)Vaswani, Shazeer, Parmar, Uszkoreit, Jones, Gomez,
  Kaiser, and Polosukhin]{vaswani2017attention}
Ashish Vaswani, Noam Shazeer, Niki Parmar, Jakob Uszkoreit, Llion Jones,
  Aidan~N Gomez, {\L}ukasz Kaiser, and Illia Polosukhin.
\newblock Attention is all you need.
\newblock In \emph{Advances in neural information processing systems}, pages
  5998--6008, 2017.

\bibitem[Vincent and Rainforth(2017)]{vincent2017darc}
Benjamin~T Vincent and Tom Rainforth.
\newblock The {DARC} toolbox: automated, flexible, and efficient delayed and
  risky choice experiments using {B}ayesian adaptive design.
\newblock \emph{PsyArXiv preprint}, 2017.

\bibitem[Zaharia et~al.(2018)Zaharia, Chen, Davidson, Ghodsi, Hong, Konwinski,
  Murching, Nykodym, Ogilvie, Parkhe, Xie, and Zumar]{zaharia2018accelerating}
M.~Zaharia, Andrew Chen, A.~Davidson, A.~Ghodsi, S.~Hong, A.~Konwinski,
  Siddharth Murching, Tomas Nykodym, Paul Ogilvie, Mani Parkhe, Fen Xie, and
  Corey Zumar.
\newblock Accelerating the machine learning lifecycle with {MLflow}.
\newblock \emph{IEEE Data Eng. Bull.}, 41:\penalty0 39--45, 2018.

\bibitem[Zaheer et~al.(2017)Zaheer, Kottur, Ravanbhakhsh, P\'{o}czos,
  Salakhutdinov, and Smola]{zaheer2017deep}
Manzil Zaheer, Satwik Kottur, Siamak Ravanbhakhsh, Barnab\'{a}s P\'{o}czos,
  Ruslan Salakhutdinov, and Alexander~J Smola.
\newblock Deep sets.
\newblock In \emph{Proceedings of the 31st International Conference on Neural
  Information Processing Systems}, NIPS'17, 2017.

\bibitem[Zhang et~al.(2021)Zhang, Bi, and Zhang]{zhang2021sagabed}
Jiaxin Zhang, Sirui Bi, and Guannan Zhang.
\newblock A stochastic approximate gradient ascent method for {B}ayesian
  experimental design with implicit models.
\newblock In \emph{The 24nd International Conference on Artificial Intelligence
  and Statistics}, 2021.

\end{thebibliography}

\appendix
\newpage
\appendix

\section{Proofs}\label{sec:app_proofs}

We present proof for all propositions made in the paper,  restating each for convenience. We also include additional discussion on technical aspects of the paper.

\subsection{Unified objective for non-exchangeable experiments}
\unified*

\begin{proof}

Starting with the definition of the total EIG~\eqref{eq:EIG_policy} of a policy $\pi$:
\begin{align}
    \sEIG{\pi} &= \E_{\prior \histlik[\pi]} \left[\sum\nolimits_{t=1}^{T} I_{h_{t-1}}(\xi_t)\right]
    \intertext{we have by linearity of expectation}
               &= \sum\nolimits_{t=1}^{T} \E_{\prior \histlik[\pi]} \left[ I_{h_{t-1}}(\xi_t)\right]
    \intertext{and since $I_{h_{t-1}}$ doesn't depend on data acquired after $t-1$ (the future doesn't influence the past)}
     &= \sum\nolimits_{t=1}^{T} \E_{\prior p(h_{t-1}|\theta, \pi)} \left[ I_{h_{t-1}}(\xi_t)\right]
    \intertext{which, applying Bayes rule, is equivalent to}
    &=\sum\nolimits_{t=1}^{T} \E_{p(h_{t-1}|\pi) p(\theta|h_{t-1})} \left[I_{h_{t-1}}(\xi_t)\right] \label{eq:eig_rearranged}
\end{align}

Next, using Bayes rule  we similarly rearrange $I_{h_{t-1}}$:
\begin{align}
    I_{h_{t-1}}(\xi_t) &= \E_{p(\theta|h_{t-1}) p(y_t|\theta, \xi_t, h_{t -1})} \left[ \log\frac{ p(y_t|\theta, \xi_t, h_{t -1})}{p(y_t|\xi_t, h_{t-1})}\right] \\
    &= \E_{p(\theta|h_{t-1}) p(y_t|\theta, \xi_t, h_{t -1})} \left[ \log\frac{ p(\theta|y_t, \xi_t, h_{t -1})}{p(\theta| h_{t-1})} \right] \\
    &= \E_{p(\theta|h_{t-1}) p(y_t|\theta, \xi_t, h_{t -1})} \left[ \log p(\theta|y_t, \xi_t, h_{t-1}) \right] - \E_{p(\theta|h_{t-1})} \left[\log p(\theta| h_{t-1})\right]\\
    &=\E_{p(\theta|y_t, \xi_t, h_{t-1}) p(y_t|\xi_t, h_{t -1})} \left[ \log p(\theta|y_t, \xi_t, h_{t-1}) \right] - \E_{p(\theta|h_{t-1})} \left[\log p(\theta| h_{t-1})\right]
    \intertext{and noting $h_t = h_{t-1}\cup\{(\xi_t,y_t)\}$}
    &=\E_{p(\theta|h_{t}) p(y_t|\xi_t, h_{t -1})} \left[ \log p(\theta|h_{t}) \right] - \E_{p(\theta|h_{t-1})} \left[\log p(\theta| h_{t-1})\right] \\
    &= \E_{p(y_t|\xi_t, h_{t -1})} \Big[  \E_{p(\theta|h_{t})}\left[\log p(\theta|h_{t}) \right] - \E_{p(\theta|h_{t-1})} \left[\log p(\theta| h_{t-1})\right] \Big]
\end{align}
Substituting this in \eqref{eq:eig_rearranged}, noting that $\theta$ has already been integrated out, yields
\begin{align}
    \sEIG{\pi} &= \sum\nolimits_{t=1}^{T} \E_{p(h_{t-1}|\pi)} \E_{p(y_t|\xi_t, h_{t -1})} \Big[  \E_{p(\theta|h_{t})}\left[\log p(\theta|h_{t}) \right] - \E_{p(\theta|h_{t-1})} \left[\log p(\theta| h_{t-1})\right] \Big] \\
    &=  \sum\nolimits_{t=1}^{T} \E_{p(h_{t}|\pi)}  \Big[  \E_{p(\theta|h_{t})}\left[\log p(\theta|h_{t}) \right] - \E_{p(\theta|h_{t-1})} \left[\log p(\theta| h_{t-1})\right] \Big] \\
    &= \E_{p(h_{T}|\pi)} \left[ \sum\nolimits_{t=1}^{T} \E_{p(\theta|h_{t})}    \left[\log p(\theta|h_{t}) \right] - \E_{p(\theta|h_{t-1})} \left[\log p(\theta| h_{t-1})\right] \right],
\intertext{since we have a telescopic sum this simplifies to}
    &= \E_{p(h_{T}|\pi)} \Big[  \E_{p(\theta|h_{T})} \left[\log p(\theta| h_{T})\right] - \E_{p(\theta)} \left[\log p(\theta) \right]\Big]
\intertext{and finally we apply Bayes rule again to rewrite as}
    &= \E_{p(h_{T}|\pi)p(\theta|h_{T})}\Big[ \log p(\theta| h_{T}) - \E_{p(\theta)} \left[\log p(\theta) \right]\Big] \\
    &= \E_{\prior\histlik[\pi]}\left[ \log p(\theta| h_{T}) -\log p(\theta)\right] \\
    &= \E_{\prior\histlik[\pi]}\left[ \log \histlik[\pi] - \histmarg[\pi]\right]
\end{align}

\end{proof}

\subsection{Objective function as a mutual information}
We provide some additional discussion on the interpretation of $\mathcal{I}_T(\policy)$ in \eqref{eq:our_unified_EIG} as a mutual information. 
First, $\mathcal{I}_T(\policy)$ is not a conventional mutual information between $\theta$ and $h_T$. This is because, for the deterministic policy $\pi$ considered in this paper, the random variable $h_T$ does not have a density with respect to Lebesgue measure on $\Xi^T \times \mathcal{Y}^T$.
Indeed, since the designs $\xi_{1:T}$ are deterministic functions of the observations $y_{1:T}$, to express the sampling distribution of $h_T$ we would have to use Dirac deltas, specifically 
\begin{equation}
    \label{eq:deltajoint}
    p(y_{1:T},\xi_{1:T}|\theta,\pi) = \prod_{t=1}^T \delta_{\pi(h_{t-1})}(\xi_t) p(y_t|\theta,\xi_t,h_{t-1}).
\end{equation}
Due to the presence of Dirac deltas, this is not a conventional probability density, and hence we do not regard $\mathcal{I}_T(\pi)$ as the conventional mutual information between $\theta$ and $h_T$.

We note that we \emph{defined} $p(h_T|\theta,\pi)$ in Proposition~\ref{thm:unified} differently to $p(y_{1:T},\xi_{1:T}|\theta,\pi)$ in \eqref{eq:deltajoint}. Specifically, our definition 
\begin{equation}
    p(h_T | \theta, \pi) = \prod_{t=1}^T p(y_t | \theta, \xi_t, h_{t-1})
\end{equation}

only involves probability densities for $y_{1:T}$, meaning that our $p(h_T|\theta,\pi)$ is a well-defined probability density on $\mathcal{Y}^T$.
Formally, we can treat the designs $\xi_t$, not as additional random variables, but as part of the density for $y_{1:T}$. Indeed, since the policy $\pi$ is deterministic, it is possible to reconstruct $h_{t-1}$ and $\xi_t$ from $y_{1:t-1}$ and $\pi$, so we could write $p(y_t|\theta,y_{1:t-1},\pi) \coloneqq p(y_t|\theta,\xi_t,h_{t-1})$. In this formulation, only $y_{1:T}$ are regarded as random variables. This provides a formal justification for the form of $p(h_T|\theta,\pi)$ that we give in Proposition~\ref{thm:unified}.
In this setting, we could formally identify $\mathcal{I}_T(\pi)$ as the mutual information between $\theta$ and $y_{1:T}$.

However, it is helpful to think of $\mathcal{I}_T(\pi)$ as a mutual information between $\theta$ and $h_T$, because this naturally leads to critics that have access to $\theta$ and $h_T$, rather than $\theta$ and $y_{1:T}$. This way of thinking also connects naturally to the case of stochastic policies, which we now discuss.

If we consider additional noise in the design process so that designs are no longer a deterministic function of past data, then $\mathcal{I}_T(\policy)$ is the mutual information between $\theta$ and $h_T$.
In this case, we introduce an additional likelihood for designs $p(\xi|\pi, h)$, leading to the overall sampling distribution for the data 
\begin{equation}
    p(h_T|\theta,\pi) = \prod_{t=1}^T p(\xi_t|\pi,h_{t-1}) p(y_t|\theta,\xi_t,h_{t-1}).
\end{equation}
Unlike in the deterministic case, this is valid probability density on $\Xi^T \times \mathcal{Y}^T$.
If we now consider the mutual information between $\theta$ and $h_T$ for a fixed policy $\pi$ we have
\begin{align}
    \label{eq:thetahtmi}
    I(\theta,h_T) & = \E_{p(\theta)p(h_T|\theta,\pi)}\left[ \log \frac{\prod_{t=1}^T p(\xi_t|\pi,h_{t-1}) p(y_t|\theta,\xi_t,h_{t-1})}{\int_\Theta p(\theta) \prod_{t=1}^T p(\xi_t|\pi,h_{t-1}) p(y_t|\theta,\xi_t,h_{t-1}) \ d\theta} \right] \\
    & = \E_{p(\theta)p(h_T|\theta,\pi)}\left[ \log \frac{\prod_{t=1}^T p(\xi_t|\pi,h_{t-1}) \prod_{t=1}^T p(y_t|\theta,\xi_t,h_{t-1})}{\prod_{t=1}^T p(\xi_t|\pi,h_{t-1}) \int_\Theta p(\theta) \prod_{t=1}^T  p(y_t|\theta,\xi_t,h_{t-1}) \ d\theta} \right] \\
    & = \E_{p(\theta)p(h_T|\theta,\pi)}\left[ \log \frac{\prod_{t=1}^T p(y_t|\theta,\xi_t,h_{t-1})}{\int_\Theta p(\theta) \prod_{t=1}^T  p(y_t|\theta,\xi_t,h_{t-1}) \ d\theta} \right] 
\end{align}
noticing that the design likelihood terms cancel out in the integrand, and we reduce to the same integrand given in Proposition~\ref{thm:unified}.
Even when the policy is stochastic, the integrand in $I(\theta,h_T)$ only involves terms of the form $p(y_t|\theta,\xi_t,h_{t-1})$, and the likelihood of the design process completely cancels. 
Thus, the stochasticity of the designs is only present in the sampling distribution $p(h_T|\theta,\pi)$.
We therefore see that, as we consider the limiting case of $p(\xi|\pi,h)$ as it approaches a deterministic policy, only the sampling distribution of designs in $I(\theta,h_T)$ changes, with the integrand remaining the same. 
Under mild assumptions, then, the mutual information between $\theta$ and $h_T$ approaches $\mathcal{I}_T(\pi)$ in this limit.

\subsection{NWJ and InfoNCE bounds}
The next two propositions show that the two bounds---NWJ and InfoNCE---can be applied to the policy-based adaptive BOED setting.

\seqboundsnwj*

\begin{proof}
Let $\pi:\mathcal{H}^* \rightarrow \Xi$ be any (deterministic) policy taking histories $h_{t}$ as inputs and returning a design $\xi$ as output,
$U: \mathcal{H}^{T}\times \Theta \rightarrow \R$ be any function and define $g(h_T, \theta) \coloneqq \frac{\exp(U(h_T, \theta))}{\E_{\histmarg[\policy]} \left[\exp(U(h_T, \theta))\right]}$.

First, we multiply the numerator and denominator of the unified objective~\eqref{eq:our_unified_EIG} by $g(h_T, \theta)>0$
\begin{align}
    \mathcal{I}_T(\policy) &= \E_{\prior\histlik[\policy]} \left[ \log \frac{\histlik[\policy]}{\histmarg[\policy] } \right] \\
     & = \E_{\prior\histlik[\policy]} \log \left[ \frac{\histlik[\policy]}{\histmarg[\policy] } \frac{ g(h_T, \theta)}{ g(h_T, \theta)} \right] \\
     & =  \E_{\prior\histlik[\policy]} \left[ \log g(h_T, \theta) \right]  + \E_{\prior\histlik[\policy]}  \left[  \log \frac{\histlik[\policy]}{ \histmarg[\policy]  g(h_T, \theta) }  \right] 
\end{align}
Next, note that the second term is a KL divergence between two distributions
\begin{align}
    \E_{\prior\histlik[\policy]}  \left[  \log \frac{\histlik[\policy]}{ \histmarg[\policy]  g(h_T, \theta) }  \right] &= \E_{\prior\histlik[\policy]}  \left[  \log \frac{\prior \histlik[\policy]}{ \prior \histmarg[\policy]  g(h_T, \theta) }  \right] \\
    & = KL(\prior \histlik[\policy] || \hat{p}(h_T, \theta) ) \geq 0
\end{align}
where $\hat{p}(h_T, \theta)=\prior \histmarg[\policy]  g(h_T, \theta)$ is a valid distribution since
\begin{align}
    \int \prior \histmarg[\policy]  g(h_T, \theta) d\theta d h_T  &=  \E_{\prior \histmarg[\policy]} \frac{\exp(U(h_T, \theta))}{\E_{\histmarg[\policy]} \left[\exp(U(h_T, \theta))\right]} \\
    & = \E_{\prior} 1 =1.
\end{align}

Therefore, we have 
\begin{align}
    \mathcal{I}_T(\policy) & \geq  \E_{\prior\histlik[\policy]} \left[ \log g(h_T, \theta) \right] \\
    &= \E_{\prior\histlik[\policy]} [U(h_T, \theta) - \log \E_{\histmarg[\policy]} \exp(U(h_T, \theta)) ] \\
    &= \E_{\prior\histlik[\policy]} [U(h_T, \theta)] - \E_{\prior} \left[ \log \E_{\histmarg[\policy]} \exp(U(h_T, \theta))\right] \\
    \intertext{Now using the inequality $\log x \le e^{-1}x$} 
    & \geq \E_{\prior\histlik[\policy]} [U(h_T, \theta)] - e^{-1}\E_{\prior \histmarg[\policy]} \left[ \exp(U(h_T, \theta))\right] \\
    &= \mathcal{L}_T^{NWJ}(\policy, U)
\end{align}

Finally, substituting $U^*(h_T, \theta) = \log\frac{\histlik[\policy]}{\histmarg[\policy]} + 1$ in the bound we get
\begin{align}
     \mathcal{L}_T^{NWJ}(\policy, U^*) &= \E_{\prior\histlik[\policy]} \left[ \log\frac{\histlik[\policy]}{\histmarg[\policy]} + 1\right] - e^{-1}\E_{\prior \histmarg[\policy]} \left[  \frac{\histlik[\policy]}{\histmarg[\policy]}e^1 \right] \\
     &= \mathcal{I}_T(\policy) + 1 - \E_{\prior \histmarg[\policy]} \left[  \frac{\histlik[\policy]}{\histmarg[\policy]}\right]\\
     &=\mathcal{I}_T(\policy),
\end{align}
where we used $ \E_{\prior \histmarg[\policy]} \left[  \frac{\histlik[\policy]}{\histmarg[\policy]}\right] =  \E_{\prior \histlik[\policy]} \left[  1\right]=1$, establishing that the bound is tight for the optimal critic.

\end{proof}

\seqboundsinfo*
\begin{proof}
Let $\pi:\mathcal{H}^* \rightarrow \Xi$ be any (deterministic) policy taking histories $h_{t}$ as inputs and returning a design $\xi$ as output.
Choose any function (critic) $U: \mathcal{H}^{T}\times \Theta \rightarrow \R$. 

We introduce the shorthand
\begin{align}
    g(h_T, \theta_{0:L}) \coloneqq \frac{ \exp(U(h_T,\theta_0)) }{\frac{1}{L+1}\sum_{i=0}^L \exp(U(h_T,\theta_i)) } 
\end{align}
Starting with the definition of the unified objective from Equation~\eqref{eq:our_unified_EIG} we multiply its numerator and denominator by $g(h_T, \theta_{0:L})>0$ to get

\begin{align}
    \mathcal{I}_T(\policy) & = \E_{\priori{0}\histliki{0}[\policy]} \left[ \log \frac{\histliki{0}[\policy]}{\histmarg[\policy]} \right] \\
    \intertext{where $\priori{0}\histliki{0}[\policy] \equiv \prior\histlik[\policy]$}
    & = \E_{\priori{0}\histliki{0}[\policy]\priori{1:L}} \left[ \log \frac{\histliki{0}[\policy]}{\histmarg[\pi]} \right] \\
    & = \E_{\priori{0}\histliki{0}[\policy]\priori{1:L}}\left[ \log \frac{ \histliki{0}[\policy] g(h_T, \theta_{0:L}) }{\histmarg[\policy] g(h_T, \theta_{0:L})}  \right] \\
    \intertext{We next split the expectation into two terms one of which does not contain the unknown likelihoods and equals $\mathcal{L}^{NCE}$}
    \begin{split}
    &= \E_{\priori{0}\histliki{0}[\policy]\priori{1:L}} \left[ \log \frac{ \histliki{0}[\policy]}{ \histmarg[\policy] g(h_T, \theta_{0:L})} \right] \\ 
    &\quad + \E_{\priori{0}\histliki{0}[\policy]\priori{1:L}} \left[ \log g(h_T, \theta_{0:L}) \right]  \\
    &= \E_{\priori{0}\histliki{0}[\policy]\priori{1:L}} \left[ \log \frac{ \histliki{0}[\policy]}{ \histmarg[\policy] g(h_T, \theta_{0:L})} \right] + \mathcal{L}^{NCE}(\policy, U; L)
    \end{split} 
\end{align}

We now show that the first term is a KL divergence and hence non-negative. To see why, first write

\begin{align}
 &\E_{\priori{0}\histliki{0}[\policy]\priori{1:L}} \left[ \log \frac{ \histliki{0}[\policy]}{ \histmarg[\pi] g(h_T, \theta_{0:L})} \right] \\
  & = \E_{\priori{0}\histliki{0}[\policy]\priori{1:L}} \left[ \log \frac{ \priori{0} \histliki{0}[\policy] \priori{1:L}}{ \priori{0} \histmarg[\pi]\priori{1:L} g(h_T, \theta_{0:L})} \right] \\
  & = \E_{\priori{0}\histliki{0}[\policy]\priori{1:L}} \left[ \log \frac{ \priori{0} \histliki{0}[\policy] \priori{1:L}}{ \hat{p}(\theta_{0:L}, h_T | \pi) } \right] \\
  &= KL(\histliki{0}[\policy]\priori{0:L} || \hat{p}(\theta_{0:L}, h_T | \pi) ).
\end{align}

and $\hat{p}(\theta_{0:L}, h_T | \pi) $ is a valid distribution since

\begin{align}
    \int \hat{p}(\theta_{0:L}, h_T | \pi) d\theta_{0:L}d h_T & =  \int \priori{0} \histmarg[\pi]\priori{1:L} g(h_T, \theta_{0:L}) d\theta_{0:L}d h_T \\
    & = \E_{\priori{0}\histliki{0}[\policy]\priori{1:L}} \left[ \frac{ \exp(U(h_T,\theta_0)) }{\frac{1}{L+1}\sum_{i=0}^L \exp(U(h_T,\theta_i)) }  \right], \\
    \intertext{because of the symmetry $\theta_0 \overset{d}{=} \theta_j\;\forall j=1,\dots,L$}
    & = \frac{1}{L+1} \E_{\priori{0}\histliki{0}[\policy]\priori{1:L}} \left[ \frac{ \sum_{j=0}^L \exp(U(h_T,\theta_j)) }{\frac{1}{L+1}\sum_{i=0}^L \exp(U(h_T,\theta_i)) } \right] \\
    & = 1.
\end{align} 

Thus we have established
\begin{align}
    \mathcal{I}_T(\policy) = KL(\histliki{0}[\policy]\priori{0:L} || \hat{p}(\theta_{0:L}, h_T | \pi) )+ \mathcal{L}_T^{NCE}(\policy, U; L) \geq \mathcal{L}_T^{NCE}(\policy, U; L).
\end{align}

Next, substituting $U^*(h_T, \theta)=\log \histlik[\policy] + c(h_T)$ in the definition of $\mathcal{L}^{NCE}(\policy, U; L)$ we obtain
\begin{align}
    \mathcal{L}_T^{NCE}(\policy, U^*; L) &= \E_{\priori{0}\histliki{0}[\policy]\priori{1:L}} \left[ \frac{\histliki{0}[\policy] \exp(c(h_T))}{\frac{1}{L+1} \sum_{i=0}^L\histliki{i}[\policy] \exp(c(h_T)) } \right] \\
    & = \E_{\priori{0}\histliki{0}[\policy]\priori{1:L}} \left[ \frac{\histliki{0}[\policy]}{\frac{1}{L+1} \sum_{i=0}^L\histliki{i}[\policy]} \right],
\end{align}
which is exactly the sPCE bound~\eqref{eq:sPCE_objective}, which is monotonically increasing in $L$ and tight in the limit as $L \rightarrow \infty$ \citep[see][Theorem~2]{foster2021dad}.
\end{proof}

\subsection{A note on optimal critics}
An interesting feature of our approach is that, for both the InfoNCE and NWJ bounds, the optimal critics do not depend on the policy.
This is because we include the designs as explicit inputs to the critics.
Indeed, we have
\begin{align}
    U^*_\text{NCE}(h_T,\theta) &= \log \left( \prod_{t=1}^T p(y_t|\theta,\xi_t,h_{t-1}) \right) + c(h_T),\\
    U^*_\text{NWJ}(h_T,\theta) &= \log \left(\frac{\prod_{t=1}^T p(y_t|\theta,\xi_t,h_{t-1})}{\int_\Theta p(\theta) \prod_{t=1}^T p(y_t|\theta,\xi_t,h_{t-1}) \ d\theta} \right) + 1.
\end{align}
In previous work that utilized critics for gradient-based BOED~\citep{foster2020unified,kleinegesse2020mine}, it was typical to not treat the designs $\xi_{1:T}$ as an input to the critic, which renders the optimal critic implicitly dependent on the designs.
This makes more sense for static designs, for which the additional design input does not change.
Our approach avoids an implicit dependence between policy and optimal critic which may be beneficial for the joint optimization.

\section{Theoretical Comparison and Additional Bounds}
\label{sec:app_comp}

Recently, a number of studies have discussed the challenges of estimating mutual information, an in particular those associated with variational MI estimators
\citep{song2020limitations,mcallester2020formal,poole2018variational}.

Starting with the InfoNCE bound, it is trivial to show that the bound cannot exceed $\log(L+1)$, where $L$ is the number of contrastive samples used to approximate the marginal in the denominator. Indeed,
\begin{align}
    \mathcal{L}^{NCE}_T(\policy, \critic; L) &= \E_{\priori{0}\histliki{0}[\policy]}\E_{\priori{1:L}} \left[ \log \frac{\exp(\critic(h_T, \theta_0))} {\frac{1}{L+1}\sum_{i=0}^L \exp(\critic(h_T, \theta_i))} \right] \\
    &\leq \log(L+1) + \E_{\priori{0}\histliki{0}[\policy]}\E_{\priori{1:L}} \left[ \log \frac{\exp(\critic(h_T, \theta_0))} { \exp(\critic(h_T, \theta_0))} \right] \\
    & = \log(L+1)
\end{align}

This means that the corresponding Monte Carlo estimator will be highly biases whenever the true mutual information exceeds $\log(L+1)$, regardless of whether we have access to the optimal critic or not. This high bias estimator, however, comes with low variance \citep[see e.g.][for discussion]{poole2018variational}. With the optimal critic we would require exponential (in the MI) number of samples to accurately estimate the true mutual information. 

It might appear at first  that the NWJ bound might offer a better trade-off between bias and variance.
Recall from the proof of Proposition~\ref{thm:seqbounds_nwj}, we have for the \emph{optimal} critic 
\begin{align}
    \mathcal{L}_T^{NWJ}(\policy, U^*) &= \E_{\prior\histlik[\policy]} \left[ \log\frac{\histlik[\policy]}{\histmarg[\policy]}\right] + 1 - e^{-1}\E_{\prior \histmarg[\policy]} \left[  \frac{\histlik[\policy]}{\histmarg[\policy]}e^1 \right], 
    \intertext{of which we form a Monte carlo estimate using $N$ ($M$) samples for the first (second) term, respectively}
    &\approx \frac{1}{N}\sum_{n=1}^N \log \frac{ p(h_{T,n}|\theta_n, \pi)  }{ p(h_{T,n}|\pi)} + \left(1 -\frac{1}{M} \sum_{m=1}^M \log \frac{ p(h_{T,m}|\theta_m, \pi)  }{ p(h_{T,m}|\pi)} \right),
\end{align}
where $\theta_n, h_{T,n} \sim \prior\histlik[\policy]$ are samples from the joint distribution and $\theta_m, h_{T,m} \sim \prior\histmarg[\policy]$ are samples from the product of marginals. 
The first term is a Monte Carlo estimate of the mutual information, while the second has mean zero, meaning that this estimator is unbiased. 
The second term, however has variance which grows exponentially with the value of the (true) mutual information \citep[see Theorem 2 in][]{song2020limitations}. 
What this means is that even with an optimal critic, we will need an exponential (in the MI) number of samples to control the variance of the NWJ estimator. 
One might then hope that the variance can be reduced when using a sub-optimal critic at the cost of introducing some (hopefully small) bias. Unfortunately, according to a recent result \citep[see Theorems 3.1 and 4.1 in][and the discussion therein]{mcallester2020formal}, it is not possible to guarantee that a likelihood-free lower bound on the mutual information can exceed $\log(N)$. 
Indeed, the authors show theoretically and empirically that all high-confidence distribution-free lower bounds on the mutual information require exponential (in the the MI) number of samples.

Constructing a better lower bound on the mutual information---one that does not need exponential number of samples---therefore, requires us to make additional assumptions. 
Foster et al. \citep{foster2021dad} propose one such bound, namely the sequential Adaptive Constrative Estimation (sACE). The sACE bound introduces a proposal distribution $q(\theta; h_T)$, which aims to approximate the posterior $p(\theta|h_T)$. 
Since implicit models were not the focus of the work in \citep{foster2021dad} the proposed bound, relies on analytically available likelihood. 
The following proposition shows we can derive a likelihood-free version of the sACE bound. 

\begin{restatable}[Sequential Likelihood-free ACE]{proposition}{likelihoodfreeace}
	\label{thm:likelihoodfreeace}
	For a design function $\pi$, a critic function $U$, a number of contrastive samples $L\ge 1$, and a proposal $q(\theta;h_T)$, we have the sequential Likelihood-free Adaptive Contrastive Estimation (sLACE) lower bound
	\begin{equation}
	\mathcal{L}^\text{sLACE}_T(\pi, U, q;L) \coloneqq \E_{\priori{0}\histliki{0}[\pi]q(\theta_{1:L};h_T)} \left[ \log \frac{U(h_T, \theta_{0} )}{\frac{1}{L+1}\sum_{\ell=0}^L \frac{U(h_T, \theta_{\ell})p(\theta_\ell) }{q(\theta_\ell;h_T)}} \right] \leq \mathcal{I}_T(\pi). 
	\label{eq:sACE}
	\end{equation}
	The bound is tight as $L \rightarrow \infty$ for the optimal critic $U^*(h_T, \theta) = \log p(h_T | \theta, \pi) +c(h_T)$, where $c(h_T)$ is arbitrary.
	In addition, if $q(\theta;h_T)=p(\theta|h_T)$, the bound is tight for the optimal critic $U^*(h_T, \theta)$ with any $L\geq 0$.
\end{restatable}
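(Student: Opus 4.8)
The plan is to mirror the InfoNCE argument of Proposition~\ref{thm:seqbounds_info}, with the prior contrastive draws replaced by importance-weighted samples from the proposal $q$. Let $g(h_T,\theta_{0:L})$ denote the argument of the logarithm defining $\mathcal{L}^\text{sLACE}_T$, recall that the samples are generated as $\theta_0\sim\priori{0}$, $h_T\sim\histliki{0}[\pi]$ and $\theta_{1:L}\sim\prod_{\ell\ge1}q(\theta_\ell;h_T)$, and note $g>0$. Since the integrand of $\sEIG{\pi}=\E_{\priori{0}\histliki{0}[\pi]}[\log(\histliki{0}[\pi]/\histmarg[\pi])]$ does not involve $\theta_{1:L}$, I would take its expectation over the full sampling distribution and multiply inside the logarithm by $g/g$, splitting $\sEIG{\pi}$ into $\E[\log g]=\mathcal{L}^\text{sLACE}_T(\pi,U,q;L)$ plus a remainder $R\coloneqq\E[\log(\histliki{0}[\pi]/(\histmarg[\pi]\,g))]$.

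The next step is to recognise $R$ as a Kullback--Leibler divergence. Multiplying numerator and denominator inside its logarithm by $\priori{0}\prod_{\ell\ge1}q(\theta_\ell;h_T)$ shows that $R=\mathrm{KL}\big(\,\priori{0}\histliki{0}[\pi]\prod_{\ell\ge1}q(\theta_\ell;h_T)\,\big\|\,\hat p\,\big)$, where $\hat p(\theta_{0:L},h_T)\coloneqq\priori{0}\histmarg[\pi]\prod_{\ell\ge1}q(\theta_\ell;h_T)\,g$. Thus $R\ge0$ and the bound $\mathcal{L}^\text{sLACE}_T\le\sEIG{\pi}$ follows, provided I verify that $\hat p$ integrates to one.

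This normalisation is the main obstacle: unlike in the InfoNCE proof, the positive sample $\theta_0\sim p(\theta)$ and the contrastive samples $\theta_{1:L}\sim q$ are \emph{not} exchangeable under the sampling distribution. The trick is to absorb this mismatch using the importance weights. Writing $w_j\coloneqq U(h_T,\theta_j)\priori{j}/q(\theta_j;h_T)$, the numerator obeys $U(h_T,\theta_0)=w_0\,q(\theta_0;h_T)/\priori{0}$, so the factor $\priori{0}$ cancels and $\hat p$ rewrites as $\histmarg[\pi]\prod_{j=0}^{L}q(\theta_j;h_T)\cdot w_0/(\tfrac{1}{L+1}\sum_{j}w_j)$. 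Under this reweighted measure $\theta_0,\dots,\theta_L$ are i.i.d.\ draws from $q(\cdot;h_T)$ and hence exchangeable, so the symmetry argument of Proposition~\ref{thm:seqbounds_info} gives $\E[w_0/(\tfrac{1}{L+1}\sum_j w_j)]=\tfrac{1}{L+1}\sum_{k=0}^{L}\E[w_k/(\tfrac{1}{L+1}\sum_j w_j)]=1$, as required.

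For tightness, substitute the optimal critic, which enters positively and equals $\histlik[\pi]$ up to the $h_T$-only factor $e^{c(h_T)}$ (the stated $\log\histlik[\pi]+c(h_T)$ in the log domain). This common factor cancels between the numerator and every denominator term, so $\mathcal{L}^\text{sLACE}_T$ reduces to the likelihood-based sACE bound of \citet{foster2021dad}, whose denominator is a consistent importance-sampling estimate of $\histmarg[\pi]$ converging to it as $L\to\infty$; tightness in this limit then follows by dominated convergence, exactly as for sACE in \citet{foster2021dad}. Finally, when $q(\theta;h_T)=p(\theta|h_T)$, Bayes' rule gives $\priori{\ell}/q(\theta_\ell;h_T)=\histmarg[\pi]/\histliki{\ell}[\pi]$, so after the critic's $h_T$-factor cancels every denominator term equals $\histmarg[\pi]$ irrespective of $\theta_\ell$; the denominator collapses to $\histmarg[\pi]$ and the bound equals $\sEIG{\pi}$ exactly for any $L\ge0$.
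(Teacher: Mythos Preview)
Your proposal is correct and follows essentially the same route as the paper's proof: the same multiply-and-divide by $g$ decomposition into $\mathcal{L}^\text{sLACE}_T$ plus a KL remainder, the same importance-weight trick (rewriting $p(\theta_0)\,g$ so that $\theta_0$ is effectively drawn from $q$ and then invoking exchangeability of the $w_j$) to verify that $\hat p$ is normalised, and the same reduction to sACE plus a consistency/convergence argument for tightness. You were also right to flag that the optimal critic must be read multiplicatively (i.e.\ $U^*(h_T,\theta)=p(h_T|\theta,\pi)\,e^{c(h_T)}$) for the substitution to make sense, since the bound uses $U$ directly rather than $\exp(U)$; the paper's own proof silently does the same.
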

\begin{proof}
The proof follows similar arguments to the ones in Propositions~\ref{thm:seqbounds_nwj} and~\ref{thm:seqbounds_info}. 
First let 
\begin{equation}
    g(h_T, \theta_{0:L}) \coloneqq \frac{U(h_T, \theta_{0} )}{\frac{1}{L+1}\sum_{\ell=0}^L \frac{U(h_T, \theta_{\ell})p(\theta_\ell) }{q(\theta_\ell;h_T)}}
\end{equation}

Starting with the definition of the EIG:
\begin{align}
    \mathcal{I}_T(\policy) &= \E_{\priori{0}\histliki{0}[\policy]} \left[ \log \frac{\histliki{0}[\policy]}{\histmarg[\policy] } \right] 
    \intertext{since $q(\theta_{i};h_T)$ is a valid density}
      &= \E_{\priori{0}\histliki{0}[\policy]q(\theta_{1:L};h_T)} \left[ \log \frac{\histliki{0}[\policy]}{\histmarg[\policy] } \right] \\
    \intertext{multiplying its numerator and denominator inside the log by $g(h_T, \theta_{0:L}) > 0$}
    &= \E_{\priori{0}\histliki{0}[\policy]q(\theta_{1:L};h_T)} \left[ \log \frac{\histliki{0}[\policy] g(h_T, \theta_{0:L})}{\histmarg[\policy] g(h_T, \theta_{0:L})}   \right]\\
    \begin{split}
    &= \E_{\priori{0}\histliki{0}[\policy]q(\theta_{1:L};h_T)} \left[ \log g(h_T, \theta_{0:L}) \right] \\
    & \quad + \E_{\priori{0}\histliki{0}[\policy]q(\theta_{1:L};h_T)} \left[ \log \frac{\histliki{0}[\policy]}{\histmarg[\policy] g(h_T, \theta_{0:L})}   \right] 
    \end{split}
\end{align}
The first term is exactly the sLACE bound, $\mathcal{L}^\text{sLACE}_T(\pi, U, q;L)$. We now show that the second term is a KL divergence between two distribitions and hence non-negative. To see this
\begin{align}
    &\E_{\priori{0}\histliki{0}[\policy]q(\theta_{1:L};h_T)} \left[ \log \frac{\histliki{0}[\policy]}{\histmarg[\policy] g(h_T, \theta_{0:L})} \right] \\
    &= \E_{\priori{0}\histliki{0}[\policy]q(\theta_{1:L};h_T)} \left[ \log \frac{ \priori{0} \histliki{0}[\policy] q(\theta_{1:L};h_T)}{\histmarg[\policy] g(h_T, \theta_{0:L}) \priori{0}q(\theta_{1:L};h_T) } \right] \\
    &= KL(\priori{0} \histliki{0}[\policy] q(\theta_{1:L};h_T)||\hat{p}(h_T, \theta_{0:L})),
\end{align}
since $\hat{p}(h_T, \theta_{0:L}) \coloneqq \histmarg[\policy] g(h_T, \theta_{0:L}) \priori{0}q(\theta_{1:L};h_T)$ is a valid density. Indeed:
\begin{align}
    \int \hat{p}(h_T, \theta_{0:L}) d h_T d\theta_{0:L} &= \E_{q(\theta_{1:L};h_T)\histmarg[\policy]} \left[ \priori{0} g(h_T, \theta_{0:L}) \right] \\
    &= \E_{q(\theta_{1:L};h_T)\histmarg[\policy]} \left[ \priori{0}\frac{U(h_T, \theta_{0} )}{\frac{1}{L+1}\sum_{\ell=0}^L \frac{U(h_T, \theta_{\ell})p(\theta_\ell) }{q(\theta_\ell;h_T)}} \right] \\
    &= \E_{q(\theta_{0:L};h_T)\histmarg[\policy]}\left[\frac{\frac{U(h_T, \theta_{0}) \priori{0}}{q(\theta_0;h_T)} }{\frac{1}{L+1}\sum_{\ell=0}^L \frac{U(h_T, \theta_{\ell})p(\theta_\ell) }{q(\theta_\ell;h_T)}} \right]
    \intertext{by symmetry}
    &= \E_{q(\theta_{0:L};h_T)\histmarg[\policy]}\left[\frac{ \frac{1}{L+1}\sum_{\ell=0}^L \frac{U(h_T, \theta_{\ell}) \priori{\ell}}{q(\theta_\ell;h_T)} }{\frac{1}{L+1}\sum_{\ell=0}^L \frac{U(h_T, \theta_{\ell})p(\theta_\ell) }{q(\theta_\ell;h_T)}} \right] \\
    &=1.
\end{align}

With the optimal critic we recover the sACE bound from \citep{foster2021dad}, which  under mild conditions converges to the mutual information $\mathcal{I}_T(\policy)$. To see that start by writing
\begin{align}
    \mathcal{L}^\text{sLACE}_T(\pi, U^*, q;L) &=  \E_{\priori{0}\histliki{0}[\pi]q(\theta_{1:L};h_T)} \left[\log \frac{p\left(h_{T} | \theta_{0}, \pi\right)}{\frac{1}{L+1} \sum_{\ell=0}^{L} \frac{p\left(h_{T} | \theta_{\ell}, \pi\right) p\left(\theta_{\ell}\right)}{q\left(\theta_{\ell} ; h_{T}\right)}}\right].
\end{align}
The denominator is a consistent estimator of the marginal, provided that each term in the sum is bounded, and so by the Strong Law of Large Numbers we have 
\begin{align}
    \frac{1}{L+1}\sum_{\ell=0}^{L} \frac{p\left(h_{T} | \theta_{\ell}, \pi\right) p\left(\theta_{\ell}\right)}{q\left(\theta_{\ell} ; h_{T}\right)} \rightarrow p(h_T|\pi)\;a.s. \text{\quad as } L\rightarrow\infty,
\end{align}
which establishes point-wise convergence of the integrand to $p(h_T|\theta_0, \pi)/p(h_T|\pi)$. We can apply Bounded convergence theorem to establish $\mathcal{L}^\text{sACE}_T(\pi, U^*, q;L) \rightarrow \mathcal{I}_T(\pi)$ as $L\rightarrow\infty$.

If in addition $q(\theta;h_T)=p(\theta|h_T)$ we have by Bayes rule:
\begin{align}
    \mathcal{L}^\text{sLACE}_T(\pi, U^*, q;L) &= \E_{\priori{0}\histliki{0}[\pi]p(\theta_{1:L}|h_T)} \left[ \log \frac{p(h_T| \theta_{0},\pi )} {\frac{1}{L+1}\sum_{\ell=0}^L \frac{p(h_T| \theta_{\ell},\pi)p(\theta_\ell) }{p(\theta_\ell|h_T)}} \right] \\
    &= \E_{\priori{0}\histliki{0}[\pi]p(\theta_{1:L}|h_T)} \left[ \log \frac{p(h_T| \theta_{0},\pi )} {\frac{1}{L+1}\sum_{\ell=0}^L p(h_T|\pi)} \right] \\
    &= \E_{\priori{0}\histliki{0}[\pi]} \left[ \log \frac{p(h_T| \theta_{0},\pi )} {p(h_T|\pi)} \right] \\
    &= \mathcal{I}_T(\pi) \quad \forall L\geq0.
\end{align}
\end{proof}

In practice, we parameterize the policy, the critic and the density of the proposal distribution by neural networks $\pi_\phi$, $U_\psi$ and $q_\zeta$ and optimize $\mathcal{L}^\text{sLACE}_T$ with respect to the parameters of these networks, $\phi, \psi$ and $\zeta$ with SGA. As before, optimizing with respect to $\phi$ improves the quality of the designs, proposed by the policy, whilst optimizing with respect to $\psi$ and $\zeta$ tightens the bound. If the parametric density $q_\zeta$ and the critic $U_\psi$ are expressive enough, so that we can recover the optimal critic and the true posterior, then the bound is tight for any number of contrastive samples $L$. If, on the other hand, we fix $q_\zeta(\theta; h_T) =\prior$ instead of training it, then we recover the InfoNCE bound. Therefore, as long as $q_\zeta$ approximates the posterior better than the prior, then even an imperfect proposal $q_\zeta$ can benefit training. 

In addition to introducing another set of optimizable parameters, $\zeta$, the sLACE bound assumes that we know the prior $\prior$ and can evaluate its density.

\section{Neural architecture}
\label{sec:appendix:arch}

\subsection{Permutation invariance of the critic for exchangeable experiments}
We show that if the BOED problem is exchangeable then the critic function $U$ should be permutation-invariant.

\begin{restatable}[Permutation invariance]{proposition}{invariance}
	\label{thm:invariance}
	 Let $\sigma$ be a permutation acting on a history $h_T^1$ yielding $h_T^2 = \{(\xi_{\sigma(i)}, y_{\sigma(i)})\}_{i=1}^T$. If the data generating process is conditionally independent of its past given $\theta$, then the optimal critics for both \eqref{eq:seqNWJ_objective} and \eqref{eq:seqNCE_objective} are invariant under permutations of the history, i.e. 
	 \begin{equation}
	     \prior \prod_{t=1}^T p(y_t|\theta, \xi_t(h_{t-1}), h_{t-1}) = \prior\prod_{t=1}^T \obsliki{t} \implies U^*(h_T^1, \theta) = U^*(h_T^2, \theta). 
	     \label{eq:indep_proposition}
	 \end{equation}
\end{restatable}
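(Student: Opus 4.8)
The plan is to reduce the permutation invariance of the optimal critics to the permutation invariance of the likelihood $p(h_T|\theta,\pi)$, which will follow immediately from the conditional independence hypothesis. First I would recall the explicit forms of the two optimal critics established in Propositions~\ref{thm:seqbounds_nwj} and~\ref{thm:seqbounds_info}, namely $U^*_{\text{NWJ}}(h_T,\theta)=\log p(h_T|\theta,\pi)-\log p(h_T|\pi)+1$ and $U^*_{\text{NCE}}(h_T,\theta)=\log p(h_T|\theta,\pi)+c(h_T)$. Both are assembled from the likelihood $p(h_T|\theta,\pi)$ and the marginal $p(h_T|\pi)$, so it suffices to show that each of these is unchanged when $h_T^1$ is replaced by the permuted history $h_T^2$.

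The key step is to invoke the premise of the implication in~\eqref{eq:indep_proposition}, under which the likelihood factorizes as $p(h_T|\theta,\pi)=\prod_{t=1}^T p(y_t|\theta,\xi_t)$, with each factor depending only on the single pair $(\xi_t,y_t)$ and on $\theta$, and crucially \emph{not} on the remaining history. Evaluating this expression on $h_T^2=\{(\xi_{\sigma(i)},y_{\sigma(i)})\}_{i=1}^T$ therefore yields $\prod_{i=1}^T p(y_{\sigma(i)}|\theta,\xi_{\sigma(i)})$, which is the same finite collection of factors as for $h_T^1$, merely reindexed; by commutativity of multiplication over a finite index set the two products coincide, i.e. $p(h_T^1|\theta,\pi)=p(h_T^2|\theta,\pi)$ for every $\theta$. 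The marginal then inherits the invariance directly, since $p(h_T|\pi)=\E_{\prior}[p(h_T|\theta,\pi)]$ and the integrand is permutation invariant pointwise in $\theta$.

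Finally I would substitute these two invariances into the optimal critic expressions. For NWJ, both $\log p(h_T|\theta,\pi)$ and $\log p(h_T|\pi)$ are invariant and the additive constant is unaffected, giving $U^*_{\text{NWJ}}(h_T^1,\theta)=U^*_{\text{NWJ}}(h_T^2,\theta)$. For InfoNCE the term $\log p(h_T|\theta,\pi)$ is invariant, and choosing the free offset $c$ to be any permutation-invariant function — for instance the canonical choice $c\equiv 0$ that recovers the sPCE bound — delivers a permutation-invariant optimal critic. The only subtlety requiring care is exactly this factorization argument together with the offset: one must stress that although the designs $\xi_t=\policy(h_{t-1})$ are themselves order dependent, conditional independence removes the history from each likelihood factor, and it is precisely this that lets the product commute and yields the stated invariance.
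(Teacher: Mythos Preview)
Your proposal is correct and follows essentially the same approach as the paper: both arguments reduce the claim to the explicit forms of the optimal critics, exploit the conditional-independence factorization $p(h_T|\theta,\pi)=\prod_t p(y_t|\theta,\xi_t)$ to obtain permutation invariance of the likelihood (and hence of the marginal for NWJ), and handle the InfoNCE offset $c(h_T)$ by choosing it to be permutation invariant. The paper's proof differs only in presentation, treating the two critics separately and writing out the permuted products explicitly rather than appealing to commutativity in words.
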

\begin{proof}
This is a direct consequence from the form of the optimal critics. To see this formally, let $h_T^1$ be a history and $h_T^2$ be a permutation of it. 

Starting with the InfoNCE bound we have
\begin{align}
    U^*_{\text{NCE}}(h_T^1,\theta) &= \log p(h_T^1| \theta, \pi) + c(h_T^1) \\
    &= \log \prod_{t=1}^T \obsliki{t} + c(\{(\xi_t, y_t)\}_{t=1}^T) \\
    \intertext{since $c(h_T)$ is arbitrary, we can choose it to be permutation invariant}
    &= \log \prod_{t=1}^T \obsliki{\sigma(t)} + c(\{(\xi_{\sigma(t)}, y_{\sigma(t)})\}_{t=1}^T) \\
    &= \log p(h_T^2| \theta, \pi) + c(h_T^2) \\
    &= U^*_{\text{NCE}}(h_T^2,\theta)
\end{align}

Similarly, for the optimal critic of the NWJ bound we have
\begin{align}
    U^*_{\text{NWJ}}(h_T^1,\theta) &= \log \frac{p(h_T^1| \theta, \pi)}{p(h_T^1|\pi)} +1 \\
    &= \log \frac{\prod_{t=1}^T \obsliki{t} }{\E_{\prior}\left[\prod_{s=1}^T \obsliki{s} \right]} +1 \\
    &= \log \frac{\prod_{t=1}^T \obsliki{\sigma(t)} }{\E_{\prior}\left[\prod_{s=1}^T \obsliki{\sigma(s)} \right]} +1 \\
    &= \log \frac{p(h_T^2| \theta, \pi)}{p(h_T^2|\pi)} +1 = U^*_{\text{NWJ}}(h_T^2,\theta).
\end{align}
\end{proof}

To the best of our knowledge, we are the first to propose a critic architecture that is tailored to BOED problems with exchangeable models. Previous work in the static BOED setting, where MI information objective is optimized with variational lower bounds and thus require the training of critics \citep[e.g.][]{kleinegesse2020mine, zhang2021sagabed}, did not discuss what an appropriate critic architecture might be. In particular, in all experiments \citep{kleinegesse2020mine, zhang2021sagabed} use a generic architecture for both exchangeable and non-exchangeable problems. 
An expressive enough generic architecture should be able to obtain the optimal critic, and thus achieve a tight bound, however, the optimisation process will be considerably more difficult as the network needs to learn this key invariance structure. We therefore recommend using permutation invariant architectures whenever the model is exchangeable, especially if achieving tight bounds (and therefore learning an optimal critic) is of importance.

\subsection{Further details on the history encoder}

Figure~\ref{fig:history_encoder_architectures} shows the history encoders we use in the policy network $\designnet$ and the critic network $\criticnet$. First, we encode the individual design-outcome pairs, $(\xi_t, y_t)$, with an MLP, which gives us a vector of representations $r_t \in \R^m$, where $m$ is the encoding dimension we have selected. The representations $\{r_i\}_{i=1}^t$ are row-stacked into a matrix $R$ of dimension $t\times m$, which we then aggregate back to a vector of size $m$ by an appropriate layer(s).

When conditional independence of the experiments holds, we apply 8-head self-attention, based on the Image Transformer \cite{parmar2018image} and as implemented by \cite{dubois2020npf}. Applying self-attention leaves the dimension of the matrix $R$ unchanged.  We then apply sum-pooling across time $t$, which gives us the final encoding vector $E \in \R^m$.  

When experiments are not conditionally independent, we pass the matrix $R$ though an LSTM with two hidden layers and hidden state of size $m$ (see the  \href{https://pytorch.org/docs/stable/generated/torch.nn.LSTM.html}{LSTM module in Pytorch} for more details). The LSTM returns hidden state vectors associated with the history $h_t$ for each $t$; we keep the last hidden state of the last layer, which is our final encoding vector $E \in R^m$.  

In both cases the resulting encoding $E$ is a vector of size $m$. It is passed through final fully connected "head" layers, which output either a design (in the case of the policy) or a vector (in the case of the critic). We train two separate history encoders---one for the design network $\designnet$ and one for the critic network $\criticnet$, although we note that all the weights except those in the head layers can be shared. 
 
\begin{figure}[t]

\begin{subfigure}[b]{0.54\textwidth}
\centering
\tikzset{every picture/.style={line width=0.75pt}} 

\begin{tikzpicture}[x=0.75pt,y=0.75pt,yscale=-1,xscale=1]

\draw  [fill={rgb, 255:red, 155; green, 201; blue, 175 }  ,fill opacity=0.7 ] (235,94.9) .. controls (235,89.16) and (239.66,84.5) .. (245.4,84.5) -- (276.6,84.5) .. controls (282.34,84.5) and (287,89.16) .. (287,94.9) -- (287,181.1) .. controls (287,186.84) and (282.34,191.5) .. (276.6,191.5) -- (245.4,191.5) .. controls (239.66,191.5) and (235,186.84) .. (235,181.1) -- cycle ;
\draw  [fill={rgb, 255:red, 155; green, 201; blue, 225 }  ,fill opacity=0.65 ] (304,91.3) .. controls (304,87.54) and (307.04,84.5) .. (310.8,84.5) -- (331.2,84.5) .. controls (334.96,84.5) and (338,87.54) .. (338,91.3) -- (338,184.7) .. controls (338,188.46) and (334.96,191.5) .. (331.2,191.5) -- (310.8,191.5) .. controls (307.04,191.5) and (304,188.46) .. (304,184.7) -- cycle ;
\draw  [fill={rgb, 255:red, 255; green, 255; blue, 255 }  ,fill opacity=0.37 ] (352.71,127.99) .. controls (352.71,124.56) and (355.49,121.79) .. (358.91,121.79) -- (378.51,121.79) .. controls (381.94,121.79) and (384.71,124.56) .. (384.71,127.99) -- (384.71,146.59) .. controls (384.71,150.01) and (381.94,152.79) .. (378.51,152.79) -- (358.91,152.79) .. controls (355.49,152.79) and (352.71,150.01) .. (352.71,146.59) -- cycle ;
\draw  [fill={rgb, 255:red, 245; green, 166; blue, 35 }  ,fill opacity=0.37 ] (108.71,93.79) .. controls (108.71,90.47) and (111.4,87.79) .. (114.71,87.79) -- (132.71,87.79) .. controls (136.03,87.79) and (138.71,90.47) .. (138.71,93.79) -- (138.71,111.79) .. controls (138.71,115.1) and (136.03,117.79) .. (132.71,117.79) -- (114.71,117.79) .. controls (111.4,117.79) and (108.71,115.1) .. (108.71,111.79) -- cycle ;
\draw  [fill={rgb, 255:red, 208; green, 2; blue, 27 }  ,fill opacity=0.34 ] (141.71,93.79) .. controls (141.71,90.47) and (144.4,87.79) .. (147.71,87.79) -- (165.71,87.79) .. controls (169.03,87.79) and (171.71,90.47) .. (171.71,93.79) -- (171.71,111.79) .. controls (171.71,115.1) and (169.03,117.79) .. (165.71,117.79) -- (147.71,117.79) .. controls (144.4,117.79) and (141.71,115.1) .. (141.71,111.79) -- cycle ;
\draw  [fill={rgb, 255:red, 245; green, 166; blue, 35 }  ,fill opacity=0.37 ] (108.71,163.79) .. controls (108.71,160.47) and (111.4,157.79) .. (114.71,157.79) -- (132.71,157.79) .. controls (136.03,157.79) and (138.71,160.47) .. (138.71,163.79) -- (138.71,181.79) .. controls (138.71,185.1) and (136.03,187.79) .. (132.71,187.79) -- (114.71,187.79) .. controls (111.4,187.79) and (108.71,185.1) .. (108.71,181.79) -- cycle ;
\draw  [fill={rgb, 255:red, 208; green, 2; blue, 27 }  ,fill opacity=0.34 ] (141.71,163.79) .. controls (141.71,160.47) and (144.4,157.79) .. (147.71,157.79) -- (165.71,157.79) .. controls (169.03,157.79) and (171.71,160.47) .. (171.71,163.79) -- (171.71,181.79) .. controls (171.71,185.1) and (169.03,187.79) .. (165.71,187.79) -- (147.71,187.79) .. controls (144.4,187.79) and (141.71,185.1) .. (141.71,181.79) -- cycle ;
\draw    (173,103.71) -- (187.71,103.71) ;
\draw [shift={(189.71,103.71)}, rotate = 180] [color={rgb, 255:red, 0; green, 0; blue, 0 }  ][line width=0.75]    (10.93,-3.29) .. controls (6.95,-1.4) and (3.31,-0.3) .. (0,0) .. controls (3.31,0.3) and (6.95,1.4) .. (10.93,3.29)   ;
\draw  [fill={rgb, 255:red, 19; green, 142; blue, 254 }  ,fill opacity=0.46 ] (188.71,93.8) .. controls (188.71,90.49) and (191.4,87.8) .. (194.71,87.8) -- (212.71,87.8) .. controls (216.03,87.8) and (218.71,90.49) .. (218.71,93.8) -- (218.71,111.8) .. controls (218.71,115.11) and (216.03,117.8) .. (212.71,117.8) -- (194.71,117.8) .. controls (191.4,117.8) and (188.71,115.11) .. (188.71,111.8) -- cycle ;
\draw  [fill={rgb, 255:red, 19; green, 142; blue, 254 }  ,fill opacity=0.46 ] (188.71,163.8) .. controls (188.71,160.49) and (191.4,157.8) .. (194.71,157.8) -- (212.71,157.8) .. controls (216.03,157.8) and (218.71,160.49) .. (218.71,163.8) -- (218.71,181.8) .. controls (218.71,185.11) and (216.03,187.8) .. (212.71,187.8) -- (194.71,187.8) .. controls (191.4,187.8) and (188.71,185.11) .. (188.71,181.8) -- cycle ;
\draw    (173,172.71) -- (187.71,172.71) ;
\draw [shift={(189.71,172.71)}, rotate = 180] [color={rgb, 255:red, 0; green, 0; blue, 0 }  ][line width=0.75]    (10.93,-3.29) .. controls (6.95,-1.4) and (3.31,-0.3) .. (0,0) .. controls (3.31,0.3) and (6.95,1.4) .. (10.93,3.29)   ;
\draw    (218,172.71) -- (232.71,172.71) ;
\draw [shift={(234.71,172.71)}, rotate = 180] [color={rgb, 255:red, 0; green, 0; blue, 0 }  ][line width=0.75]    (10.93,-3.29) .. controls (6.95,-1.4) and (3.31,-0.3) .. (0,0) .. controls (3.31,0.3) and (6.95,1.4) .. (10.93,3.29)   ;
\draw    (218.71,103.6) -- (233.43,103.6) ;
\draw [shift={(235.43,103.6)}, rotate = 180] [color={rgb, 255:red, 0; green, 0; blue, 0 }  ][line width=0.75]    (10.93,-3.29) .. controls (6.95,-1.4) and (3.31,-0.3) .. (0,0) .. controls (3.31,0.3) and (6.95,1.4) .. (10.93,3.29)   ;
\draw    (286.71,137.6) -- (301.43,137.6) ;
\draw [shift={(303.43,137.6)}, rotate = 180] [color={rgb, 255:red, 0; green, 0; blue, 0 }  ][line width=0.75]    (10.93,-3.29) .. controls (6.95,-1.4) and (3.31,-0.3) .. (0,0) .. controls (3.31,0.3) and (6.95,1.4) .. (10.93,3.29)   ;
\draw    (336.71,137.6) -- (351.43,137.6) ;
\draw [shift={(353.43,137.6)}, rotate = 180] [color={rgb, 255:red, 0; green, 0; blue, 0 }  ][line width=0.75]    (10.93,-3.29) .. controls (6.95,-1.4) and (3.31,-0.3) .. (0,0) .. controls (3.31,0.3) and (6.95,1.4) .. (10.93,3.29)   ;

\draw (261.33,135.57) node   [align=left] {\begin{minipage}[lt]{41.48pt}\setlength\topsep{0pt}
\begin{center}
Self-\\attention
\end{center}

\end{minipage}};
\draw (321.33,135.57) node   [align=left] {\begin{minipage}[lt]{26.86pt}\setlength\topsep{0pt}
\begin{center}
Sum-\\pool
\end{center}

\end{minipage}};
\draw (369.05,137.14) node   [align=left] {$\displaystyle E$};
\draw (125.05,103.14) node   [align=left] {$\displaystyle \xi _{1}$};
\draw (158.05,103.14) node   [align=left] {$\displaystyle y_{1}$};
\draw (125.05,173.14) node   [align=left] {$\displaystyle \xi _{t}$};
\draw (158.05,173.14) node   [align=left] {$\displaystyle y_{t}$};
\draw (204.05,103.14) node   [align=left] {$\displaystyle r_{1}$};
\draw (204.05,172.14) node   [align=left] {$\displaystyle r_{t}$};
\draw (145.86,139.96) node  [rotate=-90] [align=left] {$\displaystyle \dotsc $};
\draw (207.86,139.96) node  [rotate=-90] [align=left] {$\displaystyle \dotsc $};

\end{tikzpicture}

\caption{Exchangeable models}
\label{fig:app_arch_invariant}
\end{subfigure}
\hfill
\begin{subfigure}[b]{0.43\textwidth}
\centering
\tikzset{every picture/.style={line width=0.75pt}} 

\begin{tikzpicture}[x=0.75pt,y=0.75pt,yscale=-1,xscale=1]

\draw  [fill={rgb, 255:red, 155; green, 201; blue, 175 }  ,fill opacity=0.7 ] (231,92.3) .. controls (231,87.99) and (234.49,84.5) .. (238.8,84.5) -- (262.2,84.5) .. controls (266.51,84.5) and (270,87.99) .. (270,92.3) -- (270,183.7) .. controls (270,188.01) and (266.51,191.5) .. (262.2,191.5) -- (238.8,191.5) .. controls (234.49,191.5) and (231,188.01) .. (231,183.7) -- cycle ;
\draw  [fill={rgb, 255:red, 255; green, 255; blue, 255 }  ,fill opacity=0.37 ] (286.71,127.99) .. controls (286.71,124.56) and (289.49,121.79) .. (292.91,121.79) -- (312.51,121.79) .. controls (315.94,121.79) and (318.71,124.56) .. (318.71,127.99) -- (318.71,146.59) .. controls (318.71,150.01) and (315.94,152.79) .. (312.51,152.79) -- (292.91,152.79) .. controls (289.49,152.79) and (286.71,150.01) .. (286.71,146.59) -- cycle ;
\draw    (270,137.71) -- (284.71,137.71) ;
\draw [shift={(286.71,137.71)}, rotate = 180] [color={rgb, 255:red, 0; green, 0; blue, 0 }  ][line width=0.75]    (10.93,-3.29) .. controls (6.95,-1.4) and (3.31,-0.3) .. (0,0) .. controls (3.31,0.3) and (6.95,1.4) .. (10.93,3.29)   ;
\draw  [fill={rgb, 255:red, 245; green, 166; blue, 35 }  ,fill opacity=0.37 ] (104.71,93.79) .. controls (104.71,90.47) and (107.4,87.79) .. (110.71,87.79) -- (128.71,87.79) .. controls (132.03,87.79) and (134.71,90.47) .. (134.71,93.79) -- (134.71,111.79) .. controls (134.71,115.1) and (132.03,117.79) .. (128.71,117.79) -- (110.71,117.79) .. controls (107.4,117.79) and (104.71,115.1) .. (104.71,111.79) -- cycle ;
\draw  [fill={rgb, 255:red, 208; green, 2; blue, 27 }  ,fill opacity=0.34 ] (137.71,93.79) .. controls (137.71,90.47) and (140.4,87.79) .. (143.71,87.79) -- (161.71,87.79) .. controls (165.03,87.79) and (167.71,90.47) .. (167.71,93.79) -- (167.71,111.79) .. controls (167.71,115.1) and (165.03,117.79) .. (161.71,117.79) -- (143.71,117.79) .. controls (140.4,117.79) and (137.71,115.1) .. (137.71,111.79) -- cycle ;
\draw  [fill={rgb, 255:red, 245; green, 166; blue, 35 }  ,fill opacity=0.37 ] (104.71,163.79) .. controls (104.71,160.47) and (107.4,157.79) .. (110.71,157.79) -- (128.71,157.79) .. controls (132.03,157.79) and (134.71,160.47) .. (134.71,163.79) -- (134.71,181.79) .. controls (134.71,185.1) and (132.03,187.79) .. (128.71,187.79) -- (110.71,187.79) .. controls (107.4,187.79) and (104.71,185.1) .. (104.71,181.79) -- cycle ;
\draw  [fill={rgb, 255:red, 208; green, 2; blue, 27 }  ,fill opacity=0.34 ] (137.71,163.79) .. controls (137.71,160.47) and (140.4,157.79) .. (143.71,157.79) -- (161.71,157.79) .. controls (165.03,157.79) and (167.71,160.47) .. (167.71,163.79) -- (167.71,181.79) .. controls (167.71,185.1) and (165.03,187.79) .. (161.71,187.79) -- (143.71,187.79) .. controls (140.4,187.79) and (137.71,185.1) .. (137.71,181.79) -- cycle ;
\draw    (169,103.71) -- (183.71,103.71) ;
\draw [shift={(185.71,103.71)}, rotate = 180] [color={rgb, 255:red, 0; green, 0; blue, 0 }  ][line width=0.75]    (10.93,-3.29) .. controls (6.95,-1.4) and (3.31,-0.3) .. (0,0) .. controls (3.31,0.3) and (6.95,1.4) .. (10.93,3.29)   ;
\draw  [fill={rgb, 255:red, 19; green, 142; blue, 254 }  ,fill opacity=0.46 ] (184.71,93.8) .. controls (184.71,90.49) and (187.4,87.8) .. (190.71,87.8) -- (208.71,87.8) .. controls (212.03,87.8) and (214.71,90.49) .. (214.71,93.8) -- (214.71,111.8) .. controls (214.71,115.11) and (212.03,117.8) .. (208.71,117.8) -- (190.71,117.8) .. controls (187.4,117.8) and (184.71,115.11) .. (184.71,111.8) -- cycle ;
\draw  [fill={rgb, 255:red, 19; green, 142; blue, 254 }  ,fill opacity=0.46 ] (184.71,163.8) .. controls (184.71,160.49) and (187.4,157.8) .. (190.71,157.8) -- (208.71,157.8) .. controls (212.03,157.8) and (214.71,160.49) .. (214.71,163.8) -- (214.71,181.8) .. controls (214.71,185.11) and (212.03,187.8) .. (208.71,187.8) -- (190.71,187.8) .. controls (187.4,187.8) and (184.71,185.11) .. (184.71,181.8) -- cycle ;
\draw    (169,172.71) -- (183.71,172.71) ;
\draw [shift={(185.71,172.71)}, rotate = 180] [color={rgb, 255:red, 0; green, 0; blue, 0 }  ][line width=0.75]    (10.93,-3.29) .. controls (6.95,-1.4) and (3.31,-0.3) .. (0,0) .. controls (3.31,0.3) and (6.95,1.4) .. (10.93,3.29)   ;
\draw    (214,172.71) -- (228.71,172.71) ;
\draw [shift={(230.71,172.71)}, rotate = 180] [color={rgb, 255:red, 0; green, 0; blue, 0 }  ][line width=0.75]    (10.93,-3.29) .. controls (6.95,-1.4) and (3.31,-0.3) .. (0,0) .. controls (3.31,0.3) and (6.95,1.4) .. (10.93,3.29)   ;
\draw    (214.71,103.6) -- (229.43,103.6) ;
\draw [shift={(231.43,103.6)}, rotate = 180] [color={rgb, 255:red, 0; green, 0; blue, 0 }  ][line width=0.75]    (10.93,-3.29) .. controls (6.95,-1.4) and (3.31,-0.3) .. (0,0) .. controls (3.31,0.3) and (6.95,1.4) .. (10.93,3.29)   ;

\draw (251.33,135.57) node   [align=left] {\begin{minipage}[lt]{29.67pt}\setlength\topsep{0pt}
\begin{center}
LSTM
\end{center}

\end{minipage}};
\draw (303.05,137.14) node   [align=left] {$\displaystyle E$};
\draw (121.05,103.14) node   [align=left] {$\displaystyle \xi _{1}$};
\draw (154.05,103.14) node   [align=left] {$\displaystyle y_{1}$};
\draw (121.05,173.14) node   [align=left] {$\displaystyle \xi _{t}$};
\draw (154.05,173.14) node   [align=left] {$\displaystyle y_{t}$};
\draw (200.05,103.14) node   [align=left] {$\displaystyle r_{1}$};
\draw (200.05,172.14) node   [align=left] {$\displaystyle r_{t}$};
\draw (141.86,139.96) node  [rotate=-90] [align=left] {$\displaystyle \dotsc $};
\draw (203.86,139.96) node  [rotate=-90] [align=left] {$\displaystyle \dotsc $};

\end{tikzpicture}

\caption{Non-exchangeable models }
\label{fig:app_arch_lstm}
\end{subfigure}
\caption{History encoder architectures for different classes of models. When conditional independence of the experiments holds, we use self-attention, followed by sum-pooling, making the history encoder permutation invariant. When experiments are not conditionally independent we use LSTM  and only keep its last hidden state. We train two separate history encoders---one for the design network $\designnet$ and one for the critic network $\criticnet$, although we note that all the weights except those in the head layers can be shared. } 
\label{fig:history_encoder_architectures}
\end{figure}
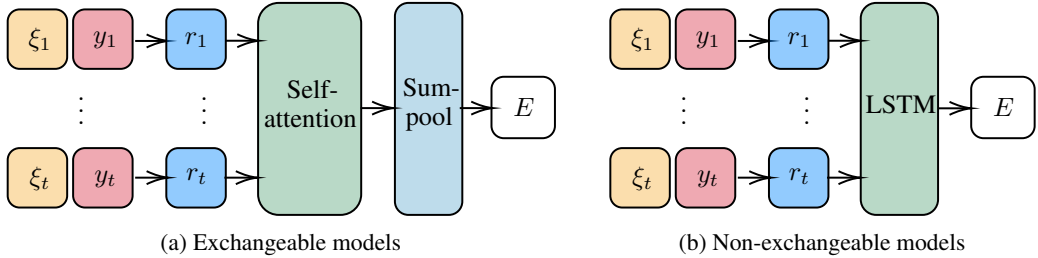

\section{Experiments}\label{sec:appendix_experiments}

\subsection{Computational resources}

All of the experiments were implemented in Python using open-source software. All estimators and models were implemented in PyTorch \citep{paszke2019pytorch} (BSD license) and Pyro \citep{pyro} (Apache License Version 2.0), whilst MlFlow \citep{zaharia2018accelerating} (Apache License Version 2.0) was used for experiment tracking and management. The self-attention architecture from \cite{dubois2020npf} was used to implement the self-attention mechanisms in the design and critic networks. For full details on package versions, environment set-up and commands for running the code, see instructions in the \texttt{README.md} file.

Experiments were ran on internal GPU clusters, consisting of GeForce RTX 3090 (24GB memory), GeForce RTX 2080 Ti (11GB memory) and GeForce GTX 1080 Ti GPUs (11GB memory).

The deployment-time of iDAD (Table~\ref{tab:locfin_T10_deployment_times}) was estimated on a lightweight CPU machine with the following  specifications
\begin{center}
\begin{tabular}{ll}
    Processor & 2.8 GHz Quad-Core Intel Core i7 \\
    Memory &  16 GB \\
    Operating system & macOS Big Sur v11.2.3
\end{tabular}
\end{center}

\subsection{CO2 Emission Related to Experiments}

Experiments were conducted using a private infrastructure, which has an estimated carbon efficiency of 0.432 kgCO$_2$eq/kWh. A cumulative of 160 hours of computation was performed on hardware of type RTX 2080 Ti (TDP of 250W), or similar. The training time of each experiment (including the baselines that require optimization), took on average between 1-3 GPU hours, depending on the number of experiments $T$. 

Total emissions are estimated to be 17.28 kgCO$_2$eq of which 0\% was directly offset.

Estimations were conducted using the \href{https://mlco2.github.io/impact#compute}{Machine Learning Impact calculator} presented in \cite{lacoste2019quantifying}.

\subsection{Traditional sequential BOED with variational posterior estimator}\label{sec:app_variational_baseline}
The variational posterior estimator from \citep{foster2019variational} is based on the Barbar-Agakov lower bound \citep{barber2003imalgorithm}, which takes the form
\begin{align}
    \mathcal{L}^{\text{post}}(\xi, q_\psi) = \E_{\prior \obslik}\left[ \log \frac{ q_\psi(\theta;y,\xi)}{p(\theta)} \right] \leq  \mathcal{I}(\xi),
    \label{eq:ba_bound}
\end{align}
where $q_\psi$ is any normalized distribution over the parameters $\theta$. The bound is tight when $q_\psi(\theta; y, \xi) = p(\theta|y, \xi)$, i.e. if we can recover the true posterior. We assume mean-field variational family and optimize the parameters $\psi$ by maximizing the bound~\eqref{eq:ba_bound} using stochastic gradient schemes. Simultaneously we optimize the bound with respect to the design variable $\xi$ to select the optimal design $\xi^*$. At the inference stage, denoting by $y^*$ the outcome of experiment $\xi^*$, we obtain an approximate posterior by evaluating $q_\psi(\theta; y^*, \xi^*)$, i.e. we reuse the learnt variational posterior. We repeat this process at each stage of the experiments by substituting the the approximate posterior, $q_\psi(\theta; y^*, \xi^*)$,  as the prior in~\eqref{eq:ba_bound}.

\subsection{Location Finding}\label{sec:appendix_experiments_locfin}

In this experiment we have $K$ hidden objects (\textit{sources}) in $\R^2$ and we wish to learn their locations, $\theta= \{\theta_1, \dots, \theta_K\}$. The number of sources, $K$, is assumed to be known. Each source emits a signal with intensity obeying the inverse-square law. Put differently, if a source is located at $\theta_k$ and we perform a measurement at a point $\xi$, the signal strength emitted from that source only will be proportional to $\frac{1}{\|\theta_k-\xi\|^2}$. 
The total intensity at location $\xi$, emitted from all $K$ sources, is a superposition of the individual ones
\begin{equation}
\mu(\theta, \xi) = b + \sum_{k=1}^K \frac{\alpha_k}{m+ \|\theta_k-\xi\|^2},
\end{equation}
where $\alpha_k$ can be known constants or random variables,  $b>0$ is a constant background signal and $m$ is a constant, controlling the maximum  signal. 

We  place a standard normal prior on each of the location parameters $\theta_k$ and we observe the log-total intensity with some Gaussian noise. We therefore have the following prior and likelihood:
\begin{equation}
\theta_k \iid \mathcal{N}(0_d, I_d)\quad \log y \mid \theta, \quad \xi \sim \mathcal{N}(\log \mu(\theta, \xi), \sigma^2)
\end{equation}

\subsubsection{Training details}\label{sec:app_locfin_traindeets} 
All our experiments are performed with the following model hyperparameters

\begin{center}
\begin{tabular}{lr}
    Parameter & Value \\
    \midrule
    Number of sources, K & 2 \\
    $\alpha_k$ & 1 $\forall k$ \\
    Max signal, $m$ & $10^{-4}$ \\
    Base signal, $b$ & $10^{-1}$ \\
    Observation noise scale,  $\sigma$ & 0.5
\end{tabular}
\end{center}

The architecture of the design network $\designnet$ used in Table~\ref{tab:locfin_T10_highd} and all its hyperparameters are in the following tables. For the encoder of the design-outcome pairs we used the following:

\begin{center}
\begin{tabular}{llrrr}
    Layer & Description & iDAD, InfoNCE & iDAD, NWJ & Activation \\
    \midrule
    Input & $\xi, y$ & 3 & 3 & - \\
    H1 & Fully connected & 64 & 64 & ReLU \\
    H2 & Fully connected & 512 & 512 & ReLU \\
    Output & Fully connected & 64 & 64 & - \\
    Attention & 8 heads & 64& 64 & - 
\end{tabular}
\end{center}

The output of the encoder, $R(h_t)$, is fed into an emitter network, for which we used the following:
\begin{center}
\begin{tabular}{llrrr}
    Layer & Description & iDAD, InfoNCE & iDAD, NWJ & Activation \\
    \midrule
    Input & $R(h_t)$ & 64 & 64 & - \\
    H1 & Fully connected & 256 & 256 & ReLU \\
    H2 & Fully connected & 64 & 64 & ReLU \\
    Output & Fully connected & 2 & 2 & - \\
\end{tabular}
\end{center}

The architecture of the critic network $\criticnet$ used in Table~\ref{tab:locfin_T10_highd} and all its hyperparameters are in the tables that follow. First, the encoder network of the latent variables is:
\begin{center}
\begin{tabular}{llrrr}
    Layer & Description & iDAD, InfoNCE & iDAD, NWJ & Activation \\
    \midrule 
    Input & $\theta$ & 4 & 4 & - \\ 
    H1  & Fully connected & 16 & 16 & ReLU \\
    H2  & Fully connected & 64 & 64 & ReLU \\
    H3  & Fully connected & 512 & 512 & ReLU \\
    Output & Fully connected & 64 & 64 & -\\
\end{tabular}
\end{center}

For the design-outcome pairs encoder we use the same architecture as in the design network, namely:
\begin{center}
\begin{tabular}{llrrr}
    Layer & Description & iDAD, InfoNCE & iDAD, NWJ & Activation \\
    \midrule
    Input & $\xi, y$ & 3 & 3 & - \\
    H1 & Fully connected & 64 & 64 & ReLU \\
    H2 & Fully connected & 512 & 512 & ReLU \\
    Output & Fully connected & 64 & 64 & - \\
    Attention & 8 heads & 64& 64 & - 
\end{tabular}
\end{center}

The output of the encoder, $R(h_t)$, is fed into fully connected head layers:
\begin{center}
\begin{tabular}{llrrr}
    Layer & Description & iDAD, InfoNCE & iDAD, NWJ & Activation \\
    \midrule
    Input & $R(h_t)$ & 64 & 64 & - \\
    H1 & Fully connected & 1024 & 1024 & ReLU \\
    H2 & Fully connected & 512 & 512 & ReLU   \\
    H3 & Fully connected & 512 & 512 & ReLU  \\
    Output & Fully connected & 64 & 64 & - \\
\end{tabular}
\end{center}

The optimisation was performed with Adam \citep{kingma2014adam} with \verb|ReduceLROnPlateau| learning rate scheduler, with the following hyperparameters:
\begin{center}
\begin{tabular}{lrr}
    Parameter & iDAD, InfoNCE & iDAD, NWJ \\
    \midrule
    Batch size & 2048 & 2048 \\
    Number of contrastive/negative samples &   2047 & 2047 \\
    Number of gradient steps & 100000 & 100000 \\ 
    Initial learning rate (LR) & 0.0005 & 0.0005 \\
    LR annealing factor & 0.8 & 0.8 \\
    LR annealing frequency (if no improvement) & 2000 & 2000  
    
\end{tabular}
\end{center}

\subsubsection{Performance of the variational baseline}

\begin{figure}[t]
  \centering
  \includegraphics[width=0.32\textwidth, height=4.5cm]{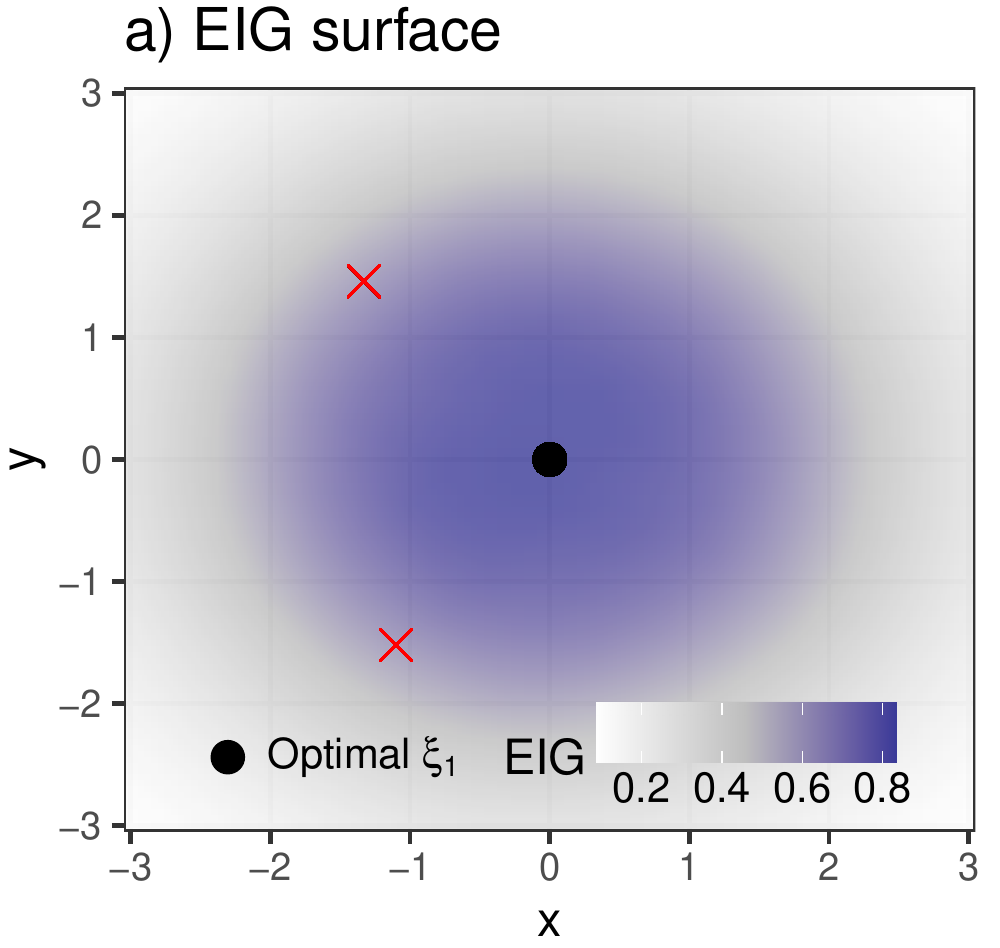}
  \includegraphics[width=0.32\textwidth, height=4.5cm]{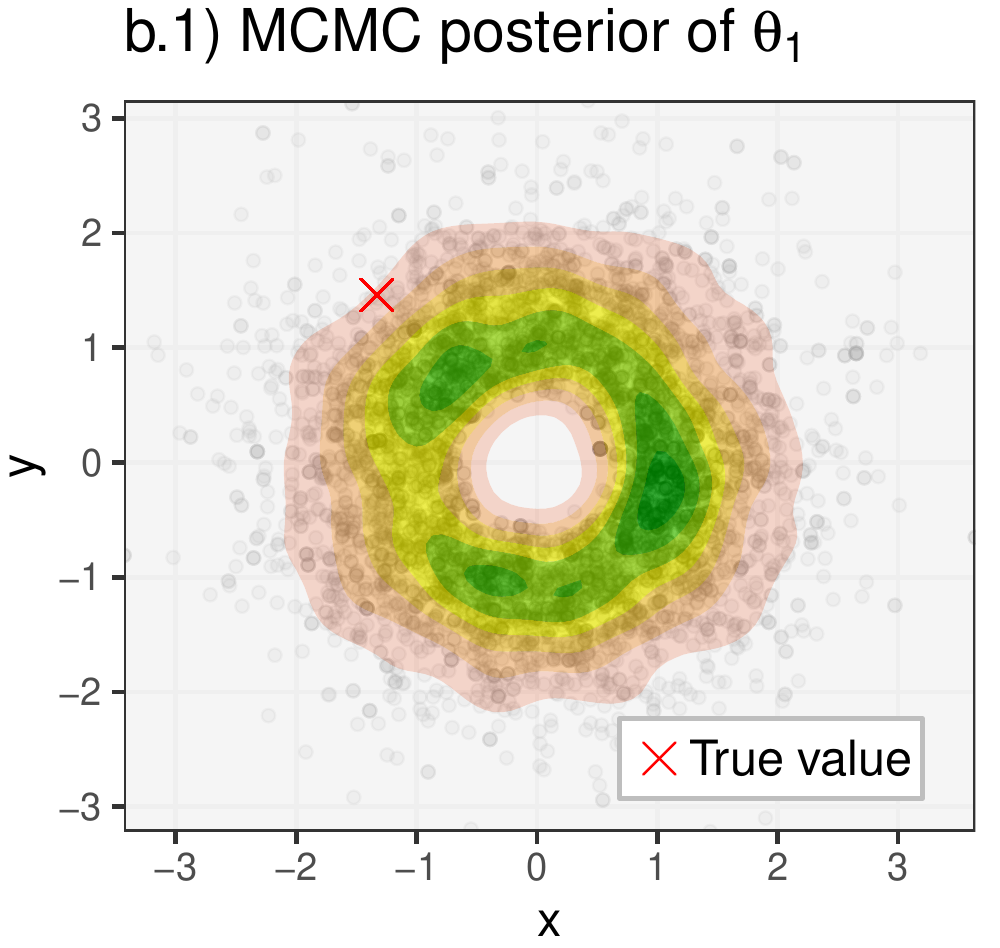}
  \includegraphics[width=0.32\textwidth, height=4.5cm]{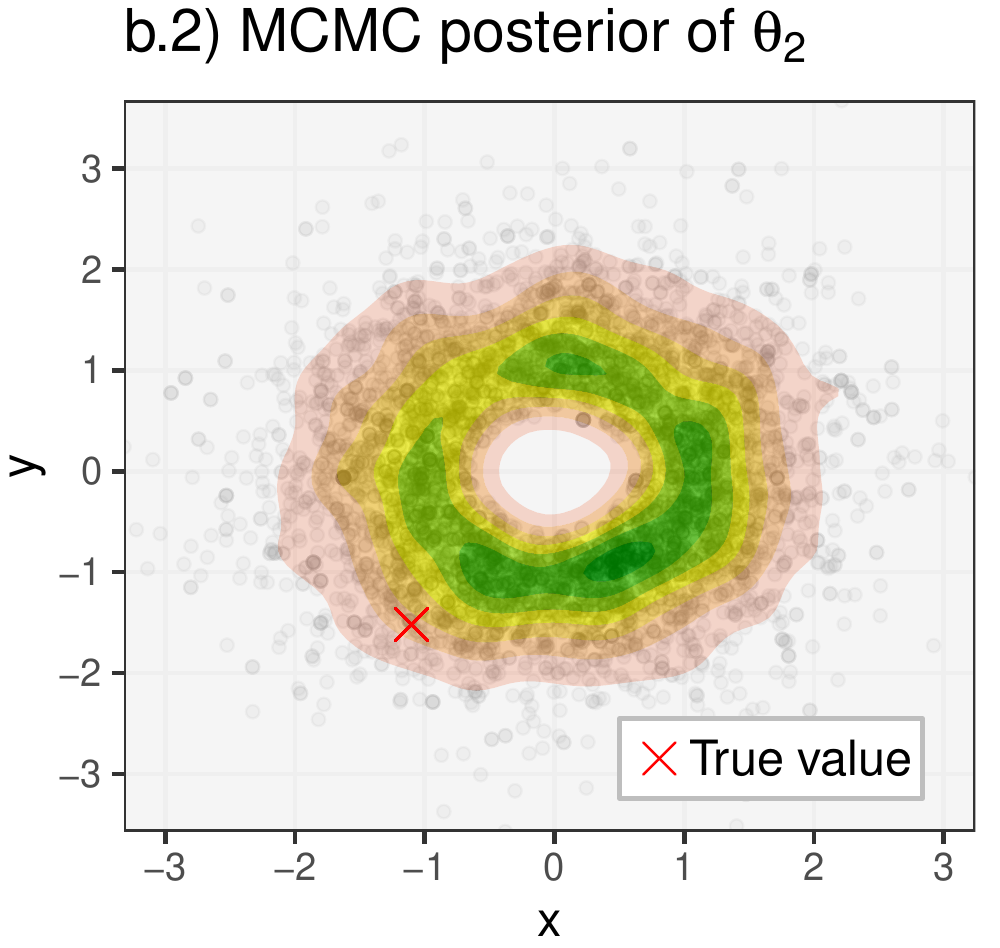}
  \caption{a): EIG surface induced by the prior; b) Samples from $p(\theta| \xi_1, y_1)$---the posterior distribution of the locations, after performing experiment $\xi_1$ and observing $y_1$, along with a KDE.}
  \label{fig:app_locfin_posteriors}
\end{figure}

As we saw in Table~\ref{tab:locfin_T10_highd}, this variational approach to (myopic) adaptive BOED performed very poorly, despite its large computational budget. The likely reason for that is that the mean-field variational approximation cannot adequately capture the complex non-Gaussian posterior of this problem. Figure~\ref{fig:app_locfin_posteriors} clearly demonstrates this: before any data is observed it is optimal to sample at the origin (since the prior is centered at it). After observing a low signal (the locations in this example are not close to the origin), we can only conclude that the sources are not within a small radius of the origin, but anywhere outside of it would be a plausible location, as indeed indicated by the fitted posteriors.

\subsubsection{Hyperparameter selection}\label{sec:app_locfin_hyperparam}
We did not perform extensive hyperparameters search; in particular, the network sizes were guided by two hyperparametes: hidden-dimension ($HD=512$) and encoding dimension ($ED=64$). We set-up all the networks to scale up with the number of experiments as follows:
\begin{itemize}
    \item Design-outcome encoder has three hidden layers of sizes $[64, HD, ED]$.
    \item Design emitter network  has three hidden layers of sizes [$HD/2, ED, 2]$, where 2 is the dimension of the design variable.
    \item The latent encoder for the critic network has four hidden layers of sizes $[16, 64, HD, ED]$.
    \item The critic design-outcome encoder's head layer has four hidden layers of sizes $[HD\times\log(T), HD\times\log(T)/2, HD, ED]$.
\end{itemize}

Since our multi-head attention layer has $8$ heads, the encoding dimension we use has to be a multiple of $8$. In addition to $ED=64$ we tried $ED=32$ which provided marginally worse results. We did not try other values for these hyperparameters.

For the learning rate, we tried $0.001$, which was too high, as well as $0.0005$ (which we selected) and $0.0001$ (which yielded very similar results). 

We performed similar level of hyperparameter tuning for all trainable baselines as well (DAD, MINEBED and SG-BOED). 

\subsubsection{Further ablation studies}\label{sec:app_experiments_locfin_furtherresults}

\begin{table}[t]
\centering
	\caption{Upper bounds on the total information, $\mathcal{I}_{10}(\pi)$, for the location finding experiment in Section~\ref{sec:experiments_locfin}. The bounds were estimated using $L=5\times10^5$ contrastive samples. Errors indicate $\pm1$ s.e. estimated over 4096  histories (128 for variational). Lower bounds are presented in Table \ref{tab:locfin_T10_highd}.
	} 
	\label{tab:locfin_T10_highd_upper}
{
	\begin{tabular}{lcccc}
		Method \textbackslash~$\theta$ dim.   &  4D &  6D & 10D & 20D   \\
		\toprule
		Random &   4.794 $\pm$ 0.041 &  3.506 $\pm$ 0.004 &  1.895 $\pm$ 0.003 & 0.552 $\pm$  0.001  \\
		MINEBED       & 5.522    $\pm$  0.028 & 4.229 $\pm$ 0.029 &  2.459 $\pm$ 0.029 & 0.801 $\pm$ 0.019  \\
		SG-BOED       &  5.549   $\pm$ 0.028 & 4.220 $\pm$ 0.030 & 2.455 $\pm$ 0.029 &  0.803 $\pm$ 0.019 \\
		Variational   &  4.644 $\pm$  0.146 &  3.626 $\pm$ 0.167 & 2.181 $\pm$ 0.152 & 0.669 $\pm$ 0.097 \\
		\textbf{iDAD} (NWJ)    &  \textbf{7.806 $\pm$ 0.050} & 5.851 $\pm$ 0.041 & \textbf{3.264} $\pm$ \textbf{0.039} & \textbf{0.877 $\pm$ 0.022}  \\
		\textbf{iDAD} (InfoNCE) &  \textbf{7.863} $\pm$ \textbf{0.043} & \textbf{6.068} $\pm$ \textbf{0.039} & \textbf{3.257} $\pm$ \textbf{0.040} &\textbf{ 0.872} $\pm$ \textbf{0.020} \\
		\midrule
		DAD     &  8.034 $\pm$  0.038  & 6.310 $\pm$ 0.031 &  3.358  $\pm$ 0.040 & 0.953 $\pm$  0.022 \\
		\bottomrule
	\end{tabular}
}
\end{table}

{\bf Scalability with number of experiments.} We first demonstrate that iDAD can scale to a larger number of experiments $T$. We train policy networks to perform $T=20$ experiments and compare them to baselines in Table~\ref{tab:locfin_T20_results}. We omit the variational baseline as it is too computationally costly to run for a large enough number of histories, and as we saw in the previous subsection, it is not particularly suited to this model.    
\begin{table}[t]
\centering
	\caption{Upper and lower bounds on the total information, $\mathcal{I}_{20}(\pi)$, for the location finding experiment in 2D from Section~\ref{sec:experiments_locfin}. The bounds were estimated using $L=5\times10^5$ contrastive samples. Errors indicate $\pm1$ s.e. estimated over 4096  histories. 
	}
	\label{tab:locfin_T20_results}
{
	\begin{tabular}{lcc}
		Method  &  Lower bound &  Upper bound   \\
		\toprule
		Random       &  \phantom{0}7.000 $\pm$ 0.034 &  \phantom{0}7.020 $\pm$ 0.034   \\
		MINEBED      &  \phantom{0}7.672   $\pm$ 0.030    & \phantom{0}7.690 $\pm$   0.031      \\
		SG-BOED        & \phantom{0}7.701   $\pm$ 0.030   &  \phantom{0}7.728   $\pm$ 0.031     \\
		\textbf{iDAD} (NWJ)    &  \phantom{0}\textbf{9.961} $\pm$ \textbf{0.033}  & \textbf{10.372} $\pm$   \textbf{0.048}     \\
		\textbf{iDAD} (InfoNCE)& \textbf{10.075} $\pm$ \textbf{0.032}  & \textbf{10.463} $\pm$ \textbf{0.043}     \\ 
		\midrule
		DAD     & 10.424   $\pm$  0.031   &  10.996   $\pm$ 0.049 \\
		\bottomrule
	\end{tabular}
}
\end{table}

{\bf Training stability.} To assess the robustness of the results and the stability of the training process, we trained 5 additional iDAD networks with each of the two bounds, using different seeds but the same hyperparameters (described in Subsection~\ref{sec:app_locfin_traindeets}) we used to produce the results of the location finding experiment in 2D (Table~\ref{tab:locfin_T10_highd} in the main text). We report upper and lower bounds on the mutual information along with their mean and standard error in the table below.

\begin{center}
\begin{tabular}{llrrrrrrr}
    Estimator & Bound & Run 1 &  Run 2 & Run 3 & Run 4 & Run 5 & \textbf{Mean} & \textbf{SE}\\
    \midrule
    InfoNCE   & Lower & 7.826 & 7.682 & 	7.856	& 7.713 & 7.804 & \textbf{7.776} & \textbf{0.034}  \\
    InfoNCE   & Upper & 7.933 & 7.791 &     7.856   &  7.807 & 7.925 & \textbf{7.862} & \textbf{0.029} \\
    \midrule
    NWJ & Lower & 7.820 &	7.545 &	7.592 & 7.555 &	7.691 &  \textbf{7.641} & \textbf{0.052}\\
    NWJ & Upper & 7.976 &   7.640 & 7.669 & 7.651 & 7.800 & \textbf{7.747} & \textbf{0.064}
\end{tabular}
\end{center}

We can see that the iDAD networks trained with InfoNCE are highly stable, with the 5 additional runs achieving very similar mutual information values to each other and to the iDAD network used the report the results in the main paper. The performance of the iDAD networks trained with the NWJ bound is more variable and empirically achieve slightly lower average value of mutual information. This higher variance is in-line with the discussion in Section~\ref{sec:app_comp}.

We similarly verify the robustness of the static baselines, reporting the results in the table below:
\begin{center}
\begin{tabular}{llrrrrrrr}
    Estimator & Bound & Run 1 &  Run 2 & Run 3 & Run 4 & Run 5 & \textbf{Mean} & \textbf{SE}\\
    \midrule
    SG-BOED   & Lower & 5.537 & 5.536 & 5.473 & 5.523 & 5.518 & \textbf{5.517} & \textbf{0.013}  \\
    SG-BOED   & Upper & 5.553 & 5.548 & 5.491 & 5.541 & 5.531 & \textbf{5.533} &  \textbf{0.012} \\
    \midrule
    MINEBED & Lower & 5.460 & 5.506 & 5.553 & 5.539 & 5.565 & \textbf{5.524} & \textbf{0.021}  \\
    MINEBED & Upper & 5.473 & 5.526 & 5.567 & 5.554 & 5.574 &  \textbf{5.540} & \textbf{0.022}
\end{tabular}
\end{center}

{\bf Performance sensitivity to errors in the policy.} Finally, we investigate the effect of slight errors in the design policy network. To this end, we look at the performance achieved by partially trained design networks (there will be some errors or inaccuracies in networks that were not trained until convergence). Table  \ref{tab:locfin_error_sensitivity} shows the performance of iDAD as a function of training time, demonstrating that small errors in the network only lead to small drops in performance.

In detail, our results show that with just $8\%$ of the total training budget, this slightly inaccurate network still performs relatively well, achieving total mutual information of 7.1, compared to the fully trained network that reached 7.8. We also highlight that iDAD outperforms all baselines with as little as $1\%$ of the total training budget (the best performing baseline achieves mutual information of 5.5, see Table \ref{tab:locfin_T10_highd}).

\begin{table}[t]
\centering
	\caption{Ablation study on the performance of iDAD as a function of training time for the location finding experiment.
	}
	\label{tab:locfin_error_sensitivity}
\begin{tabular}{lr}
	Training budget &	MI lower bound \\
	\midrule
    0.1$\%$ &	3.38 \\
    1.0$\%$	&   6.09 \\
    2.0$\%$ &	6.46\\
    4.0$\%$	&   6.81\\
    8.0$\%$ &	7.08\\
    16.0$\%$ &	7.33\\
    32.0$\%$ &	7.56\\
    64.0$\%$ &	7.78\\
    100.0$\%$ &	7.82\\
    \bottomrule
\end{tabular}
\end{table}

\subsection{PK model} \label{sec:appendix_experiments_pk}

The drug concentration $z$, measured $\xi$ hours after administering it, and the corresponding noisy observation $y$ are given by
\begin{equation}
    z(\xi;\theta) = \frac{D_V}{V} \frac{k_\alpha}{k_\alpha - k_e} [e^{-k_e \xi} - e^{-k_\alpha \xi}], \quad y(\xi;\theta) = z(\xi;\theta)(1 + \epsilon) + \eta
\end{equation}
where $\theta = (k_\alpha, k_e, V)$,  $D_V=400$ is a constant, $\epsilon \sim \mathcal{N}(0, 0.01)$ is multiplicative noise to account for heteroscedasticity and $\eta \sim \mathcal{N}(0, 0.1)$ is an additive observation noise. Since both noise sources are Gaussian, the observation likelihood is also Gaussian i.e.
\begin{equation}
    y(\xi;\theta) \sim \mathcal{N}(z(\xi;\theta), 0.01 z(\xi;\theta)^2 + 0.1)
\end{equation}
The prior for the parameters $\theta$ that we used
\begin{equation}
    \log \theta \sim \mathcal{N}\left(\left[\begin{array}{c}
    \log 1 \\
    \log 0.1 \\
    \log 20
    \end{array}\right],\left[\begin{array}{ccc}
    0.05 & 0 & 0 \\
    0 & 0.05 & 0 \\
    0 & 0 & 0.05
    \end{array}\right]\right)
\end{equation}

\subsubsection{Training details}

The architecture of the design network $\designnet$ used for Figure~\ref{fig:pk_trajectories_designs_eval} and~\ref{fig:pk_loss} and all its hyperparameters are in the following tables. For the encoder of the design-outcome pairs we used the following:

\begin{center}
\begin{tabular}{llrrr}
    Layer & Description & iDAD, InfoNCE & iDAD, NWJ & Activation \\
    \midrule
    Input & $\xi, y$ & 2 & 2 & - \\
    H1 & Fully connected & 64 & 64 & ReLU \\
    H2 & Fully connected & 512 & 512 & ReLU \\
    Output & Fully connected & 32 & 32 & - \\
    Attention & 8 heads & 32 & 32 & - 
\end{tabular}
\end{center}

The outputs of the encoder, $\{R(h_t)\}_{t=1}^T$, are summed and the resulting vector (of dimension 32) is fed into an emitter network, for which we used the following:
\begin{center}
\begin{tabular}{llrrr}
    Layer & Description & iDAD, InfoNCE & iDAD, NWJ & Activation \\
    \midrule
    Input & $R(h_t)$ & 32 & 32 & - \\
    H1 & Fully connected & 256 & 256 & ReLU \\
    H2 & Fully connected & 32 & 32 & ReLU \\
    Output & Fully connected & 1 & 1 & Sigmoid \\
\end{tabular}
\end{center}

The architecture of the critic network $\criticnet$ used in Figures~\ref{fig:pk_trajectories_designs_eval} and~\ref{fig:pk_loss} and all its hyperparameters are in the following tables. For the encoder of the design-outcome pairs we used the same architecture as for the design network, namely:
\begin{center}
\begin{tabular}{llrrr}
    Layer & Description & iDAD, InfoNCE & iDAD, NWJ & Activation \\
    \midrule
    Input & $\xi, y$ & 2 & 2 & - \\
    H1 & Fully connected & 64 & 64 & ReLU \\
    H2 & Fully connected & 512 & 512 & ReLU \\
    Output & Fully connected & 32 & 32 & - \\
    Attention & 8 heads & 32 & 32 & - 
\end{tabular}
\end{center}
The resulting pooled representation, $R(h_T)$ is fed into fully connected critic head layers with the following architecture:
\begin{center}
\begin{tabular}{llrrr}
    Layer & Description & iDAD, InfoNCE & iDAD, NWJ & Activation \\
    \midrule
    Input & $R(h_T)$ & 32 & 32 & - \\
    H1 & Fully connected & 512 & 512 & ReLU \\
    H2 & Fully connected & 256 & 256 & ReLU   \\
    H3 & Fully connected & 512 & 512 & ReLU  \\
    Output & Fully connected & 32 & 32 & - \\
\end{tabular}
\end{center}

Finally, for the latent variable encoder network we used:
\begin{center}
\begin{tabular}{llrrr}
    Layer & Description & iDAD, InfoNCE & iDAD, NWJ & Activation \\
    \midrule 
    Input & $\theta$ & 3 & 3 & - \\ 
    H1  & Fully connected & 8 & 8 & ReLU \\
    H2  & Fully connected & 64 & 64 & ReLU \\
    H3  & Fully connected & 512 & 512 & ReLU \\
    Output & Fully connected & 32 & 32 & -\\
\end{tabular}
\end{center}

The optimisation was performed with Adam \citep{kingma2014adam} with the following hyperparameters:
\begin{center}
\begin{tabular}{lrr}
    Parameter & iDAD, InfoNCE & iDAD, NWJ \\
    \midrule
    Batch size &  1024 & 1024  \\
    Number of contrastive/negative samples &    1023 & 1023  \\
    Number of gradient steps & 100000  & 100000  \\ 
    Initial learning rate (LR) &  0.0001 &  0.0001  \\
    LR annealing factor & 0.8 & 0.5 \\
    LR annealing frequency (if no improvement) & 2000 & 2000  
\end{tabular}
\end{center}

\subsubsection{Hyperparameter selection}
Hyperparameter selection was done in a way similar to the Location Finding experiment (see~\ref{sec:app_locfin_hyperparam}). We tried encodin dimensions $ED=32, 64$ and selected the smaller size as there were no clear benefits to larger networks (relatively speaking, this is an easier model that the location finding). We used the same hidden dimension, i.e. $HD=512$. In terms of learning rates, we tried $0.0001$, $0.0005$ and $0.001$; we found $0.0001$ to be appropriate, although NWJ bound was exhibiting high variance, so used a smaller learning rate annealing factor for that network (0.5 vs 0.8 for InfoNCE). We performed similar level of hyperparameter tuning for all trainable baselines as well (DAD, MINEBED and SG-BOED).

\subsubsection{Further results}
Table~\ref{tab:pharmaco_T5_results} reports the results shown in Figure~\ref{fig:pk_trajectories_designs_eval}\textcolor{red}{c)}, along with the corresponding upper bounds and deployment times, while Table~\ref{tab:pharmaco_T10_results} reports the results for $T=10$.
\begin{table}[t]
\centering
	\caption{Upper and lower bounds on the total information, $\mathcal{I}_{5}(\pi)$, for the pharmacokinetic experiment. Errors indicate $\pm1$ s.e. estimated over 4096 (126 for variational) histories and $L=5\times10^5$.
	}
	\label{tab:pharmaco_T5_results}
	\begin{tabular}{lrrr}
		Method               &  Lower bound &  Upper bound & Deployment time   \\
		\toprule
		Random            & 2.523 $\pm$ 0.033 &   2.523 $\pm$ 0.033 & N/A  \\
		Equal interval    & 2.651 $\pm$ 0.022 &   2.651 $\pm$ 0.022 & N/A \\
		MINEBED           &  2.955 $\pm$    0.030  &   2.956  $\pm$ 0.030 & N/A \\
		SG-BOED           &     2.985   $\pm$   0.027 &  2.985 $\pm$ 0.027   & N/A \\
		Variational  & 2.683 $\pm$ 0.093    & 2.683 $\pm$ 0.093 & 505.4\phantom{00} $\pm$ 1\%\\ %
	    \textbf{IDAD} (NWJ) &  \textbf{3.163} $\pm$ \textbf{0.023}  & \textbf{3.163} $\pm$ \textbf{0.023}  &  0.007 $\pm$ 7\% \\
		\textbf{IDAD} (InfoNCE) & \textbf{3.200} $\pm$ \textbf{0.024}  & \textbf{3.200} $\pm$ \textbf{0.024}  &  0.007 $\pm$ 8\% \\
	    \midrule
	    DAD & 3.234 $\pm$ 0.023 & 3.234 $\pm$ 0.023 & 0.002 $\pm$ 7\% \\
		\bottomrule
	\end{tabular}
	  \vspace{-10pt}
\end{table}

\begin{table}[t]
\centering
	\caption{Upper and lower bounds on the total information, $\mathcal{I}_{10}(\pi)$, for the pharmacokinetic experiment. Errors indicate $\pm1$ s.e. estimated over 4096 (126 for variational)  histories and $L=5\times10^5$.
	}
	\label{tab:pharmaco_T10_results}
	\begin{tabular}{lrrr}
		Method               &  Lower bound &  Upper bound & Deployment time   \\
		\toprule
		Random            & 3.344 $\pm$ 0.034 &   3.345 $\pm$ 0.034 &  N/A \\
		Equal interval    & 3.422 $\pm$ 0.026 &   3.423 $\pm$ 0.026 &N/A \\
		MINEBED           & 3.849 $\pm$ 0.034     &   3.849 $\pm$ 0.034 & N/A \\
		SG-BOED          &   3.824     $\pm$   0.034      & 3.824    $\pm$  0.034 &N/A  \\
		Variational & 3.624 $\pm$ 0.099    & 3.624 $\pm$ 0.099 & 1055.2\phantom{00} $\pm$ 8\% \\ 
	    \textbf{IDAD} (NWJ) &  \textbf{ 4.034} $\pm$ \textbf{0.025}  & \textbf{4.034} $\pm$ \textbf{0.025}  & 0.007 $\pm$  6\%  \\
		\textbf{IDAD} (InfoNCE) & \textbf{4.045} $\pm$ \textbf{0.026}  & \textbf{4.045} $\pm$ \textbf{0.026}  & 0.007 $\pm$  5\%  \\
	    \midrule
	    DAD     & 4.116 $\pm$ 0.024  & 4.117  $\pm$ 0.024 & 0.007 $\pm$ 8\%  \\
		\bottomrule
	\end{tabular}
\end{table}

{\bf Training stability.} To assess the robustness of the results and the stability of the training process, we trained 5 additional iDAD networks with each of the two bounds, using different seeds but the same hyperparameters we used to produce the results of the pharmacokinetic experiment (Figure~\ref{fig:pk_trajectories_designs_eval}\textcolor{red}{c)} and corresponding Table~\ref{tab:pharmaco_T5_results}). We report upper and lower bounds on the mutual information along with their mean and standard error in the table below.

\begin{center}
\begin{tabular}{llrrrrrrr}
    Method & Bound & Run 1 &  Run 2 & Run 3 & Run 4 & Run 5 & \textbf{Mean} & \textbf{SE}\\
    \midrule
    iDAD, InfoNCE   & Lower & 3.209 & 3.165 & 3.198 & 3.221 & 3.128  &  \textbf{3.185}  & \textbf{0.019} \\
    iDAD, InfoNCE   & Upper & 3.210 & 3.166 & 3.201 & 3.223 & 3.130 &  \textbf{3.186}  & \textbf{0.019}\\
    \midrule
    iDAD, NWJ & Lower &  3.034 & 3.049 & 2.608& 3.149& 3.082 & \textbf{3.034} & \textbf{0.107}  \\
    iDAD, NWJ & Upper &  3.034 & 3.049 & 2.609 &3.150 & 3.083    &   \textbf{3.034} & \textbf{0.107}
\end{tabular}
\end{center}

We repeat the same procedure for the static baselines. The results reported in the table below demonstrate the training stability of these baselines as well.  
\begin{center}
\begin{tabular}{llrrrrrrr}
    Method & Bound & Run 1 &  Run 2 & Run 3 & Run 4 & Run 5 & \textbf{Mean} & \textbf{SE}\\
    \midrule
    SG-BOED   & Lower & 2.932 & 2.452 & 2.448 & 2.991 & 2.962 & \textbf{2.757} & \textbf{0.140}  \\
    SG-BOED   & Upper & 2.932 & 2.453 & 2.449 & 2.992 & 2.962 &  \textbf{2.757}  & \textbf{0.140} \\
    \midrule
    MINEBED & Lower & 2.912  &  2.213 & 3.014 & 2.092 & 2.941 & \textbf{2.634}  & \textbf{0.221}  \\
    MINEBED & Upper & 2.914  &  2.213 & 3.015 & 2.092 & 2.942 &  \textbf{2.635} & \textbf{0.222}
\end{tabular}
\end{center}

\vspace{-0.2cm}
\subsection{SIR Model}
\label{sec:appendix_sir}

Generally speaking, the SIR model advocates that, within a fixed population of size $N$, susceptible individuals $S(\tau)$, where $\tau$ is time, can become infected and move to an infected state $I(\tau)$. The infected individuals can then recover from the disease and move to the recovered state $R(\tau)$. The dynamics of these events are governed by the infection rate $\beta$ and recovery rate $\gamma$, which define the particular disease in question. In the context of BOED, the aim is generally to estimate these two model parameters by observing state populations at particular measurement times $\tau$, which are the experimental design variables. The SIR model has been studied extensively in the context of BOED, e.g.~in~\cite{kleinegesse2018efficient, kleinegesse2020sequential, kleinegesse2021gradientbased, dehideniya2018}.

Stochastic versions of the SIR model are usually formulated via continuous-time Markov chains (CTMC), which can be simulated from via the Gillespie algorithm~\cite{allen2017}, yielding discrete state populations. However, iDAD requires us to differentiate through the sampling path of the state populations to the experimental designs, which is impossible if the simulated data is discrete as gradients are undefined. Thus, we here implement an alternative formulation of the stochastic SIR model that is based on stochastic differential equations (SDEs), as studied in~\cite{kleinegesse2021gradientbased}, which yields continuous state populations that can be differentiated.

Following~\cite{kleinegesse2021gradientbased}, let us first define a state population vector $\mathbf{X}(\tau) = (S(\tau), I(\tau))^\top$, where we can safely ignore the population of recovered $R(\tau)$ for modelling purposes because we assume that the total population stays fixed. The system of Itô SDEs that defines the stochastic SIR model is given by
\begin{equation} \label{eq:sde}
\mathrm{d}\mathbf{X}(\tau) = \mathbf{f}(\mathbf{X}(\tau)) \mathrm{d}\tau + \mathbf{G}(\mathbf{X}(\tau))\mathrm{d}\mathbf{W}(\tau),
\end{equation}
where $\mathbf{f}$ is a drift vector, $\mathbf{G}$ is a diffusion matrix and $\mathbf{W}(\tau)$ is a vector of independent Wiener processes. \cite{kleinegesse2021gradientbased} showed that the drift vector and diffusion matrix are given by
\begin{equation}
\mathbf{f}(\mathbf{X}(\tau)) 
= \begin{pmatrix}
        -\beta \frac{S(\tau)I(\tau)}{N} \\[1em]
        \beta \frac{S(\tau)I(\tau)}{N} - \gamma I(\tau) \\[1em]
    \end{pmatrix} \quad \text{and} \quad
\mathbf{G}(\mathbf{X}(t)) 
=  \begin{pmatrix}
        - \sqrt{\beta \frac{S(\tau)I(\tau)}{N}} & 0 \\[1em]
        \sqrt{\beta \frac{S(\tau)I(\tau)}{N}}   & - \sqrt{\gamma I(\tau)} \\[1em]
    \end{pmatrix}.
\end{equation}
Given the system of Itô SDEs in~\eqref{eq:sde}, as well as the above drift vector and diffusion matrix, we can then simulate state populations $\mathbf{X}(\tau)$ by solving the SDE using finite-difference methods, such as e.g.~the Euler-Maruyama method. See~\cite{kleinegesse2021gradientbased} for more information on the SDE-based SIR model, including derivations of the drift vector and diffusion matrix.

Importantly, we note that~\cite{kleinegesse2021gradientbased} further used the solutions of~\eqref{eq:sde} as an input to a Poisson observation model, which increases the noise in simulated data. We here opt to simply use the solutions of~\eqref{eq:sde} as data and do not consider an additional Poisson observational model.

\subsubsection{Training details}

As previously mentioned, the design variable for this model is the measurement time $\tau \in [0, 100]$. When solving the SDE with the Euler-Maruyama method, we discretize the time domain with a resolution of $\Delta \tau = 10^{-2}$. We here only use the number of infected $I(\tau)$ as the observed data, as others might be difficult to measure in reality. The total population is fixed at $N=500$ and the initial conditions are $\mathbf{X}(\tau=0) = (0, 2)^\top$. The model parameters $\beta$ and $\gamma$ have log-normal priors, i.e.~$p(\beta) = \text{Lognorm}(0.50, 0.50^2)$ and $p(\gamma) = \text{Lognorm}(0.10, 0.50^2)$. Importantly, because solving SDEs is expensive, we pre-simulate our data on a time grid, store it in memory and then access the relevant data during training.

We present the network architectures and hyper-parameters corresponding to the $T=5$ iDAD results shown in Table~\ref{tab:sir_results} of the main text. For the encoder of the design-outcome pairs we used:
\begin{center}
\begin{tabular}{llrrr}
    Layer & Description & iDAD, InfoNCE & iDAD, NWJ & Activation \\
    \midrule
    Input & $\xi, y$ &  2 & 2 & - \\
    H1 & Fully connected  & 8 & 8 & ReLU \\
    H2 & Fully connected  & 64 & 64 & ReLU \\
    H3 & Fully connected  & 512 & 512 & ReLU \\
    Output & Fully connected & 32 & 32 & - \\
\end{tabular}
\end{center}
The resulting representations, \{$R(h_t)\}_{t=1}^{T-1}$, are stacked into a matrix (as new design--outcome pairs are obtained) and fed into an emitter network, which contains an LSTM cell with two hidden layers. We only keep the last hidden state of the LSTM's output and pass it through a final FC layer: 
\begin{center}
\begin{tabular}{llrrr}
    Layer & Description & iDAD, InfoNCE & iDAD, NWJ & Activation \\
    \midrule
    Input & $\{R(h_t)\}_{t=1}^{T-1}$ & 32 $\times$ t & 32 $\times$ t& - \\
    H1 \& H2 & LSTM & 32 & 32 & - \\
    H3 & Fully connected & 16 &  16 & ReLU \\
    Output & Fully connected & 1 & 1 & - \\
\end{tabular}
\end{center}

The architecture of the critic network $\criticnet$ used in Table~\ref{tab:sir_results} and all its hyper-parameters are in the tables that follow. First, the encoder network of the latent variables is:
\begin{center}
\begin{tabular}{llrrr}
    Layer & Description & iDAD, InfoNCE & iDAD, NWJ & Activation \\
    \midrule
    Input & $\theta$ &  2 & 2 & - \\
    H1 & Fully connected  & 8 & 8 & ReLU \\
    H2 & Fully connected  & 64 & 64 & ReLU \\
    H3 & Fully connected  & 512 & 512 & ReLU \\
    Output & Fully connected & 32 & 32 & - \\
\end{tabular}
\end{center}

For the design-outcome pairs encoder we use the same architecture as in the design network, namely:
\begin{center}
\begin{tabular}{llrrr}
    Layer & Description & iDAD, InfoNCE & iDAD, NWJ & Activation \\
    \midrule
    Input & $\xi, y$ &  2 & 2 & - \\
    H1 & Fully connected  & 8 & 8 & ReLU \\
    H2 & Fully connected  & 64 & 64 & ReLU \\
    H3 & Fully connected  & 512 & 512 & ReLU \\
    Output & Fully connected & 32 & 32 & - \\
\end{tabular}
\end{center}

The outputs of the encoder, $\{R(h_t)\}_t$, are stacked and fed into an LSTM cell with two hidden layers. We only keep the last hidden state of the LSTM's output and pass it through a FC layer:
\begin{center}
\begin{tabular}{llrrr}
    Layer & Description & iDAD, InfoNCE & iDAD, NWJ & Activation \\
    \midrule
    Input & $\{R(h_t)\}_{t=1}^{T-1}$ & 32 $\times$ t & 32 $\times$ t& - \\
    H1 \& H2 & LSTM & 32 & 32 & - \\
    H3 & Fully connected & 16 &  16 & ReLU \\
    Output & Fully connected & 32 & 32 & - \\
\end{tabular}
\end{center}

The optimization was performed with Adam \citep{kingma2014adam} with learning rate annealing with the following hyper-parameters:
\begin{center}
{
\begin{tabular}{lrr}
    Parameter & iDAD InfoNCE & iDAD, NWJ \\
    \midrule
    Batch size & 512 & 512 \\
    Number of contrastive/negative samples & 511  & 511 \\
    Number of gradient steps & 100000 & 100000 \\ 
    Initial learning rate (LR) & 0.0005 & 0.0005 \\
    LR annealing factor & 0.96 & 0.96 \\
    LR annealing frequency  & 1000 & 1000
\end{tabular}
}
\end{center}

\subsubsection{Further results}

{\bf Different number of experiments $T$.}
In Table~\ref{tab:sir_saturation_results} we show lower bound estimates when applying iDAD with the InfoNCE lower bound to the SDE-based SIR model for different number of measurements $T$. The design network and critic architectures are the same as for $T=5$. Table~\ref{tab:sir_saturation_results} shows that more measurements yield higher expected information gains, as one might intuitively expect. Furthermore, the increase in expected information gain saturates with increasing $T$, which is why we presented the results for $T=5$ in the main text. 
The biggest increase, however, occurs from $T=1$ to $T=2$. This is intuitive, because the SIR model has two model parameters that we wish to estimate but we only gather one data point with one measurement. 
Hence, in order to accurately estimate both of these parameters, we would need at least $2$ measurements, which is reflected in Table~\ref{tab:sir_saturation_results}. We note that all of these numbers, with the exception of $T=1$, are larger than those found by~\cite{kleinegesse2021gradientbased}. This increase in expected information gain may be explained by the fact that~\cite{kleinegesse2021gradientbased} use an additional Poisson observation model, which means that the resulting data are inherently noisier and less informative.

\begin{table}[!h]
\centering
	\caption{InfoNCE lower bound estimates ($\pm$ s.e.) when applying iDAD to the SDE-based SIR model for different number of measurements $T$.}
	\label{tab:sir_saturation_results}
	\begin{tabular}{rll}
		$T$  &  iDAD, InfoNCE &  iDAD, NWJ \\
		\toprule
		1  & 1.396 $\pm$ 0.018 & 1.417 $\pm$ 0.001 \\
		2  & 2.714 $\pm$  0.019 & 2.699 $\pm$ 0.001 \\
		3  &  3.554 $\pm$ 0.021 & 3.515 $\pm$ 0.001 \\
		4  & 3.600 $\pm$ 0.018 & 3.749 $\pm$  0.001 \\
		5  & 3.915 $\pm$ 0.020 & 3.869 $\pm$ 0.001 \\
		7  & 4.027 $\pm$ 0.019 & 3.911 $\pm$ 0.001 \\
		10 & 4.100 $\pm$ 0.020 & 4.019 $\pm$ 0.001 \\       
		\bottomrule
	\end{tabular}
\end{table}

\begin{figure}[t]
  \centering
  \includegraphics[width=0.96\textwidth]{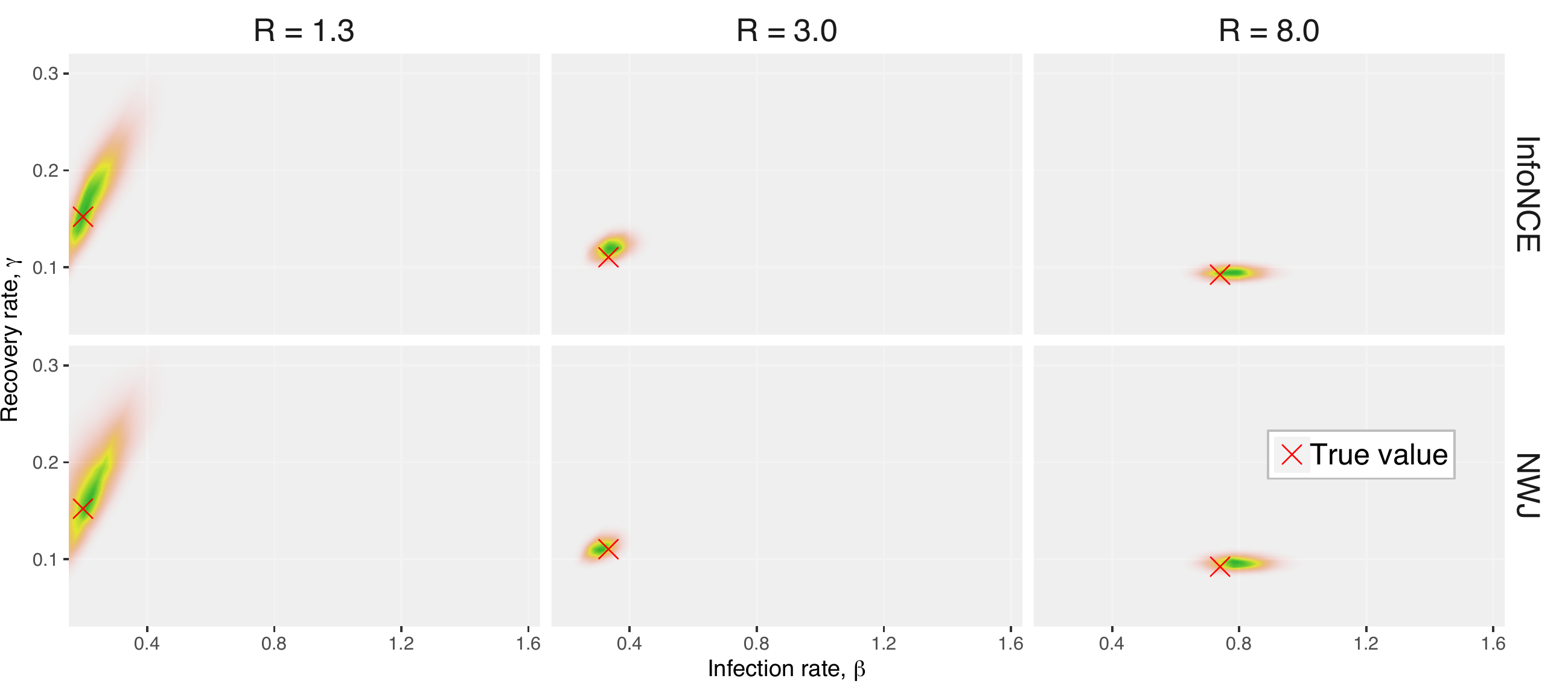}
  \caption{ Approximate posteriors for the SIR model.}
  \label{fig:sir_posteriors_multiple}
  \vspace{-0.3cm}
\end{figure}

{\bf Training stability.} To assess the robustness of the results and the stability of the training process, we trained 5 additional iDAD networks with each of the two bounds, using different seeds but the same hyperparameters we used to produce the results of Table~\ref{tab:sir_results} in the main text. We report upper and lower bounds on the mutual information along with their mean and standard error in the table below.
\begin{center}
	\small
\begin{tabular}{llrrrrrrr}
    Method & Bound & Run 1 &  Run 2 & Run 3 & Run 4 & Run 5 & \textbf{Mean} & \textbf{SE}\\
    \midrule
    iDAD, InfoNCE   & Lower & 3.900	 &  3.919	& 	3.919		& 3.901	 & 3.887  & \textbf{3.906} & \textbf{0.007}  \\
    iDAD, NWJ & Lower & 3.872	  &  3.838	& 3.854  & 3.883	 & 3.848 & \textbf{3.859} & \textbf{0.009}  \\
\end{tabular}
\end{center}
We repeat the same procedure for the static baselines. The results reported in the table below demonstrate the training stability of these baselines as well.  
\begin{center}
	\small
\begin{tabular}{llrrrrrrr}
    Method & Bound & Run 1 &  Run 2 & Run 3 & Run 4 & Run 5 & \textbf{Mean} & \textbf{SE}\\
    \midrule
    SG-BOED   & Lower &  3.713		 &  3.765 	& 	3.767	& 3.764	  &  3.739	& \textbf{3.749} & \textbf{0.012 }  \\
    MINEBED & Lower &  3.373  &  3.438 	& 	3.376	& 3.379  & 3.420  & \textbf{ 3.397} & \textbf{0.015 }  \\
\end{tabular}
\end{center}

\end{document}